\newtheorem{lemma}{Lemma}
\newtheorem{prop}{Proposition}
\newtheorem{definition}{Definition}
\newtheorem{theorem}{Theorem}
\theoremstyle{remark}
\newcommand{\R}{\mathbb{R}}
\newcommand{\e}{\begin{equation}}
\newcommand{\ee}{\end{equation}}
\newcommand{\en}{\begin{equation*}}
\newcommand{\een}{\end{equation*}}
\newcommand{\eqn}{\begin{eqnarray}}
\newcommand{\eeqn}{\end{eqnarray}}
\newcommand{\bmat}{\begin{bmatrix}}
\newcommand{\emat}{\end{bmatrix}}
\DeclareMathAlphabet\mathbfcal{OMS}{cmsy}{b}{n}
\newcommand{\mO}{\mtx{O}}
\newlength{\imgwidth}
\newcommand{\twoCol}[2]{\ifthenelse{\boolean{twoColVersion}} {#1} {#2} }
\newtheorem{proposition}{\bf{Proposition}}
\long\def\comment#1{}
\newcommand{\xmath}[1] {\ensuremath{#1}\xspace}
\newcommand{\blmath}[1] {\xmath{\bm{#1}}}
\newcommand{\bD}{\boldsymbol{D}}
\newcommand{\bI}{\boldsymbol{I}}
\newcommand{\bP}{\boldsymbol{P}}
\newcommand{\bQ}{\boldsymbol{Q}}
\newcommand{\bU}{\boldsymbol{U}}
\newcommand{\bV}{\boldsymbol{V}}
\newcommand{\bX}{\boldsymbol{X}}
\newcommand{\bZ}{\boldsymbol{Z}}
\def\b{{\blmath b}}
\long\def\red#1{\bgroup\color{red}#1\egroup}
\definecolor{mich-blue}{HTML}{0027CC}
\definecolor{mich-blue-high}{HTML}{0027CC}
\definecolor{red-high}{HTML}{CA2020}
\definecolor{green-high}{HTML}{20A520}
\definecolor{mich-maize}{HTML}{FFCB05}
\definecolor{law-stone}{HTML}{655A52}
\definecolor{burton-beige}{HTML}{9B9A9D}
\definecolor{arch-ivy}{HTML}{7E732F}
 \colorlet{color1}{gray!15}
\newcommand{\beqa}{\begin{eqnarray}}
\newcommand{\eeqa}{\end{eqnarray}}
\newcommand{\beqas}{\begin{eqnarray*}}
\newcommand{\eeqas}{\end{eqnarray*}}
\newcommand{\ea}{\end{array}}
\newcommand{\ei}{\end{itemize}}
\newcommand{\RN}[1]{%
  \textup{\uppercase\expandafter{\romannumeral#1}}%
}
\newcommand{\email}[1]{\protect\href{mailto:#1}{#1}}
\newcounter{spb}
\def\b0{\bm{0}}
\def\b1{\bm{1}}
\def\bD{\bm{D}}
\def\bI{\bm{I}}
\def\bP{\bm{P}}
\def\bQ{\bm{Q}}
\def\bU{\bm{U}}
\def\bV{\bm{V}}
\def\bX{\bm{X}}
\def\bZ{\bm{Z}}
\def\mO{\mathcal{O}}
\def\R{\mathbb{R}}
\def\S{\mathbb{S}}
\renewcommand{\@biblabel}[1]{[#1]\hfill}
\begin{document}
\title{A Global Geometric Analysis of Maximal Coding Rate Reduction\thanks{This work has been accepted for publication in the Proceedings of the 41st International Conference on Machine Learning (ICML 2024). Correspondence to: Peng Wang (\email{peng8wang@gmail.com})}}  

\author[1]{Peng Wang}
\author[2]{Huikang Liu}
\author[3]{Druv Pai}
\author[3]{Yaodong Yu}
\author[4]{\\ Zhihui Zhu}
\author[1]{Qing Qu}
\author[3,5]{Yi Ma}

\affil[1]{\small Department of Electrical Engineering and Computer Science, University of Michigan, Ann Arbor}
\affil[2]{\small Antai College of Economics and Management, Shanghai Jiao Tong University, Shanghai}
\affil[3]{\small Department of Electrical Engineering and Computer Science, University of California, Berkeley}
\affil[4]{\small Department of Computer Science and Engineering, Ohio State University, Columbus}
\affil[5]{\small Institute of Data Science, University of Hong Kong, Hong Kong}

\date{\today}
\maketitle

\begin{abstract}
The maximal coding rate reduction (MCR$^2$) objective for learning structured and compact deep representations is drawing increasing attention, especially after its recent usage in the derivation of fully explainable and highly effective deep network architectures. However, it lacks a complete theoretical justification: only the properties of its global optima are known, and its global landscape has not been studied. In this work, we give a complete characterization of the properties of all its local and global optima, as well as other types of critical points. Specifically, we show that each (local or global) maximizer of the MCR$^2$ problem corresponds to a low-dimensional, discriminative, and diverse representation, and furthermore, each critical point of the objective is either a local maximizer or a strict saddle point. Such a favorable landscape makes MCR$^2$ a natural choice of objective for learning diverse and discriminative representations via first-order optimization methods. To validate our theoretical findings, we conduct extensive experiments on both synthetic and real data sets. 

\end{abstract}

\textbf{Key words}: maximal coding rate reduction, representation learning, global optimality, optimization landscape, strict saddle point

\tableofcontents

\section{Introduction}\label{sec:intro}
\subsection{Background and Motivation}

In the past decade, deep learning has exhibited remarkable empirical success across a wide range of engineering and scientific applications \cite{lecun2015deep}, such as computer vision \cite{he2016deep,simonyan2014very}, natural language processing \cite{sutskever2014sequence,vaswani2017attention}, and health care \cite{esteva2019guide}, to name a few. As argued by \citet{bengio2013representation,ma2022principles}, one major factor contributing to the success of deep learning is the ability of deep networks to perform powerful nonlinear feature learning by converting the data distribution to a {\em compact} and {\em structured} representation. This representation greatly facilitates various downstream tasks, including classification \cite{dosovitskiy2020image}, segmentation \cite{kirillov2023segment}, and generation \cite{saharia2022photorealistic}.  

Based on the theory of data compression and optimal coding \citep{ma2007segmentation}, \citet{chan2022redunet,yu2020learning} proposed a principled and unified framework for deep learning to learn a compact and structured representation. Specifically, they proposed to maximize the difference between the {\em coding rate} of all features and the sum of coding rates of features in each class, which is referred to as {\em maximal coding rate reduction} (MCR$^2$). This problem is presented in Problem \eqref{eq:MCR1} and visualized in \Cref{fig:MCR}(a). Here, the coding rate measures the ``compactness" of the features, which is interpreted as the volume of a particular set spanned by the learned features: a lower coding rate implies a more compact feature set\footnote{Please refer to \citet[Section 2.1]{chan2022redunet} for more details on measuring compactness of feature sets via coding rates.}. Consequently, the MCR$^2$ objective aims to maximize the volume of the set of all features while minimizing the volumes of the sets of features from each class. Motivated by the structural similarities between deep networks and unrolled optimization schemes for sparse coding \cite{gregor2010learning,monga2021algorithm}, \citet{chan2022redunet} constructed a new deep network based on an iterative gradient descent scheme to maximize the MCR$^2$ objective.\footnote{When performing maximization, we actually mean that we use gradient \textit{ascent}. However, we write gradient descent to maintain consistency with existing optimization literature.} Notably, each component of this deep network has precise optimization and geometric interpretations. Moreover, it has achieved strong empirical performance on various vision and language tasks \cite{chu2023image,yu2023sparse}.  

\begin{figure*}[t]
\begin{center}
	\begin{subfigure}{0.52\textwidth}
		\centering    	
    	\includegraphics[width=1\textwidth]{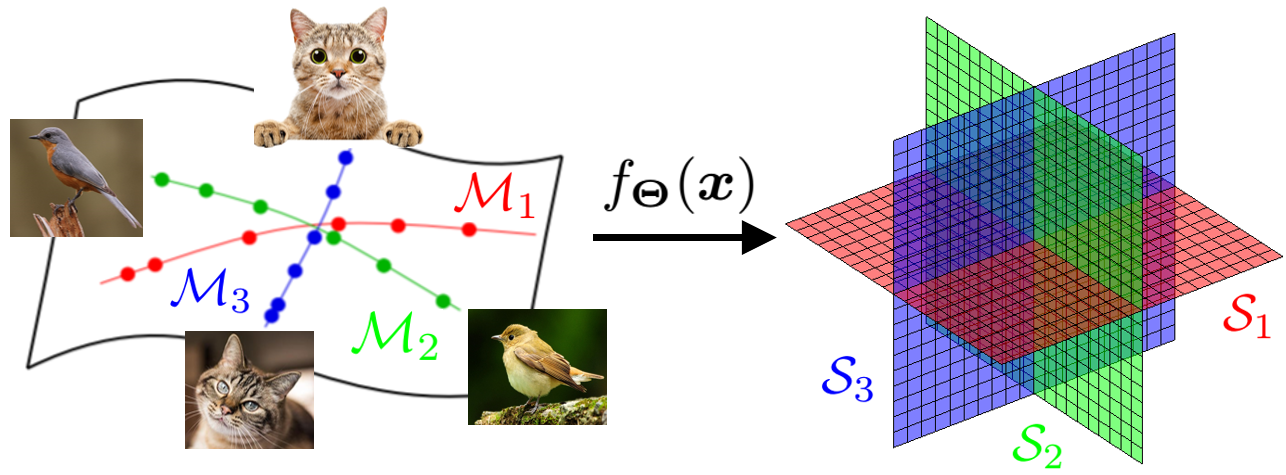} \vspace{0.01in}
    \caption{\bf Visualization of feature learning via MCR$^2$.} 
    \end{subfigure} \hspace{0.1in}
    \begin{subfigure}{0.44\textwidth}
    	\centering
    	\includegraphics[width = 0.7\linewidth]{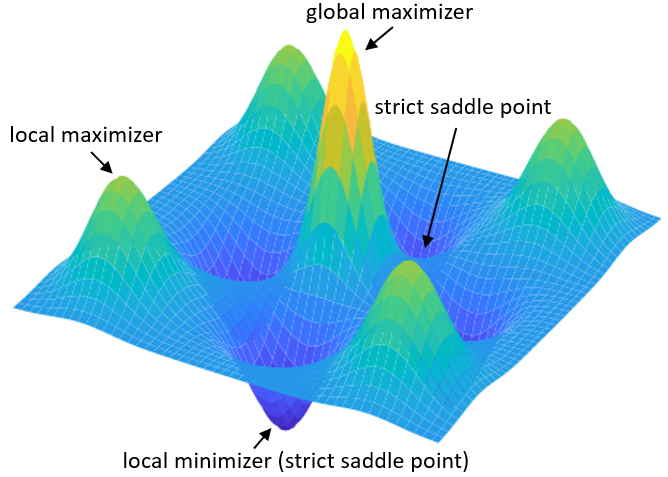} \vspace{-0.05in}
    \caption{\bf Ideal landscape of an MCR$^2$ problem.} 
    \end{subfigure}\vspace{-0.1in} 
    \caption{\textbf{An illustration of the properties of MCR$^2$}. (a) The high-dimensional data $\{\bm x_i\} \subseteq \mathbb{R}^n$ lies on a union of low-dimensional submanifolds. The objective of MCR$^2$ is to learn a feature mapping $f_{\bm \Theta}(\bm x) \in \mathbb{R}^d$ such that $\bm z_i = f_{\bm \Theta}(\bm x_i)$ for all $i$ are low-dimensional, discriminative, and diverse. (b) According to \Cref{thm:1} and \Cref{thm:2}, the regularized MCR$^2$ problem has a benign optimization landscape: each critical point is either a local maximizer or a strict saddle point. Furthermore, each local maximizer, just like the global maximizer,  corresponds to a feature representation that consists of a family of orthogonal subspaces, as illustrated in the middle.}
    \vspace{-0.2in} 
    \label{fig:MCR}
\end{center}
\end{figure*}

Although the MCR$^2$-based approach to deep learning is conceptually ``white-box" and has achieved remarkable empirical performance, its theoretical foundations have been relatively under-explored. In fact, the effective feature learning mechanism and “white-box” network architecture design based on MCR$^2$ are direct consequences of these foundations, and understanding them will pave the way to improving model interpretability and training efficiency of deep networks. Nevertheless, a comprehensive theoretical understanding of the MCR$^2$ problem remains lacking. In this work, we take a step towards filling this gap by studying its optimization properties. Notably, analyzing these properties, including local optimality and global landscape, of the MCR$^2$ objective is extremely challenging. To be precise, its objective function (see Problem \eqref{eq:MCR1}) is highly non-concave\footnote{We are maximizing the MCR$^2$ objective. Maximizing a concave function is equivalent to minimizing a convex function.} and complicated, as it involves quadratic functions and the difference between log-determinant functions. To the best of our knowledge, characterizing the local optimality and global optimization landscape of the MCR$^2$ problem remains an open question. 

\subsection{Our Contributions}\label{subsec:contri}

In this work, we study the optimization foundations of the MCR$^2$-based approach to deep learning. Towards this goal, we characterize the local and global optimality of the regularized MCR$^2$ problem and analyze its global optimization landscape  (see Problem \eqref{eq:MCR}). Our contributions can be highlighted as follows. 

\vspace{-0.1in}
\paragraph{Characterizing the local and global optimality.} For the regularized MCR$^2$ problem, we derive the closed-form expressions for its local and global optima for the first time. Our characterization shows that each local maximizer of the regularized MCR$^2$ problem is within-class compressible and between-class discriminative in the sense that features from the same class belong to a low-dimensional subspace, while features from different classes belong to different orthogonal subspaces. Besides these favorable properties, each global maximizer corresponds to a maximally diverse representation, which attains the highest possible dimension in the space.  

\vspace{-0.1in}
\paragraph{Studying the global optimization landscape.} Next, we show that the regularized MCR$^2$ function possesses a benign global optimization landscape, despite its complicated structures. More precisely, each critical point is either a local maximizer or strict saddle point of the regularized MCR$^2$ problem; see \Cref{fig:MCR}(b). Consequently, any gradient-based optimization, such as (stochastic) gradient descent, with random initialization can escape saddle points and at least converge to a local maximizer efficiently. 

Finally, we conduct extensive numerical experiments on synthetic data sets to validate our theoretical results. Moreover, we use the regularized MCR$^2$ objective to train deep networks on real data sets. These experimental results constitute an application of the rigorously derived MCR$^2$ theory to more realistic and complex deep learning problems.

Our results not only establish optimization foundations for the MCR$^2$ problem but also yield some important implications for the MCR$^2$-based approach to deep learning. Namely, our theoretical characterizations of local and global optimality offer a compelling explanation for the empirical observations that both deep networks constructed via gradient descent applied to the MCR$^2$ objective and over-parameterized deep networks trained by optimizing the MCR$^2$ objective learn low-dimensional, discriminative, and diverse representations. These results align with the motivations of \citet{chan2022redunet,yu2020learning} for employing the MCR$^2$ principle for deep learning, and elucidate the outstanding performance of MCR$^2$-based neural networks across a wide range of vision and language tasks \cite{chu2023image,yu2023emergence}. Moreover, our results underscore the potential of MCR$^2$-based approaches to serve as a cornerstone for future advancements in deep learning, offering a principled approach to pursuing structured and compact representations in practical applications.

\subsection{Related Work}\label{subsec:related} 

\paragraph{Low-dimensional structures in deep representation learning.} In the literature, it has long been believed that the role of deep networks is to learn certain (nonlinear) low-dimensional and informative representations of the data \cite{hinton2006reducing,ma2022principles}. For example, \citet{papyan2020prevalence} showed that the features learned by cross-entropy (CE) loss exhibit a neural collapse phenomenon during the terminal phase of training, where the features from the same class are mapped to a vector while the features from different classes are maximally linearly separable. \citet{ansuini2019intrinsic,recanatesi2019dimensionality} demonstrated that the dimension of the intermediate features first rapidly increases and then decreases from shallow to deep layers. \citet{masarczyk2023tunnel} concluded that the deep layers of neural networks progressively compress within-class features to learn low-dimensional features. Notably, \citet{wang2023understanding} proposed a theoretical framework to analyze hierarchical feature learning for learning low-dimensional representations. They showed that each layer of deep linear networks progressively compresses within-class features and discriminates between-class features in classification problems.   

\vspace{-0.1in} 
\paragraph{The MCR$^2$-based approach to deep learning.} The MCR$^2$-based approach to deep learning for seeking structured and compact representations was first proposed by \citet{yu2020learning}. Notably, they provided a global optimality analysis of the MCR$^2$ problem \eqref{eq:MCR1} with additional rank constraints on the feature matrix of each class. \citet{chan2022redunet} designed a new multi-layer deep network architecture, named ReduNet, based on an iterative gradient descent scheme for maximizing the MCR$^2$ objective. To learn self-consistent representations, \citet{dai2022ctrl} extended this approach to the closed-loop transcription (CTRL) framework, which is formulated as a max-min game to optimize a modified MCR$^2$ objective. This game was shown to have global equilibria corresponding to compact and structured representations \cite{pai2022pursuit}. Recently, \citet{yu2023white} showed that a transformer-like architecture named CRATE, which obtains strong empirical performance \cite{chu2023image,yu2023sparse,yu2023emergence}, can be naturally derived through an iterative optimization scheme for maximizing the sparse rate reduction objective, which is an adaptation to sequence data of the MCR$^2$ objective studied in this work.

\vspace{-0.1in}
\paragraph{Notation.} 
Given a matrix $\bm A \in \R^{m\times n}$, we use $\|\bm A\|$ to denote its spectral norm, $\|\bm{A}\|_{F}$ to denote its Frobenius norm, and $a_{ij}$ its $(i,j)$-th element. Given a vector $\bm a \in \R^d$, we use $\|\bm a\|$ to denote its $\ell_2$-norm, $a_i$ its $i$-th element, and $\mathrm{diag}(\bm a)$ the diagonal matrix with $\bm a$ on its diagonal. Given a positive integer $n$, we denote by $[n]$ the set $\{1,\ldots,n\}$. Given a set of integers $\{n_k\}_{k=1}^K$, let $n_{\max} = \max\{n_k: k \in [K]\}$. Let $\mathcal{O}^{m\times n} = \left\{\bm Z \in \R^{m\times n}: \bm Z^T\bm Z = \bm I_n \right\}$ denote the set of all $m\times n$ orthonormal matrices.

\section{Problem Setup}\label{sec:setup}

In this section, we first review the basic concepts of MCR$^2$ for deep representation learning in \Cref{subsec:MCR}, and then introduce our studied problem in \Cref{subsec:our}.  

\subsection{An Overview of MCR$^2$}\label{subsec:MCR}  

In deep representation learning, given data $\{\bm x_i\}_{i=1}^m \subseteq \R^n$ from multiple (say $K$) classes, the goal is to learn neural-network representations of these samples that facilitate downstream tasks. Recent empirical studies have shown that good features can be learned for tasks such as classification or autoencoding by using heuristics to promote either the contraction of samples in the same class \cite{rifai2011contractive} or the contrast of samples between different classes \cite{he2020momentum,oord2018representation} during the training of neural networks. Notably, \citet{chan2022redunet,yu2020learning} unified and formalized these practices and demonstrated that the MCR$^2$ objective is an effective objective to learn within-class compressible and between-class discriminative representations of the data.  

\vspace{-0.1in}
\paragraph{The formulation of MCR$^2$.} In this work, we mainly consider an MCR$^2$ objective for supervised learning problems. Specifically, let $\bm z_i = f_{\bm \Theta}(\bm x_i)$ for all $i \in [m]$ denote the features learned via the feature mapping $f_{\bm \Theta}(\cdot):\R^n \rightarrow \R^d$ parameterized by  $\bm \Theta$. 
For each $k \in [K]$, let $\bm \pi^k \in \{0,1\}^m$ be a label vector denoting membership of the samples in the $k$-th class, i.e., $\pi_i^k = 1$ if sample $i$ belongs to class $k$ and $\pi_i^k = 0$ otherwise for all $i \in [m]$, and $m_k := \sum_{i=1}^m \pi_i^k$ be the number of samples in the $k$-th class. 

For each $k \in [K]$, let $\bm{Z}_{k} \in \R^{d \times m_{k}}$ be the matrix whose columns are the features in the $k$-th class. Without loss of generality, we reorder the samples in a class-by-class manner, so that we can write the matrix of all features as
\begin{align}\label{eq:Z}
    \bm Z = \left[ \bm Z_1,\dots,\bm Z_K \right] \in \R^{d\times m}. 
\end{align}
On one hand, to make features between different classes discriminative or contrastive, one can maximize the lossy coding rate of all features in $\bm Z$, as argued in \cite{chan2022redunet,yu2020learning}, as follows:
\begin{align}\label{eq:R}
R(\bm Z) := \frac{1}{2}\log\det\left( \bm I + \frac{d}{m\epsilon^2}\bm Z\bm Z^T \right),  
\end{align}
where $\epsilon > 0$ is a prescribed quantization error.\footnote{Here, $R(\bm Z)$ is also known as the rate-distortion function in information theory \cite{cover1999elements}, which represents the average number of binary bits needed to encode the data $\bm Z$.}  On the other hand, to make features from the same class compressible or contractive, one can minimize the average lossy coding rate of features in the $k$-th class as follows: 
\begin{align}\label{eq:Rc1}
R_c\left(\bm Z; \bm \pi^k\right) 
& = \frac{m_k}{2m} \log\det\left( \bm I + \frac{d}{m_k\epsilon^2}\bm Z_k \bm Z_k^T \right).  
\end{align}
Consequently, a good representation tends to maximize the difference between the coding rate for the whole and that for each class as follows:   
\begin{align}\label{eq:MCR1}
    \max_{\bm Z \in \mathbb{R}^{d\times m}}\ R(\bm Z) - \sum_{k=1}^K R_c\left(\bm Z; \bm \pi^k \right)\quad \mathrm{s.t.}\ \|\bm Z_k\|_F^2 = m_k, \quad \forall k \in [K]. 
\end{align}
This is referred to as the principle of {\em maximal coding rate reduction} in \cite{chan2022redunet,yu2020learning}. It is worth mentioning that this principle can be extended to self-supervised and even unsupervised learning settings, where we learn the label vectors $\{\bm \pi^k\}_{k=1}^K$ during training. 

\subsection{The Regularized MCR$^2$ Problem}\label{subsec:our}
Due to the Frobenius norm constraints, it is a tremendously difficult task to analyze Problem \eqref{eq:MCR1} from an optimization-theoretic perspective, as all the analysis would occur on a product of spheres instead of on Euclidean space. Therefore, we consider the Lagrangian formulation of \eqref{eq:MCR1}. This can be viewed as a tight relaxation or even an equivalent problem of \eqref{eq:MCR1} whose optimal solutions agree under specific settings of the regularization parameter; see \Cref{prop:equi}. Specifically, the formulation we study, referred to henceforth as the \textit{regularized MCR$^2$ problem}, is as follows: 
\begin{align}\label{eq:MCR}
\max_{\bm Z}\ F(\bm Z) := R(\bm Z) - \sum_{k=1}^K  R_c(\bm Z; \bm \pi^k) - \frac{\lambda}{2}\|\bm Z\|_F^2,
\end{align}
where $\lambda > 0$ is the regularization parameter. Remark that our study on this problem applies meaningfully to at least two approaches to learning deep representations using the MCR$^2$ principle.

\vspace{-0.1in}
\paragraph{Applications of our formulation to deep representation learning via unrolled optimization.} The first approach, as argued by \citet{chan2022redunet,yu2023sparse}, is to construct a new deep network architecture, i.e., ReduNet \citep{chan2022redunet} or CRATE \citep{yu2023sparse}, based on an iterative gradient descent scheme to optimize the MCR$^2$-type objective. In this approach, each layer of the constructed network approximates a gradient descent step to optimize the MCR$^2$-type objective given the input representation. 
 The key takeaway is that these networks approximately implement gradient descent directly on the representations, so our analysis of the optimization properties of the MCR$^2$-type objective translates to explanations of the corresponding properties of the learned representations and architectures of these deep networks. In particular, our argument that the optima and optimization landscape of \eqref{eq:MCR} are favorable directly translates to a justification of the correctness of learned representations of the ReduNet and a validation of its architecture design. Moreover, this study enables principled improvement of deep network architectures constructed via unrolled optimization by leveraging more advanced optimization techniques better suited for problems with benign landscapes. This can improve model interpretability and efficiency. 

\vspace{-0.1in}
\paragraph{Applications of our formulation to deep representation learning with standard neural networks.}
In the second approach, one parameterizes the feature mapping  $f_{\bm{\Theta}}(\cdot)$ via standard deep neural networks such as a multi-layer perceptron or ResNet \citep{he2016deep}, and treats the MCR$^2$-type objective like other loss functions applied to outputs of a neural network, such as mean-squared error or cross-entropy loss. Studying Problem \eqref{eq:MCR} from this perspective would require us to optimize over $\bm{\Theta}$ instead of over \(\bm{Z}\). 
This new optimization problem would be extraordinarily difficult to analyze, because modern neural networks have nonlinear interactions across many layers, so the parameters $\bm{\Theta}$ would affect the final representation $\bm{Z}$ in a complex way. Fortunately, since modern neural networks are often highly over-parameterized, they can interpolate or approximate any continuous function in the feature space \cite{lu2017expressive}, so we may omit these constraints by assuming the unconstrained feature model, where $\bm z_{i}$ for all $i \in [N]$ are treated as free optimization variables \cite{mixon2020neural,yaras2022neural,zhu2021geometric,wang2022linear}. Consequently, studying the optimization properties of Problem \eqref{eq:MCR} provides valuable insights into the structures of learned representations and the efficiency of training deep networks using MCR$^2$-type objectives. 

\vspace{-0.1in}
\paragraph{Difficulties of analyzing Problem \eqref{eq:MCR}.} Although Problem \eqref{eq:MCR} has no constraints, one can observe that Problem \eqref{eq:MCR} is highly non-concave due to the quadratic form $\bm Z_k\bm Z_k^T$ and the difference of log-determinant functions. Notably, this problem shares similarities with low-rank matrix factorization problems. However, it employs the log-determinant function instead of the Frobenius norm, and the computation of the objective gradient involves matrix inverses. Therefore, from an optimization point of view, it is extremely challenging to analyze Problem \eqref{eq:MCR}. 

\section{Main Results}\label{sec:main}

In this section, we first characterize the local and global optimal solutions of Problem \eqref{eq:MCR} in \Cref{subsec:opti}, and then analyze the global landscape of the objective function in \Cref{subsec:land}.  

\subsection{Characterization of Local and Global Optimality}\label{subsec:opti} 

Although Problem \eqref{eq:MCR} is highly non-concave and involves matrix inverses in its gradient computation, we can still explicitly characterize its local and global optima as follows.  

\begin{theorem}[\bf Local and global optimality]\label{thm:1}
Suppose that the number of training samples in the $k$-th class is $m_k > 0$ for each $k \in [K]$. Given a coding precision $\epsilon > 0$, if the regularization parameter satisfies
\begin{align}\label{eq:lambda}
 \lambda \in \left(0, \frac{d(\sqrt{m/m_{\max}}-1)}{m(\sqrt{m/m_{\max}}+1)\epsilon^2} \right], 
\end{align} 
then the following statements hold: \\
(i) ({\bf Characterization of local maximizers}) $\bm Z = \left[\bm Z_1,\dots,\bm Z_K \right]$ is a local maximizer of Problem \eqref{eq:MCR} if and only if the $k$-th block admits the following decomposition
\begin{align}\label{eq:Zk opti}
\bm Z_k = \overline{\sigma}_{k} \bm U_k \bm V_k^T,
\end{align}
where (a) $r_k = \mathrm{rank}(\bm Z_k)$ satisfies $r_k \in [0,\min\{m_k,d\})$ and $\sum_{k=1}^K r_k \le \min\{m,d\}$, (b) $\bm U_k \in \mathcal{O}^{d \times r_k}$ satisfies $\bm U_k^T\bm U_l = \bm 0$ for all $l \neq k$, $\bm V_k \in \mathcal{O}^{m_k \times r_k}$, and (c) the singular value $\overline{\sigma}_{k}$ is given in \eqref{eq:sigma1} for each $k \in [K]$. \\
(ii) ({\bf Characterization of global maximizers}) $\bm Z = \left[\bm Z_1,\dots,\bm Z_K \right]$ is a global maximizer of Problem \eqref{eq:MCR} if and only if (a) it satisfies the above all conditions and $\sum_{k=1}^K r_k = \min\{m,d\}$, and (b) for all $k \neq l \in [K]$ satisfying $m_k < m_l$ and $r_l > 0$, we have $r_k = \min\{m_k,d\}$. 
\end{theorem}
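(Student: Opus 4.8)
The plan is to read everything off the first- and second-order optimality conditions of the unconstrained problem \eqref{eq:MCR} and then reduce to a one-dimensional problem in the singular values. \emph{Step 1 (first-order conditions and a commutation relation).} Setting $\bm{M} := \bm{I} + \tfrac{d}{m\epsilon^2}\bm{Z}\bm{Z}^\top$, $\bm{M}_k := \bm{I} + \tfrac{d}{m_k\epsilon^2}\bm{Z}_k\bm{Z}_k^\top$, and $\alpha := \lambda m\epsilon^2/d$, the stationarity condition $\nabla F(\bm{Z})=\bm{0}$ reads, block by block, $\bm{M}^{-1}\bm{Z}_k = (\bm{M}_k^{-1}+\alpha\bm{I})\bm{Z}_k$. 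Writing the compact SVD $\bm{Z}_k=\bm{U}_k\bm{\Sigma}_k\bm{V}_k^\top$ with $\bm{U}_k\in\mathcal{O}^{d\times r_k}$ and cancelling the full-row-rank factor $\bm{\Sigma}_k\bm{V}_k^\top$ on the right yields $\bm{M}^{-1}\bm{U}_k=\bm{U}_k\bm{D}_k$ with the diagonal matrix $\bm{D}_k:=(\bm{I}+\tfrac{d}{m_k\epsilon^2}\bm{\Sigma}_k^2)^{-1}+\alpha\bm{I}\succ\bm{0}$. Hence the left singular vectors of each nonzero block are eigenvectors of $\bm{Z}\bm{Z}^\top$, and pairing this identity for two blocks gives $(\bm{U}_l^\top\bm{U}_k)\bm{D}_k=\bm{D}_l(\bm{U}_l^\top\bm{U}_k)$, so $\bm{U}_l^\top\bm{U}_k$ is supported only on index pairs whose corresponding diagonal entries of $\bm{D}_k$ and $\bm{D}_l$ coincide — that is, on shared eigenvalues of $\bm{Z}\bm{Z}^\top$.

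\emph{Step 2 (orthogonality of the blocks at a local maximizer).} This is the crux, and it cannot follow from Step 1 alone, since there do exist critical points in which two blocks overlap inside a common eigenspace of $\bm{Z}\bm{Z}^\top$. I would rule these out with the second-order necessary condition: given such an overlap, I would build an explicit perturbation of $(\bm{Z}_k,\bm{Z}_l)$ supported in the shared eigenspace — a ``rotation'' that transfers energy from one block to the other — along which $F$ strictly increases to second order, contradicting local maximality. The conclusion is $\bm{U}_k^\top\bm{U}_l=\bm{0}$ for $k\neq l$, which is condition (b) of \eqref{eq:Zk opti}. Once orthogonality holds, $\bm{Z}\bm{Z}^\top=\sum_k\bm{Z}_k\bm{Z}_k^\top$ has eigenvalues equal to the union of the squared singular values of the blocks, so $F$ decouples exactly:
$$F(\bm{Z})=\sum_{k=1}^K\sum_{j=1}^{r_k}g_k(\sigma_{k,j}^2),\qquad g_k(t):=\tfrac12\log\!\Big(1+\tfrac{d}{m\epsilon^2}t\Big)-\tfrac{m_k}{2m}\log\!\Big(1+\tfrac{d}{m_k\epsilon^2}t\Big)-\tfrac{\lambda}{2}t .$$
A Hessian computation at a candidate of the stated form confirms it is a local maximizer and shows the only flat directions are the genuine invariances of $F$ (rotations of $\bm{V}_k$, and of $\bm{U}_k$ within an eigenspace), which gives the ``if'' direction of part (i).

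\emph{Step 3 (the scalar problem and the bound on $\lambda$).} A direct computation gives $g_k'(t)=\tfrac12(\tilde h_k(t)-\lambda)$ with $\tilde h_k(t)=\tfrac{d^2t(m-m_k)}{m(m\epsilon^2+dt)(m_k\epsilon^2+dt)}$, and $\tilde h_k$ is single-peaked on $[0,\infty)$ with maximum $\tfrac{d(\sqrt{m/m_k}-1)}{m\epsilon^2(\sqrt{m/m_k}+1)}$ at $t_k^\star=\epsilon^2\sqrt{mm_k}/d$. Since $x\mapsto(\sqrt{x}-1)/(\sqrt{x}+1)$ is increasing and $m_k\le m_{\max}$, the bound \eqref{eq:lambda} is exactly the statement that $\lambda$ does not exceed this maximum for \emph{any} class; under it, $g_k$ decreases, then increases, then decreases, so its positive critical points are a local minimizer (the smaller root of $g_k'=0$) and a local maximizer $t_k^\star$ (the larger root). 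The single-block second-order condition from Step 2 forces every nonzero $\sigma_{k,j}^2$ to sit at this larger root, so all nonzero singular values of $\bm{Z}_k$ equal a common $\overline{\sigma}_k=\sqrt{t_k^\star}$, which is \eqref{eq:Zk opti}(c); the bound $\sum_k r_k\le\min\{m,d\}$ is the ambient-dimension constraint together with orthogonality, and the bound $r_k<\min\{m_k,d\}$ is obtained by examining the boundary cases. Together with Step 2 this yields part (i) in both directions.

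\emph{Step 4 (global maximizers, and the main obstacle).} By Steps 1--3, at any local maximizer $F(\bm{Z})=\sum_k r_k\,g_k(t_k^\star)$, so maximizing over local maximizers becomes the integer program of choosing ranks $r_k$ with $\sum_k r_k\le\min\{m,d\}$ to maximize $\sum_k r_k\,g_k(t_k^\star)$. I would show $g_k(t_k^\star)$ is strictly decreasing in $m_k$ (a smaller class leaves more room to compress, so its per-dimension gain is larger) and is positive for the smallest classes under \eqref{eq:lambda}; an exchange argument then forces any maximizing allocation to use the full budget, $\sum_k r_k=\min\{m,d\}$ (condition (ii)(a)), and to fill it giving priority to smaller classes, i.e. if $m_k<m_l$ and $r_l>0$ then $r_k$ is saturated at $\min\{m_k,d\}$ (condition (ii)(b)); the ``only if'' direction is the contrapositive of this exchange, and the ``if'' direction holds because every allocation meeting (a) and (b) attains the same, hence maximal, value. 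The main obstacle is Step 2: upgrading the weak first-order structure of Step 1 into genuine orthogonality of the blocks (and, en route, the equal-singular-value conclusion within each block) via a careful second-order perturbation argument that eliminates the shared-eigenvalue configurations; the remainder is matrix-calculus bookkeeping and one-variable calculus.
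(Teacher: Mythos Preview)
Your proposal is correct and follows essentially the same route as the paper: first-order conditions give the eigenvector structure, a second-order ascent direction rules out critical points with non-orthogonal blocks (the paper's Proposition~\ref{prop:saddle}, which is indeed the technical heart and proceeds exactly by constructing a rank-one perturbation inside a shared eigenspace, as you anticipate), and then the decoupled scalar analysis and rank-allocation exchange argument finish (i) and (ii). One small simplification the paper exploits for the ``if'' direction of (i): rather than a full Hessian computation, it uses the global inequality $F(\bm{Z}')\le\sum_k f_k(\bm{Z}'_k)$ with equality iff the blocks are orthogonal (Lemma~\ref{lem:upper bound}), so local maximality of each $\bm{Z}_k$ for its own $f_k$ immediately yields local maximality of $\bm{Z}$ for $F$ without any flat-direction bookkeeping.
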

We defer the proof to \Cref{subsec:pf opti} and \Cref{sec:pf thm1}. In this theorem, we explicitly characterize the local and global optima of Problem  \eqref{eq:MCR}. Intuitively, this demonstrates that the features represented by each local maximizer of Problem \eqref{eq:MCR} are low-dimensional and discriminative in the sense that \\
(i) {\em Within-Class Compressible:} According to \eqref{eq:Zk opti}, at each local maximizer, the features from the same class belong to the same low-dimensional linear subspace. \\
(ii) {\em Between-Class Discriminative:} It follows from \eqref{eq:Zk opti} and $\bm U_k^T\bm U_l = \bm 0$ for all $k \neq l$ that, at each local maximizer, the features from different classes belong to different subspaces that are orthogonal to each other.  \\
Moreover, the features represented by each global maximizer of Problem \eqref{eq:MCR} are not only low-dimensional and discriminative but also diverse in the sense that \\ 
(iii) {\em Maximally Diverse Representation:} According to $\sum_{k=1}^K r_k = \min\{m,d\}$, at each global maximizer, the total dimension of all features is maximized to match the highest dimension that it can achieve in the feature space.   

\vspace{-0.1in}
\paragraph{Quality of local versus global optima.} Our above discussion explains the merits of achieving both local and global optima. At each maximizer, the representations are within-class compressible and between-class discriminative (\Cref{thm:1} (i)). Moreover, global maximizers further satisfy that the representations are all maximally diverse (\Cref{thm:1} (ii)(a)). If all classes were balanced, i.e., $m_{1} = \cdots = m_{K}$, then \Cref{thm:1} (ii)(b) would not apply, and these properties would be all that \Cref{thm:1} asserts. In this case, global optima would clearly be desired over local optima. However, in the unbalanced case, the situation is more complex, because \Cref{thm:1} (ii)(b) would apply. It says that for global optima, the classes with the smallest numbers of samples would fill to the largest dimension possible, and the very largest classes could collapse to $\bm{0}$, an undesirable situation. A dramatic example of this is when $m_{1} > \cdots > m_{K} > d$, for then any global optimum would have $\operatorname{rank}(\bm{Z}_{K}) = d$ and $\bm{Z}_{1}, \dots, \bm{Z}_{K - 1}$ all collapse to $\bm{0}$. Overall, in the unbalanced case, global optima may not always correspond to the best representations. In particular, local optima with more equitable rank distributions (like bigger classes span more dimensions) which are still maximally diverse (i.e., ranks of each class sum to the dimension $d$) could be preferred in applications. As demonstrated in \Cref{subsec:exp1}, these kinds of potentially useful local optima are realized in experiments, even with unbalanced classes. 

\vspace{-0.1in}
\paragraph{Relation between Problems \eqref{eq:MCR1} and \eqref{eq:MCR}.} Based on the characterization of global optimality in \Cref{thm:1}, we show the following proposition that establishes the relationship between the constrained MCR$^2$ problem \eqref{eq:MCR1} and the regularized MCR$^2$ problem \eqref{eq:MCR} in terms of their global solutions under an appropriate choice of the regularization parameter. The proof of this result can be found in \Cref{sec:pf prop}. 

\begin{prop}\label{prop:equi}
    Suppose that the number of training samples in each class is the same, i.e., $m_1=\dots=m_K$, and the coding precision $\epsilon > 0$ satisfies 
    \begin{align}\label{eq:epsilon}
        \epsilon \le  \frac{1}{6}\sqrt{\frac{d}{m}}\exp\left(-\frac{1}{2}\right)K^{-\frac{1}{K-1}}\left( 1 + \frac{1}{\sqrt{K}}\right)^{-\frac{2m}{K-1}}.
    \end{align}
    The following statements hold: \\
    (i) If $m < d$ and the regularization parameter in Problem \eqref{eq:MCR} is set as 
    \begin{align}\label{eq:lambda m>d}
        \lambda = \frac{\alpha}{1+\alpha} - \frac{\alpha}{1+K\alpha},
    \end{align}
    Problems \eqref{eq:MCR1} and \eqref{eq:MCR} have the same global solution set. \\
    (ii) If $m \ge d$, $d/K$ is an integer, and the regularization parameter in Problem \eqref{eq:MCR} is set as
    \begin{align}\label{eq:lambda m<d}
        \lambda = \frac{\alpha}{1+\alpha m/d} -  \frac{\alpha}{1+\alpha K m/d},
    \end{align}
     the global solution set of Problem \eqref{eq:MCR1} is a subset of that of Problem \eqref{eq:MCR}.
\end{prop}
According to this proposition, if $\epsilon$ and $\lambda$ are appropriately chosen for Problem \eqref{eq:MCR}, when $m < d$, Problems \eqref{eq:MCR1} and \eqref{eq:MCR} are equivalent in terms of their global optimal solutions; when $m \le d$, Problem \eqref{eq:MCR} is a tight Lagrangian relaxation of Problem \eqref{eq:MCR1} such that the global solution set of the former contains that of the latter.

\subsection{Analysis of Global Optimization Landscape}\label{subsec:land} 

While we have characterized the local and global optimal solutions in \Cref{thm:1}, it remains unknown whether these solutions can be computed efficiently using GD to solve Problem \eqref{eq:MCR}, as GD may get stuck at a saddle point. Fortunately, \citet{sun2015nonconvex,lee2016gradient} showed that if a function is twice continuously differentiable and satisfies {\em strict saddle property}, i.e., each critical point is either a local minimizer or a strict saddle point\footnote{We say that a critical point is a strict saddle point of Problem \eqref{eq:MCR} if it has a direction with strictly positive curvature; see \Cref{def:saddle}. This includes classical saddle points with strictly positive curvature as well as local minimizers.}, GD converges to its local minimizer almost surely with random initialization. We investigate the global optimization landscape of Problem \eqref{eq:MCR} by characterizing all of its critical points as follows.  

\begin{theorem}[\bf Benign optimization landscape]\label{thm:2}
Suppose that the number of training samples in the $k$-th class is $m_k > 0$ for each $k \in [K]$. Given a coding precision $\epsilon > 0$, if the regularization parameter satisfies \eqref{eq:lambda},  
it holds that any critical point $\bm Z$ of Problem \eqref{eq:MCR} that is not a local maximizer is a strict saddle point. 
\end{theorem}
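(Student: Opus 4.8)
Since \Cref{thm:1}(i) already describes every local maximizer in closed form, it suffices to prove that any critical point $\bm Z$ of $F$ which fails \eqref{eq:Zk opti} admits a perturbation direction of strictly positive curvature. My plan is: (1) turn the first-order condition into rigid structural information about a critical point; (2) use it to show that the critical points violating \eqref{eq:Zk opti} come in exactly two types; (3) for each type, write down an explicit perturbation and verify, using \eqref{eq:lambda}, that $F$ strictly increases to second order.

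\textbf{Step 1 --- structure of critical points.} Write $c:=\tfrac{d}{m\epsilon^2}$, $c_k:=\tfrac{d}{m_k\epsilon^2}$, $\bm A:=\bm I+c\,\bm Z\bm Z^\T$, $\bm B_k:=\bm I+c_k\,\bm Z_k\bm Z_k^\T$. Differentiating gives $\nabla_{\bm Z_k}F(\bm Z)=c\,(\bm A^{-1}-\bm B_k^{-1})\bm Z_k-\lambda\bm Z_k$ (using $\tfrac{m_k}{m}c_k=c$). Substituting a thin SVD $\bm Z_k=\bm U_k\bm\Sigma_k\bm V_k^\T$ with $\bm\Sigma_k\succ\bm 0$ into $\nabla_{\bm Z_k}F(\bm Z)=\bm 0$ and cancelling $\bm\Sigma_k$ and $\bm V_k^\T$ shows that every column $\bm u_{k,i}$ of $\bm U_k$ is an eigenvector of $\bm A$ with eigenvalue $\mu_{k,i}=\bigl(\tfrac{\lambda}{c}+\tfrac1{1+c_k\sigma_{k,i}^2}\bigr)^{-1}$. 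Because $\bm A-\bm I=c\sum_k\bm Z_k\bm Z_k^\T\succeq\bm 0$ with range $W:=\sum_k\range(\bm Z_k)$, we get $\mu_{k,i}>1$, hence $\lambda<c$ (ensured by \eqref{eq:lambda}) and $\sigma_{k,i}^2\ge\tfrac{\lambda}{c_k(c-\lambda)}$; moreover two left singular vectors from different blocks must be orthogonal unless their $\bm A$-eigenvalues coincide, which forces the corresponding values $c_k\sigma_{k,i}^2$ to agree. When the column spaces are mutually orthogonal, $\bm A$ acts on $\range(\bm U_k)$ as multiplication by $1+c\sigma_{k,i}^2$ on $\bm u_{k,i}$, so each nonzero $\sigma_{k,i}^2$ is a root of the quadratic (in $\sigma^2$) $1+c\sigma^2=\mu_k(\sigma)$, whose larger root is exactly the $\overline\sigma_k$ of \eqref{eq:sigma1}.

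\textbf{Steps 2 and 3 --- classification and curvature directions.} Comparing with \eqref{eq:Zk opti}, and noting that orthogonality of the $\range(\bm Z_k)$ automatically forces $\sum_k r_k\le\min\{m,d\}$, a critical point is a local maximizer iff the column spaces are mutually orthogonal and every nonzero singular value of block $k$ equals $\overline\sigma_k$. Hence a critical point failing \eqref{eq:Zk opti} must satisfy: \emph{(I)} the column spaces are mutually orthogonal but some block has a nonzero singular value equal to the smaller root of $1+c\sigma^2=\mu_k(\sigma)$; or \emph{(II)} two blocks have non-orthogonal column spaces. For the Hessian I will use
\begin{align*}
\nabla^2F(\bm Z)[\bm\Delta,\bm\Delta]
&=c\langle\bm A^{-1},\bm\Delta\bm\Delta^\T\rangle-c\sum_k\langle\bm B_k^{-1},\bm\Delta_k\bm\Delta_k^\T\rangle-\lambda\|\bm\Delta\|_F^2 \\
&\quad-\tfrac{c^2}{2}\|\bm A^{-1/2}\bm E\bm A^{-1/2}\|_F^2+\tfrac{c}{2}\sum_k c_k\|\bm B_k^{-1/2}\bm E_k\bm B_k^{-1/2}\|_F^2,
\end{align*}
with $\bm E=\bm Z\bm\Delta^\T+\bm\Delta\bm Z^\T$ and $\bm E_k=\bm Z_k\bm\Delta_k^\T+\bm\Delta_k\bm Z_k^\T$. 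In case (I), take $\bm\Delta$ supported on the offending block and equal to $\bm u_{k,i}\bm v_{k,i}^\T$ (so only $\sigma_{k,i}$ is perturbed); then $\bm E=\bm E_k=2\sigma_{k,i}\bm u_{k,i}\bm u_{k,i}^\T$, the first three terms cancel by the relation for $\mu_{k,i}$, and one is left with $\nabla^2F(\bm Z)[\bm\Delta,\bm\Delta]=\tfrac{2\sigma_{k,i}^2}{y^2}\bigl(cc_k-(\lambda y+c)^2\bigr)$ with $y:=1+c_k\sigma_{k,i}^2$, which is $>0$ iff $y<\tfrac{\sqrt{cc_k}-c}{\lambda}$. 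A short computation with the quadratic shows this holds at its smaller root precisely because $\lambda\le\tfrac{d(\sqrt{m/m_{\max}}-1)}{m(\sqrt{m/m_{\max}}+1)\epsilon^2}\le\tfrac{d(\sqrt{m/m_k}-1)}{m(\sqrt{m/m_k}+1)\epsilon^2}$ (and at the threshold value of $\lambda$ the quadratic has a double root, so no smaller root exists); this is where the exact form of \eqref{eq:lambda} enters. In case (II), after rotating within degenerate singular subspaces one exhibits a unit vector $\bm u$ that is a left singular vector of two distinct blocks $k,l$, with $c_k\sigma_k^2=c_l\sigma_l^2=:x$; take the antisymmetric "swap" $\bm\Delta_k=\eta\,\bm u\bm v_k^\T$, $\bm\Delta_l=-\eta'\,\bm u\bm v_l^\T$ (all other blocks zero), with $\eta\sigma_k=\eta'\sigma_l$ so that $\bm E=0$. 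Then the $\bm A$-coupling term drops out, the first three terms cancel again, and $\nabla^2F(\bm Z)[\bm\Delta,\bm\Delta]=\tfrac{2cx}{(1+x)^2}(\eta^2+\eta'^2)>0$, using only $x>0$ (so here \eqref{eq:lambda} is needed only through $\lambda<c$).

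\textbf{Main obstacle.} The principal difficulty is the coupling through the global matrix $\bm A$: a perturbation of one block changes $\bm A$, so the cross term $\|\bm A^{-1/2}\bm E\bm A^{-1/2}\|_F^2$ is generally nonzero, and the two perturbations above are chosen precisely to tame it --- in case (I) $\bm\Delta$ stays along an $\bm A$-eigendirection, and in case (II) the antisymmetric choice makes $\bm E=0$. Making Steps 1--2 rigorous is the real work: one must show that at a critical point the left singular vectors are forced to be $\bm A$-eigenvectors, analyze how the block column spaces can sit inside the eigenspaces of $\bm A$ (this also subsumes the ``over-ranked'' critical points with $\sum_k r_k>\dim W$, which necessarily fall under case (II)), and verify the split into (I) and (II) is exhaustive; the most general non-orthogonal configuration --- where a shared eigenspace of $\bm A$ contains several pairwise non-orthogonal but non-coinciding block directions --- requires a slightly more careful variant of the swap. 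The other delicate point is purely arithmetic: checking that the single, class-uniform bound \eqref{eq:lambda} is simultaneously the exact threshold making the case-(I) curvature positive at the smaller root for every block, which is what pins down the factor $\tfrac{\sqrt{m/m_{\max}}-1}{\sqrt{m/m_{\max}}+1}$.
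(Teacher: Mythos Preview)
Your overall strategy matches the paper's: split critical points by whether the block column spaces are pairwise orthogonal, and in each case exhibit an explicit ascent direction. Your Case~(I) is correct and slightly more explicit than the paper's argument (the paper observes that when the blocks are orthogonal the objective decouples as $F(\bm Z)=\sum_k f_k(\bm Z_k)$ and then studies the scalar function $h_k(x)=\log(1+\alpha x)-\tfrac{m_k}{m}\log(1+\alpha_k x)-\lambda x$, for which $\underline\sigma_k^2$ is a strict local minimum; your Hessian computation is the same calculation in disguise, and your identification of where \eqref{eq:lambda} enters is exactly right).

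The genuine gap is in Case~(II). Your construction assumes that after rotating within degenerate singular subspaces one can always find a unit vector $\bm u$ that is simultaneously a left singular vector of two distinct blocks $\bm Z_k,\bm Z_l$. This is false in general. Within a single eigenspace $E$ of $\bm A$ with eigenvalue $\mu$, your Step~1 only forces the tight-frame condition $\sum_j\sigma_j^2 P_{S_j}=\tfrac{\mu-1}{c}\,\bm I_E$ (where $S_j=\range(\bm Z_j)\cap E$), not a pairwise-intersection condition. A concrete obstruction: three balanced classes with one-dimensional $S_1,S_2,S_3$ forming an equiangular set of lines in a two-dimensional eigenspace satisfy the first-order conditions, all pairs are non-orthogonal, yet $S_k\cap S_l=\{0\}$ for every $k\neq l$. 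So your ``slightly more careful variant of the swap'' is not a minor refinement but the heart of the non-orthogonal case.

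The paper's construction is different from your two-block swap and handles this. It shows that within each eigenspace the stacked matrix $\bm Z^{(i)}=[\bm Z_1^{(i)},\dots,\bm Z_K^{(i)}]$ has all nonzero singular values equal to $\sqrt{\lambda_i}$ and, whenever two blocks are non-orthogonal, a nontrivial right null space. Choosing $\bm q=(\bm q_1,\dots,\bm q_K)$ with $\bm Z^{(i)}\bm q=\bm 0$ and setting $\bm D_j=\bm u\,\bm q_j^\T$ for a suitable $\bm u$ in the column space of $\bm Z^{(i)}$ gives $\bm Z\bm D^\T=\sum_j\bm Z_j^{(i)}\bm q_j\,\bm u^\T=\bm 0$, hence $\bm E=\bm 0$; the remaining curvature computation is then essentially the one you wrote. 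In other words, your intuition about neutralizing the global $\bm A$-coupling via $\bm E=\bm 0$ is correct, but the right device is a \emph{right null vector of the stacked block restricted to one eigenspace} (which always exists in the non-orthogonal case), not a common left singular vector of two blocks (which need not).
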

We defer the proof to \Cref{subsec:pf land} and \Cref{sec:pf thm2}. Here, we make some remarks on this theorem and also on the consequences of the results derived so far. 

\vspace{-0.1in}
\paragraph{Differences from existing results on the MCR$^2$ problem.}

\citet{chan2022redunet,yu2020learning} have characterized the global optimality of Problem \eqref{eq:MCR1} with Frobenius norm constraints on each $\bm Z_k$ in the UFM. 
However, their analysis requires an additional rank constraint on each $\bm Z_k$ and only characterizes globally optimal representations. In contrast, our analysis eliminates the need for the rank constraint, and we characterize local and global optimality in Problem \eqref{eq:MCR}, as well as its optimization landscape. Interestingly, we demonstrate that the features represented by each local maximizer --- not just global maximizers --- are also compact and structured. Furthermore, we demonstrate that the regularized MCR$^2$ objective \eqref{eq:MCR} is a strict saddle function. To the best of our knowledge, \Cref{thm:1,thm:2} constitute the first analysis of local optima and optimization landscapes for MCR$^2$ objectives. According to \citet{daneshmand2018escaping,lee2016gradient,xu2018first}, \Cref{thm:1,thm:2} imply that low-dimensional and discriminative representations can be efficiently found by (stochastic) GD on Problem \eqref{eq:MCR} from a random initialization.

\vspace{-0.1in} 
\paragraph{Comparison to existing landscape analyses in non-convex optimization.} In recent years, there has been a growing body of literature exploring optimization landscapes of non-convex problems in machine learning and deep learning. These include low-rank matrix factorization \cite{ge2017no,sun2018geometric,chi2019nonconvex,zhang2020symmetry}, community detection \cite{wang2021optimal,wang2022non}, dictionary learning \cite{sun2017complete_a,qu2020geometric}, and deep neural networks \cite{sun2020global,yaras2022neural,zhou2022optimization,zhu2021geometric,jiang2024generalized,li2024neural}. The existing analyses in the literature cannot be applied to the MCR$^2$ problem due to its special structure, which involves the log-determinant of all features minus the sum of the log-determinant of features in each class. Our work contributes to the literature on optimization landscape analyses of non-convex problems by showing that the MCR$^2$ problem has a benign optimization landscape. Our approach may be of interest to analyses of the landscapes of other intricate loss functions in practical applications.  

\section{Proofs of Main Results}\label{sec:pf}  

In this section, we sketch the proofs of our main theorems in \Cref{sec:main}. The complete proofs can be found in Sections \ref{sec:pf opti} and \ref{sec:pf land} of the appendix. For ease of exposition, let 
\begin{align}\label{eq:alp}
\alpha := \frac{d}{m\epsilon^2},\quad \alpha_k := \frac{d}{m_k\epsilon^2},\quad \forall k \in [K]. 
\end{align}  

\subsection{Analysis of Optimality Conditions}\label{subsec:pf opti}

Our goal in this subsection is to characterize the local and global optima of Problem \eqref{eq:MCR}. Towards this goal, we first provide an upper bound on the objective function $F$ in Problem \eqref{eq:MCR}. In particular, this upper bound is tight when the blocks $\{\bm Z_k\}_{k=1}^K$ are orthogonal to each other. This result is a direct consequence of \citep[Lemma 10]{chan2022redunet}.  

\begin{lemma}\label{lem:upper bound}
For any $\bZ = \left[\bZ_1,\ldots,\bZ_K\right] \in \R^{d \times m}$ with $\bm Z_k \in \R^{d \times m_k}$, we have 
\begin{align}\label{eq:F upper}
F(\bm Z) \le \sum_{k=1}^K  \left( \frac{1}{2} \log\det\left( \bm I_n + \alpha \bm Z_k \bm Z_k^T \right) - \frac{m_k}{2m} \log\det\left( \bm I_n + \alpha_k \bm Z_k \bm Z_k^T \right) - \frac{\lambda}{2} \|\bm Z_k\|_F^2 \right),  
\end{align}
where the equality holds if and only if  $\bm Z_k^T\bm Z_l = \bm 0$ for all $1 \le k \neq l \le K$. 
\end{lemma}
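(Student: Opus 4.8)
The plan is to show that all the complexity of the lemma sits in a single classical determinantal inequality for the term $R(\bm Z)$. Start by observing that in $F(\bm Z)=R(\bm Z)-\sum_{k=1}^K R_c(\bm Z;\bm\pi^k)-\tfrac{\lambda}{2}\|\bm Z\|_F^2$, the two negative groups of terms already decompose block-by-block: $R_c(\bm Z;\bm\pi^k)=\tfrac{m_k}{2m}\log\det(\bm I+\alpha_k\bm Z_k\bm Z_k^T)$ involves only $\bm Z_k$, and $\|\bm Z\|_F^2=\sum_{k=1}^K\|\bm Z_k\|_F^2$. Hence the entire content of the lemma reduces to the inequality
\begin{align*}
R(\bm Z)=\tfrac12\log\det\!\left(\bm I+\alpha\bm Z\bm Z^T\right)\;\le\;\sum_{k=1}^K\tfrac12\log\det\!\left(\bm I+\alpha\bm Z_k\bm Z_k^T\right),
\end{align*}
together with its equality characterization.

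Next I would pass to the Gram side via Sylvester's determinant identity: $\det(\bm I+\alpha\bm Z\bm Z^T)=\det(\bm I_m+\alpha\bm Z^T\bm Z)$, and likewise $\det(\bm I+\alpha\bm Z_k\bm Z_k^T)=\det(\bm I_{m_k}+\alpha\bm Z_k^T\bm Z_k)$. The point is that, under the block layout $\bm Z=[\bm Z_1,\dots,\bm Z_K]$, the matrix $\bm G:=\bm I_m+\alpha\bm Z^T\bm Z$ is symmetric positive definite with diagonal blocks $\bm I_{m_k}+\alpha\bm Z_k^T\bm Z_k$ and off-diagonal blocks $\alpha\bm Z_k^T\bm Z_l$. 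Fischer's (Hadamard--Fischer) inequality for positive semidefinite block matrices then gives $\det\bm G\le\prod_{k=1}^K\det(\bm I_{m_k}+\alpha\bm Z_k^T\bm Z_k)$; taking $\tfrac12\log$ of both sides and undoing Sylvester's identity yields exactly \eqref{eq:F upper}.

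For the equality case I would run the two-block Schur-complement argument and induct on $K$. Write $\bm G$ in $2\times 2$ block form, first block versus the rest: $\det\bm G=\det(\bm I_{m_1}+\alpha\bm Z_1^T\bm Z_1)\cdot\det\bm S$, where $\bm S=\bm G_{>1,>1}-\alpha^2\bm Z_{>1}^T\bm Z_1(\bm I_{m_1}+\alpha\bm Z_1^T\bm Z_1)^{-1}\bm Z_1^T\bm Z_{>1}$ with $\bm Z_{>1}:=[\bm Z_2,\dots,\bm Z_K]$. Since the correction term is positive semidefinite, $\bm S\preceq\bm G_{>1,>1}$, hence $\det\bm S\le\det\bm G_{>1,>1}$; because $\bm G_{>1,>1}\succ 0$, equality holds iff $\bm S=\bm G_{>1,>1}$, which (the inverse being positive definite) forces $\bm Z_1^T\bm Z_{>1}=\bm 0$. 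Feeding this into the induction hypothesis applied to the positive definite submatrix $\bm G_{>1,>1}$ shows that equality in \eqref{eq:F upper} holds iff $\bm Z_k^T\bm Z_l=\bm 0$ for all $k\neq l$; the converse is immediate, since then $\bm G$ is block diagonal. Alternatively, as the authors note, one may simply invoke \citep[Lemma 10]{chan2022redunet}.

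I do not expect a genuine obstacle here: the statement is elementary once the right classical fact is identified. The only point requiring care is the equality analysis for $K>2$ blocks --- one must package the induction cleanly so that ``equality $\Rightarrow$ all off-diagonal Gram blocks vanish'' is actually derived rather than merely asserted from the two-block case, using positive definiteness of every diagonal sub-block of $\bm G$ to exclude the degenerate equality cases of Fischer's inequality.
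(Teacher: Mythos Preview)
Your proposal is correct and follows essentially the same route as the paper: both observe that the $R_c$ and Frobenius-norm terms already split block-by-block, so everything reduces to the inequality $\log\det(\bm I+\alpha\bm Z\bm Z^T)\le\sum_k\log\det(\bm I+\alpha\bm Z_k\bm Z_k^T)$ with its equality case. The only difference is cosmetic: the paper simply invokes this inequality as a black box (its Lemma~\ref{lem:MCR}, i.e.\ \citep[Lemma~10]{chan2022redunet}), whereas you additionally supply a self-contained derivation via Sylvester's identity, Fischer's inequality, and the Schur-complement induction for the equality characterization.
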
 
Next, we study the following set of critical points, which are between-class discriminative (i.e., $\bm Z_k^T\bm Z_l = \bm 0$): 
\begin{align}\label{eq:set Z}
\mathcal{Z} := \left\{\bm Z: \nabla F(\bm Z) = 0,\ \bm Z_k^T\bm Z_l = \bm 0,\forall k \neq l \right\}. 
\end{align}
\begin{prop}\label{prop:set Z}
Consider the setting of \Cref{thm:1}. It holds that $\bm Z = \left[\bm Z_1,\dots,\bm Z_K \right] \in \cal Z$ if and only if each $\bm Z_k$ admits the following singular value decomposition 
\begin{align}\label{eq:Zk}
\bm Z_k = \bm U_k\tilde{\bm \Sigma}_k\bm V_k^T,\ \tilde{\bm \Sigma}_k = \mathrm{diag}\left(\sigma_{k,1},\dots,\sigma_{k,r_k}\right), 
\end{align}
where  
(i) $r_k \in \left[0,\min\{m_k,d\}\right)$ satisfies $\sum_{k=1}^K r_k \le d$, (ii) $\bm U_k \in \mathcal{O}^{n\times r_k}$ satisfies $\bm U_k^T\bm U_l = \bm 0$ for all $1 \le k \neq l \le K$, $\bm V_k \in \mathcal{O}^{m_k \times r_k}$ for all $k \in [K]$, and  (iii) the singular values satisfy  
\begin{align}\label{eq:gamma}
\sigma_{k,i} \in \left\{ \overline{\sigma}_k, \underline{\sigma}_{k} \right\},\ \forall i \in [r_k],
\end{align}
where $\eta_k = (\alpha_k - \alpha) - \lambda\left(m/m_k+1\right)$ and 
\begin{align}
\overline{\sigma}_k  = \left(\frac{ \eta_k + \sqrt{\eta_k^2 - 4\lambda^2m/m_k}}{2\lambda \alpha_k}\right)^{1/2},\quad \underline{\sigma}_{k}  = \left(\frac{ \eta_k - \sqrt{\eta_k^2 - 4\lambda^2m/m_k}}{2\lambda \alpha_k}\right)^{1/2}. \label{eq:sigma1}
\end{align}
\end{prop}
This proposition shows that each critical point that is between-class discriminative (i.e., $\bm Z_k^T\bm Z_l = \bm 0$) exhibits a specific structure: the singular values of $\bm Z_k$ can only take on two possible values, $\overline{\sigma}_k$ and $\underline{\sigma}_{k}$. We will leverage this structure and further show that $\bm{Z}$ is a strict saddle if there exists a $\bm{Z}_k$ with a singular value $\underline{\sigma}_k$.

\subsection{Analysis of Optimization Landscape}\label{subsec:pf land}

Our goal in this subsection is to show that the function $F$ in Problem \eqref{eq:MCR} has a benign optimization landscape. Towards this goal, we denote the set of critical point of $F$ by 
\begin{align}\label{eq:crit}
\mathcal{X} = \left\{ \bm Z \in \R^{d\times m}: \nabla F(\bm Z) = \bm 0 \right\}. 
\end{align} 
According to \eqref{eq:set Z}, we divide the critical point set $\cal X$ into two disjoint sets  $\mathcal{Z}$ and $\mathcal{Z}^c$, i.e., $\mathcal{X} = \mathcal{Z} \cup \mathcal{Z}^c$, where 
\begin{align}\label{eq:Zc}
\mathcal{Z}^c := \left\{\bm Z: \nabla F(\bm Z) = 0,\ \bm Z_k^T\bm Z_l \neq \bm 0,\ \exists k \neq l \right\}. 
\end{align} 
Moreover, according to \Cref{prop:set Z}, we further divide $\cal Z$ into two disjoint sets $\mathcal{Z}_1$ and $\mathcal{Z}_2$, i.e., $\mathcal{Z} = \mathcal{Z}_1 \cup \mathcal{Z}_2$. Here,  
\begin{align}\label{eq:Z12}
& \mathcal{Z}_1 := \mathcal{Z} \cap \left\{\bm Z : \sigma_{k,i}(\bm Z_k) = \overline{\sigma}_k, \forall i \in [r_k], k \in [K] \right\},\ \mathcal{Z}_2 := \mathcal{Z}\setminus \mathcal{Z}_1,
\end{align} 
where $\sigma_{k,i}(\bm Z_k)$ denotes the $i$-th largest singular value of $\bm Z_k$. Our first step is to show that any point belonging to $\mathcal{Z}_1$ is a local maximizer, while any point belonging to $\mathcal{Z}_2$ is a strict saddle point. 

\begin{prop}\label{prop:local max}
Consider the setting of \Cref{thm:2}. Suppose that  $\bm Z \in \cal Z$. Then, the following statements hold: \\
(i) If $\bm Z_k$ takes the form of \eqref{eq:Zk} with $\sigma_{k,i} = \overline{\sigma}_k$ for all $i \in [r_k]$ and all $k \in [K]$, i.e., $\bm Z \in \mathcal{Z}_1$, then $\bm Z$ is a local maximizer. \\
(ii) If there exists a $k \in [K]$ and $i \in [r_k]$ with $r_k \ge 1$ such that $\sigma_{k,i} = \underline{\sigma}_k$, i.e., $\bm Z \in \mathcal{Z}_2$, then $\bm Z$ is a strict saddle point. 
\end{prop} 

Next, we proceed to the second step to show that any point belonging to $\mathcal{Z}^c$ is a strict saddle point. It suffices to find a direction $\bm D \in \R^{d\times m}$ such that $\nabla^2 F(\bm Z)[\bm D, \bm D] > 0$ for each $\bm Z \in \mathcal{Z}^c$ according to \Cref{def:saddle}. 

\begin{prop}\label{prop:saddle}
Consider the setting of \Cref{thm:2}. If $\bm Z\in \R^{d\times m}$ is a critical point and there exists $1 \le k \neq l \le K$ such that $\bm Z_k^T\bm Z_l \neq \bm 0$, i.e., $\bm Z \in \mathcal{Z}^c$, then $\bm Z$ is a strict saddle point.  
\end{prop}

With the above preparations that characterize all the critical points, we can prove \Cref{thm:1} and \Cref{thm:2}. We refer the reader to \Cref{sec:pf main} for the detailed proof.

\section{Experimental Results}\label{sec:expe}

\begin{figure*}[!t]
\begin{center}
	\begin{subfigure}{0.21\textwidth}
    	\centering\includegraphics[width = 1 \linewidth]{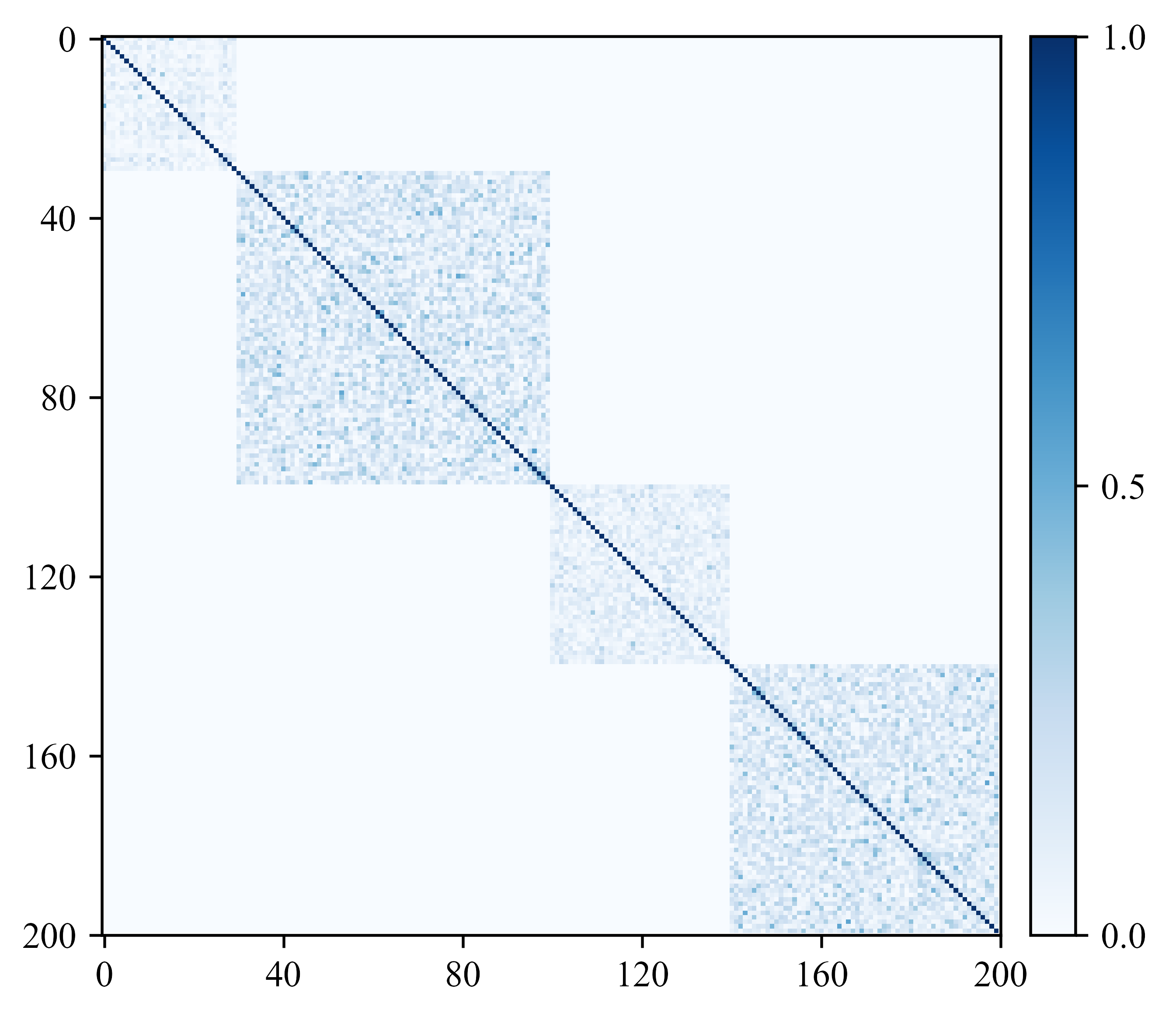}\vspace{-0.05in}
    \centering\caption{Cosine similarity} 
    \end{subfigure} \vspace{-0.05in} 
    \begin{subfigure}{0.78\textwidth}
    	\centering\includegraphics[width = 1 \linewidth]{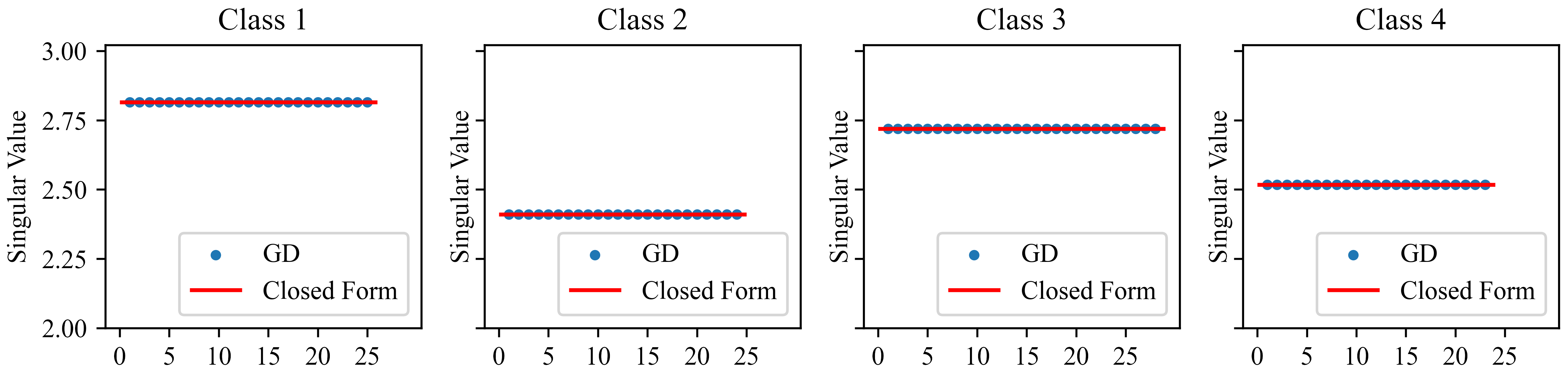} 
    \centering\caption{The number and magnitude of singular values of each class matrix.} 
    \end{subfigure}\vspace{-0.05in} 
    \caption{\textbf{Validation of theory for the MCR$^2$ problem.} (a) We visualize the heatmap of cosine similarity among learned features by GD for solving Problem \eqref{eq:MCR}. The lighter pixels represent lower cosine similarities between pairwise features. (b) The blue dots are plotted based on the singular values by applying SVD to the solution returned by GD, and the red line is plotted according to the closed-form solution in \eqref{eq:Zk opti}. The number of nonzero singular values in each subspace is $24, 23, 27, 26$, respectively.}\vspace{-0.2in} 
    \label{fig:opti}
\end{center}
\end{figure*}

In this section, we first conduct numerical experiments on synthetic data in \Cref{subsec:exp1} to validate our theoretical results, and then on real-world data sets using deep neural networks in \Cref{subsec:exp2} to further support our theory. All codes are implemented in Python mainly using NumPy and PyTorch. All of our experiments are executed on a computing server equipped with NVIDIA A40 GPUs. Due to space limitations, we defer some implementation details and additional experimental results to \Cref{app sec:exp}. 

\subsection{Validation of Theory for Solving Problem \eqref{eq:MCR}}\label{subsec:exp1}

In this subsection, we employ GD for solving Problem \eqref{eq:MCR} with different parameter settings. We visualize the optimization dynamics and structures of the solutions returned by GD to verify and validate Theorems \ref{thm:1} and \ref{thm:2}.

\vspace{-0.1in}
\paragraph{Verification of \Cref{thm:1}.} In this experiment, we set the parameters in Problem \eqref{eq:MCR} as follows: the dimension of features $d=100$, the number of classes $K=4$, the number of samples in each class is $m_1=30,m_2=70,m_3=40,m_4=60$, the regularization parameter $\lambda=0.1$, and the quantization error $\epsilon = 0.5$. Then, one can verify that $\lambda$ satisfies \eqref{eq:lambda}. For the solution $\bm Z$ returned by GD, we first plot the heatmap of the cosine similarity between pairwise columns of $\bm Z$ in \Cref{fig:opti}(a). We observe that the features from different classes are orthogonal to each other, while the features from the same class are correlated. Next, we compute the singular values of $\bm Z_k$ via singular value decomposition (SVD) and plot the singular values using blue dots for each $k \in [K]$ in \Cref{fig:opti}(b). According to the closed-form solution \eqref{eq:Zk opti} in \Cref{thm:1}, we also plot the theoretical bound of singular values in red in \Cref{fig:opti}(b). One can observe that the number of singular values of each block is respectively $24, 23, 27, 26$, summing up to $100$, and the red line perfectly matches the blue dots. These results all provide strong support for \Cref{thm:1}. 

\vspace{-0.1in}
\paragraph{Verification of \Cref{thm:2}.} In this experiment, we maintain the same setting as above, except that the number of samples in each class is equal. We first fix $m=200$ and vary $d \in \{40,80,120\}$, and then fix $d=50$ and vary $d \in \{100,200,400\}$  to run GD.  We plot the distances between function values of the iterates to the optimal value, which is computed according to \eqref{eq:Zk opti} in \Cref{thm:1}, against the iteration numbers in \Cref{fig:conv}. We observe that GD with random initialization converges to an optimal solution at a linear rate. This indicates that the MCR$^2$ has a benign global landscape, which supports \Cref{thm:2}. 

\begin{figure*}[t]
\begin{center}
	\begin{subfigure}{0.48\textwidth}
    	\centering\includegraphics[width = 1\linewidth]{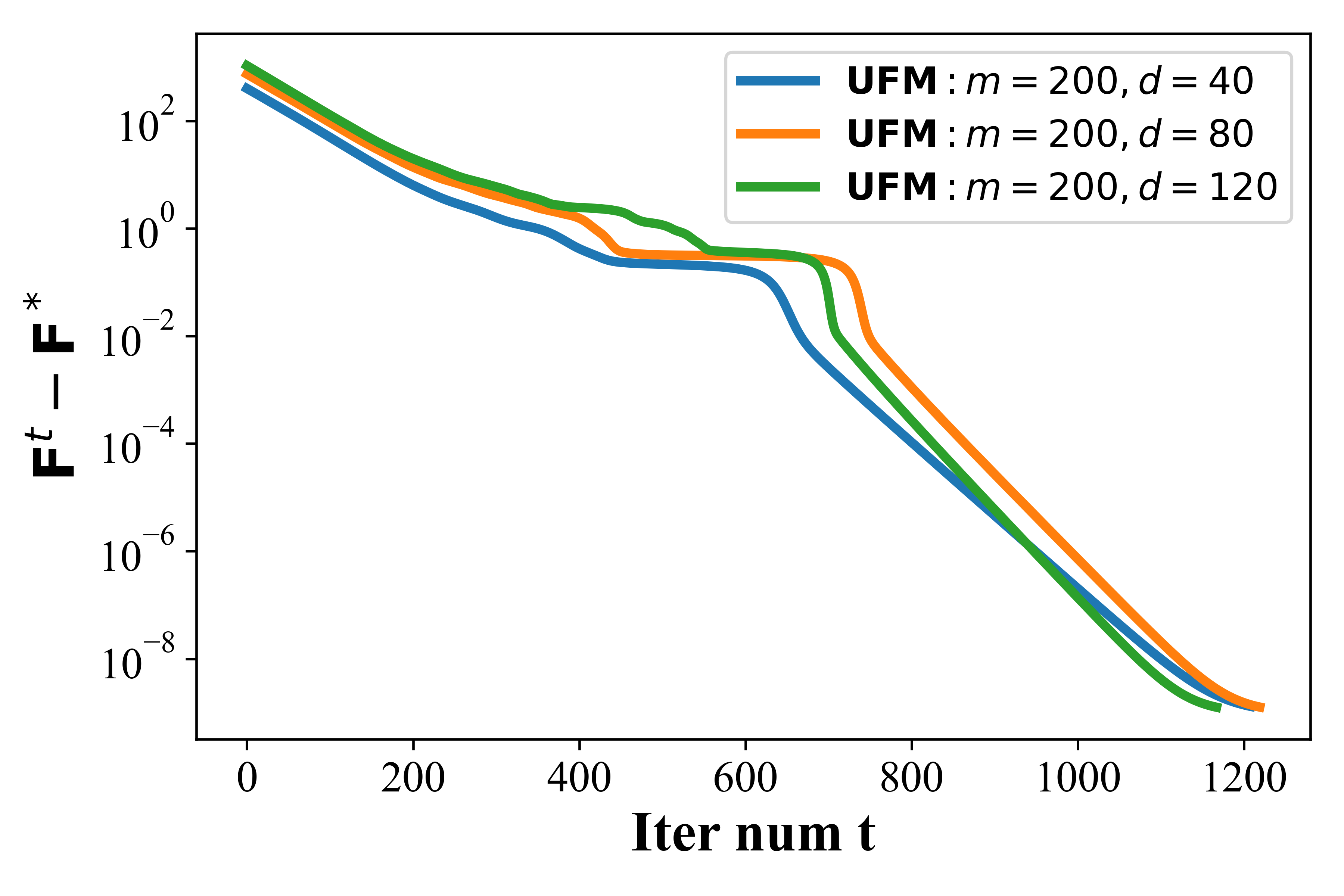}\vspace{-0.1in}
    \caption{Convergence for $m=200, d \in \{40,80,120\}$} 
    \end{subfigure} 
    \begin{subfigure}{0.48\textwidth}
    	\centering\includegraphics[width = 1\linewidth]{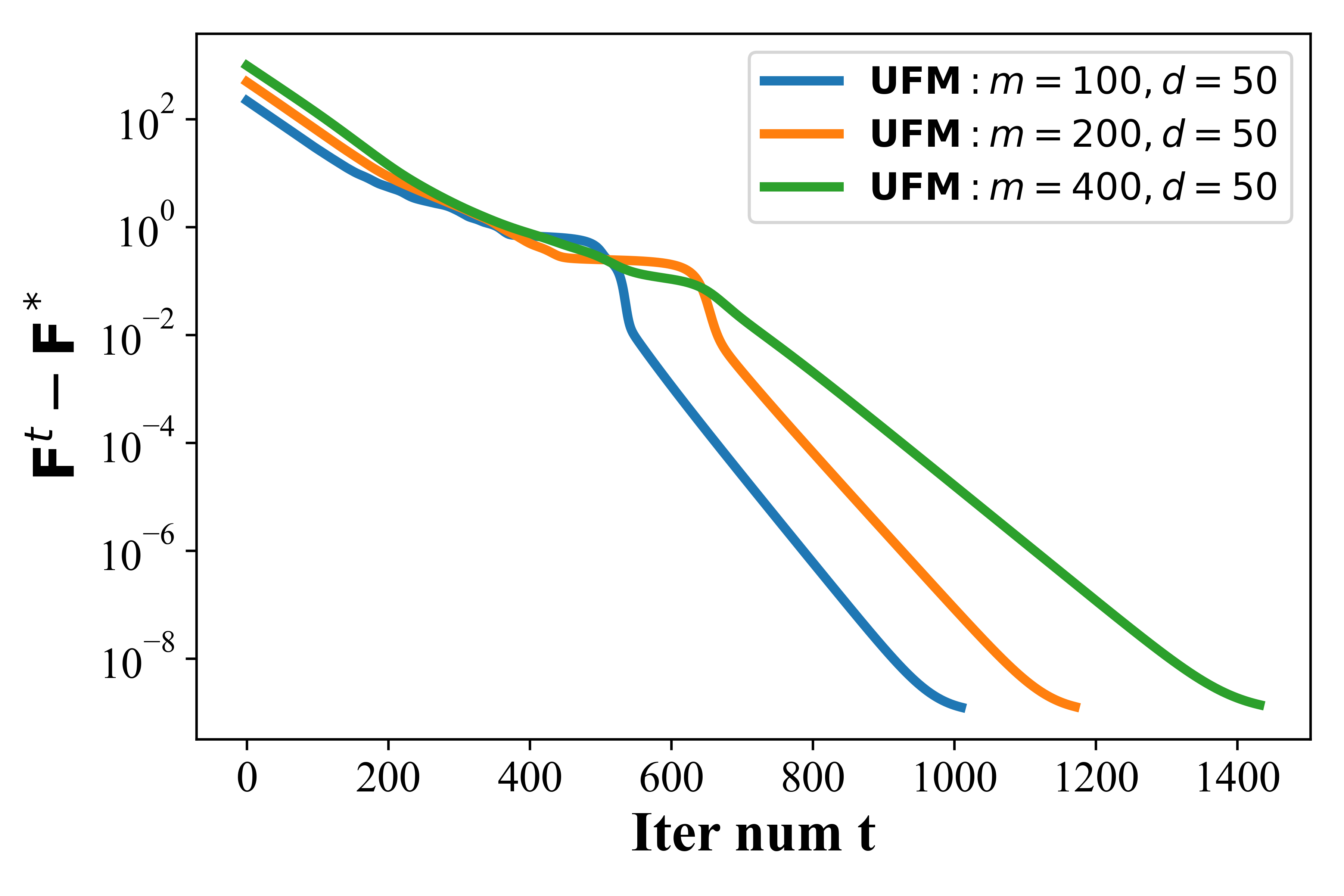}\vspace{-0.1in}
    \caption{Convergence for $d=50, m \in \{100,200,400\}$} 
    \end{subfigure}\vspace{-0.1in} 
    \caption{\textbf{Convergence performance of GD for solving the regularized MCR$^2$ problem.} Here, the $x$-axis is number of iterations (also denoted by $t$), and $y$-axis is the function value gap $F^t-F^*$, where $F^t=F(\bm Z^t)$ denotes the function value at the $t$-th iterate $\bm Z^t$ generated by GD, and $F^*$ is the optimal value of Problem \eqref{eq:MCR} computed according to \eqref{eq:Zk opti} in \Cref{thm:1}.}\vspace{-0.2in} 
    \label{fig:conv}
\end{center}
\end{figure*}

\subsection{Training Deep Networks Using Regularized MCR$^2$}\label{subsec:exp2}

In this subsection, we conduct numerical experiments on the image datasets MNIST \cite{lecun1998gradient} and CIFAR-10 \cite{krizhevsky2009learning} to provide evidence that our theory also applies to deep networks. More specifically, we employ a multi-layer perceptron network with ReLU activation as the feature mapping $\bm z = f_{\bm \Theta}(\bm x)$ with output dimension 32 for MNIST and 128 for CIFAR-10. Then, we train the network parameters $\bm \Theta$ via Adam \cite{kingma2014adam} by optimizing Problem \eqref{eq:MCR}.

\vspace{-0.1in}
 \paragraph{Experimental setting and results.} In the experiments, we randomly sample a balanced subset with $K$ classes and $m$ samples from MNIST or CIFAR-10, where each class has the same number of samples. We set $\lambda = 0.001$ and $\epsilon = 0.5$. For different subsets with corresponding values of $(m,K)$, we run experiments and report the function value $\hat{F}$ obtained by training deep networks and the optimal value $F^*$ computed using the closed-form solution in \Cref{thm:1} in \Cref{table:1}. To verify the discriminative nature of the features obtained by training deep networks across different classes, we measure the discrimination between features belonging to different classes by computing the cosine of the principal angle \citep{bjorck1973numerical} between the class subspaces: $s = \max\left\{ \|\bm U_k^T \bm U_l\|: k \neq l \in [K] \right\} \in [0,1]$, 
 where the columns of $\bm U_k \in \R^{d\times r_k}$ are the right singular vectors corresponding to the top $r_k$ singular values of $\bm Z_k$ defined in \eqref{eq:Zk} and $r_k$ is its rank\footnote{We estimate the rank of a matrix by rounding its ``stable rank'' \citep{horn2012matrix}: $r_{k} = \operatorname{round}(\|\bm{Z}_{k}\|_{F}^{2}/\|\bm{Z}_{k}\|^{2})$.} for each $k \in [K]$. In particular, when $s$ is smaller, the spaces spanned by each pair $\bm Z_k$ and $\bm Z_l$ for $k \neq l$ are closer to being orthogonal to each other. 
Then, we record the value $s$ in \Cref{table:1} in different settings. Moreover, we visualize the pairwise cosine similarities between learned features on MNIST and CIFAR-10 when $(m, K) = (1500, 6)$ and $(2500, 10)$ in \Cref{fig:4}. 
\begin{table}[!tbp]
\caption{Function value $\hat{F}$ obtained by training deep networks, the optimal value  $F^*$ computed by our theory on subsets of MNIST or CIFAR-10, and discrimination metric $s$ of features. }\label{table:1}
\vskip -0.1in
\begin{center}
\begin{tabular}{ccccc}
\toprule
& MNIST $(m,K)$ & $\hat{F}$ & $F^*$ & $s$ \\ 
\midrule
& (1000, 4) & 37.38 & 37.38 & $ 5.9 \cdot 10^{-6}$ \\
& (1500, 6) & 38.96 & 38.96 & $ 3.8 \cdot 10^{-6}$ \\ 
& (2000, 8) & 38.48 & 38.48 & 0.011 \\
& (2500, 10) &  37.41 & 37.41 & 0.008 \\ 
\bottomrule
\toprule
& CIFAR-10 $(m,K)$ & $\hat{F}$ & $F^*$ & $s$ \\ 
\midrule
& (1000, 4) & 215.61 & 215.61 & 0.004 \\
& (1500, 6) & 229.14 & 229.14 & 0.029 \\
& (2000, 8) & 230.70 & 230.70 & 0.059 \\
& (2500, 10) & 228.48 & 228.49 & 0.171 \\ 
\bottomrule
\end{tabular}
\end{center}
\vskip -0.1in
\end{table}

We observe from \Cref{table:1} that the function value returned by training deep networks is extremely close to the global optimal value of Problem \eqref{eq:MCR} and from the value $s$ and \Cref{fig:4} that the features from different classes are nearly orthogonal to each other. These observations, together with \Cref{thm:1,thm:2}, indicate that Problem \eqref{eq:MCR} retains its optimization properties even when $\bm{Z}$ is parameterized by a neural network. Our theoretical analysis of Problem \eqref{eq:MCR} thus illustrates a qualitative picture of training deep networks with the regularized MCR$^2$ objective.

\begin{figure*}[t]
\begin{center}
    \begin{subfigure}{0.24\textwidth}
    	\includegraphics[width = 1\linewidth]{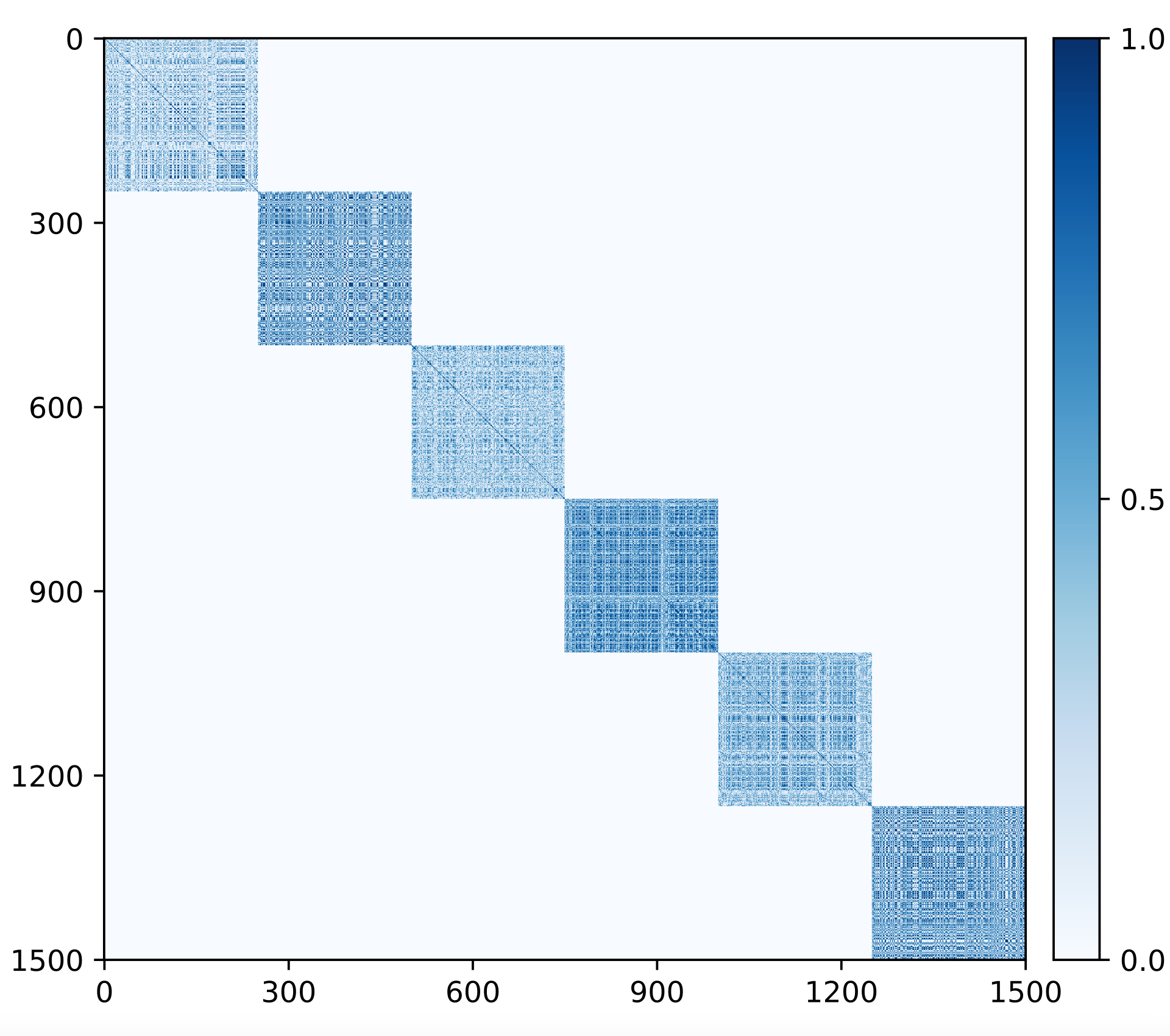}\vspace{-0.05in}
    \caption*{\footnotesize MNIST: $m=1500,K=6$} 
    \end{subfigure} 
    \begin{subfigure}{0.24\textwidth}
    	\includegraphics[width = 1\linewidth]{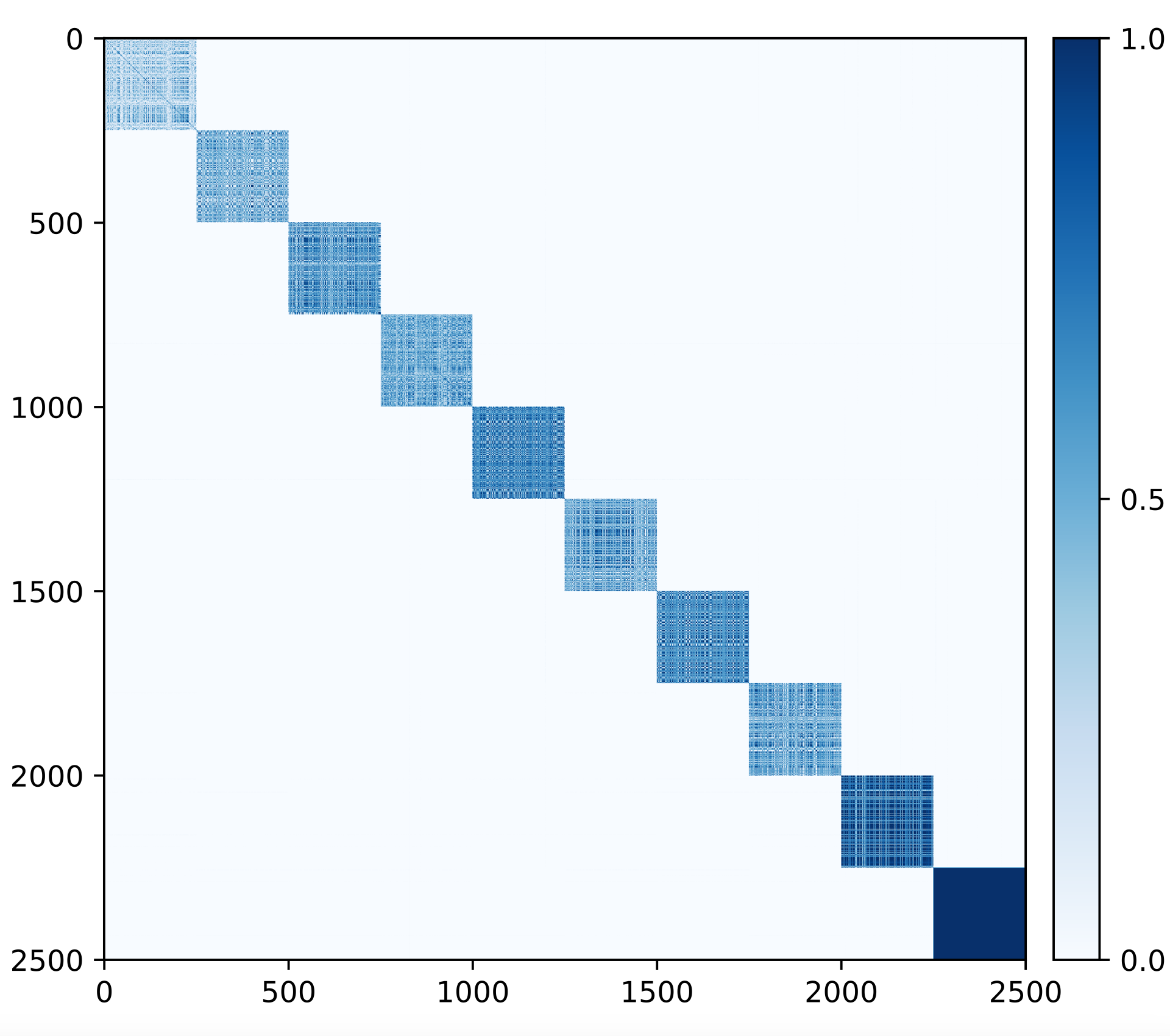}\vspace{-0.05in}
    \caption*{\footnotesize MNIST: $m=2500,K=10$} 
    \end{subfigure}\vspace{-0.15in} 
    \begin{subfigure}{0.24\textwidth}
    	\includegraphics[width = 1\linewidth]{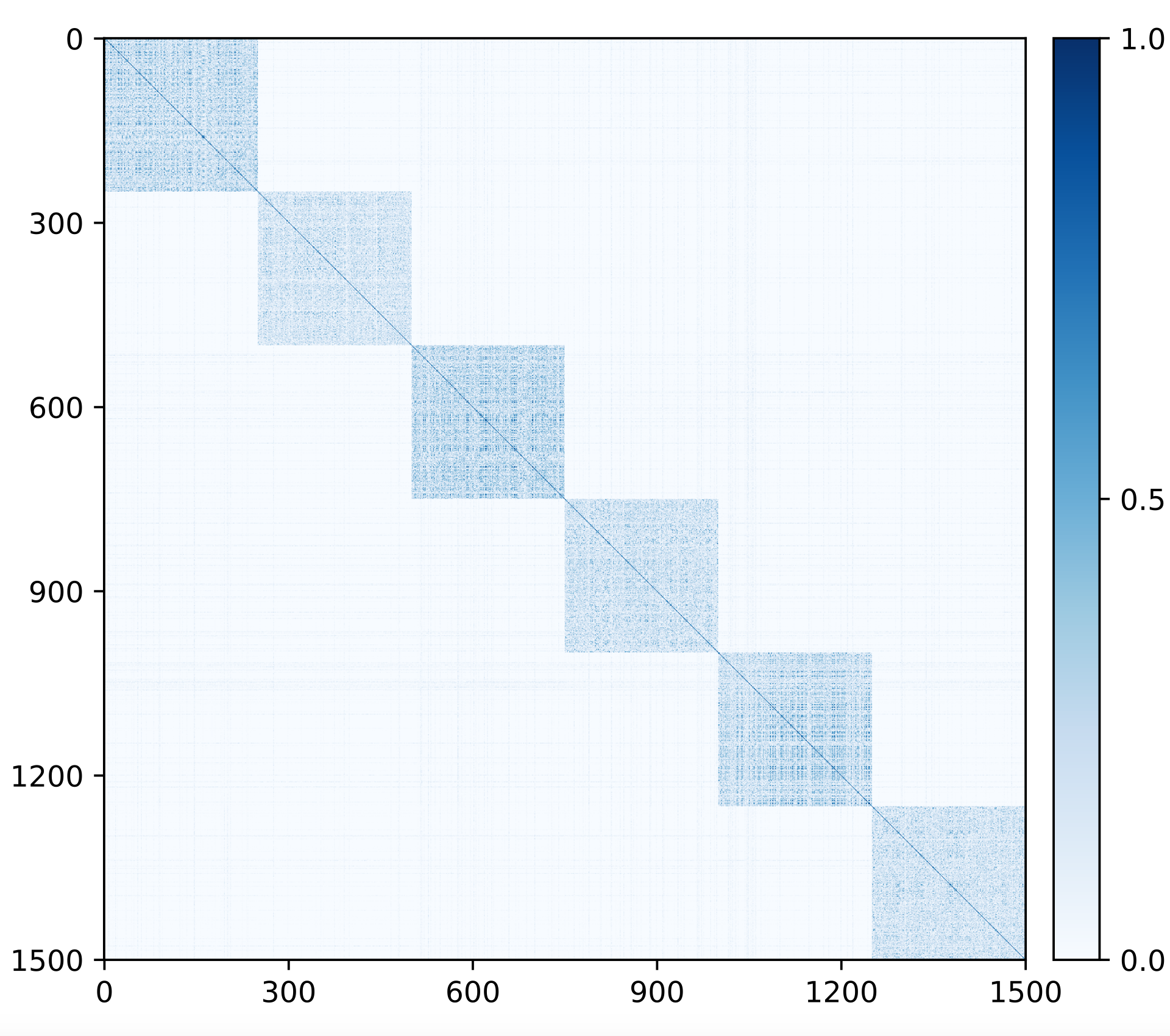}\vspace{-0.05in}
    \caption*{\footnotesize CIFAR: $m=1500,K=6$} 
    \end{subfigure} 
    \begin{subfigure}{0.24\textwidth}
    	\includegraphics[width = 1\linewidth]{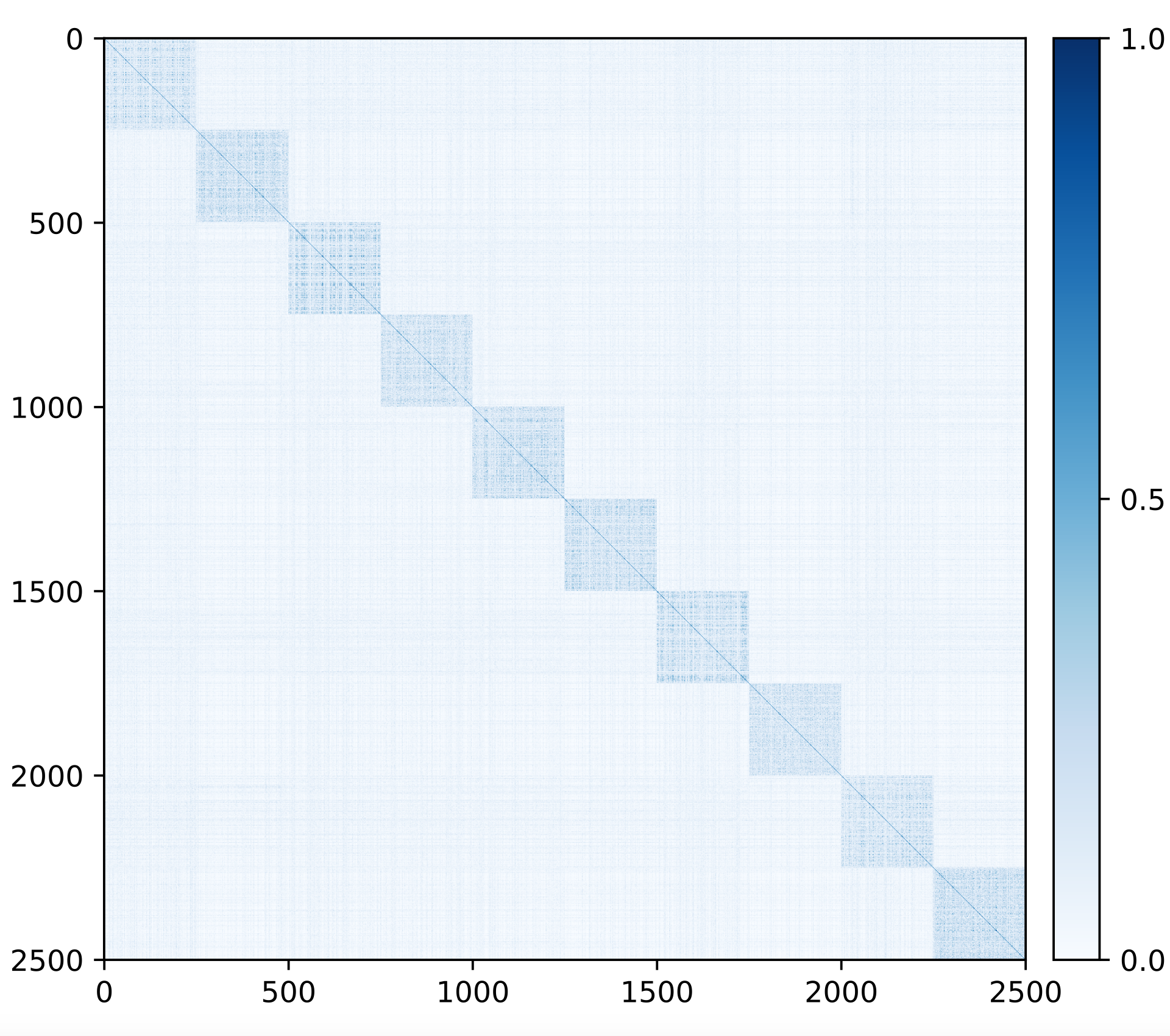}\vspace{-0.05in}
    \caption*{\footnotesize CIFAR: $m=2500,K=10$} 
    \end{subfigure} 
    \caption{\textbf{Heatmap of cosine similarity among features produced by deep networks trained on MNIST and CIFAR-10.} The darker pixels represent higher absolute cosine similarity between features.
    }
    \vspace{-0.2in} 
    \label{fig:4} 
\end{center}
\end{figure*}

\section{Conclusion}\label{sec:conc}

In this work, we provided a complete characterization of the global landscape of the MCR$^2$ objective, a highly nonconcave and nonlinear function used for representation learning. We characterized all critical points, including the local and global optima, of the MCR$^2$ objective, and showed that --- surprisingly --- it has a benign global optimization landscape. These characterizations provide rigorous justifications for why such an objective can be optimized well using simple algorithms such as gradient-based methods. In particular, we show that even local optima of the objective leads to geometrically meaningful representations. Our experimental results on synthetic and real-world datasets clearly support this new theoretical characterization. With the global landscape clearly revealed, our work paves the way for exploring better optimization strategies, hence better deep neural network architectures, for optimizing the MCR$^2$ objective more efficiently and effectively. For future work, it is natural to extend our analysis to Problem \eqref{eq:MCR1} with deep network parameterizations. It is also interesting to study the sparse MCR$^2$ objective, which has led to high-performance transformer-like architectures  \cite{yu2023sparse,yu2023white}.

\section*{Acknowledgements} 

The work of P.W. is supported in part by ARO YIP award W911NF1910027 and DoE award DE-SC0022186. The work of H.L. is supported in part by NSF China under Grant 12301403 and the Young Elite Scientists Sponsorship Program by CAST 2023QNRC001. 
The work of D.P. is supported by a UC Berkeley College of Engineering Fellowship. 
Y.Y. acknowledges support from the joint Simons Foundation-NSF DMS grant \#2031899.
Q.Q. acknowledges support from NSF CAREER CCF-2143904, NSF CCF-2212066, NSF CCF-2212326, NSF IIS 2312842, and Amazon AWS AI Award, and a gift grant from KLA. Z.Z. acknowledges support from NSF grants CCF-2240708 and IIS-2312840. Y.M. acknowledges support from the joint Simons Foundation-NSF DMS grant \#2031899, the ONR grant N00014-22-1-2102, and the University of Hong Kong.

\bibliographystyle{abbrvnat} 
\bibliography{MCR_reference}

\newpage 

\begin{appendix}
\begin{center}
{\LARGE \bf Supplementary Material}
\end{center} 
\setcounter{section}{0}
\renewcommand\thesection{\Alph{section}}

The organization of the supplementary material is as follows: In  \Cref{sec:grad}, we introduce preliminary setups and auxiliary results for studying the MCR$^2$ problem. Then, we prove the technical results concerning the global optimality of Problem \eqref{eq:MCR}  in \Cref{sec:pf opti} and the optimization landscape of Problem \eqref{eq:MCR} in  \Cref{sec:pf land}, respectively. In \Cref{sec:pf main}, we prove the main theorems in \Cref{thm:1}  and \Cref{thm:2}. Finally, we provide more experimental setups and results in \Cref{app sec:exp}. 

Besides the notions introduced earlier, we shall use $\mathrm{BlkDiag}(\bm X_1,\dots,\bm X_K)$ to denote the block diagonal matrix whose diagonal blocks are $\bm X_1,\dots,\bm X_K$. 

\section{Preliminaries}\label{sec:grad}

In this section, we first introduce the first-order optimality condition and the concept of a strict saddle point for $F(\cdot)$ in Problem \eqref{eq:MCR} in Section \ref{subsec:pre}, and finally present auxiliary results about matrix computations and properties of the log-determinant function in Section \ref{subsec:auxi}. Recall that $\bZ = \left[\bZ_1,\ldots,\bZ_K\right] \in \R^{d \times m}$ with $\bm Z_k \in \R^{d \times m_k}$ for each $k \in [K]$, and $\alpha,\alpha_k$ are defined in \eqref{eq:alp}. To simplify our development, we write $R_c(\bm Z, \bm \pi_k)$ in \eqref{eq:Rc1} as 
\begin{align}\label{eq:Rc}
R_c(\bm Z, \bm \pi_k) := \frac{m_k}{m}R_c(\bm Z_k),\ \text{where}\ R_c(\bm Z_k) := \frac{1}{2}\log\det\left( \bm I + \alpha_k\bm Z_k \bm Z_k^T \right). 
\end{align} 
Therefore, we can write $F(\bm Z)$ in Problem \eqref{eq:MCR} into
\begin{align}
    F(\bm Z) = R(\bm Z) - \sum_{k=1}^K \frac{m_k}{m} R_c(\bm Z_k) - \frac{\lambda}{2}\|\bm Z\|_F^2. 
\end{align}




\subsection{Optimality Conditions and Strict Saddle Points}\label{subsec:pre}

To begin, we compute the gradient and Hessian (in bilinear form along a direction $\bm D \in \R^{d \times m}$) of $R(\cdot)$ in \eqref{eq:R} as follows: 
\begin{align}
	\nabla R(\bm Z) &= \alpha \bm X^{-1}\bm Z, \label{eq:grad R}\\
	\nabla^2 R(\bm Z)[\bm D,\bm D] &= \alpha \langle \bX^{-1}, \bm D\bm D^T \rangle - \frac{\alpha^2}{2} \mathrm{Tr}\left(\bm X^{-1} (\bm Z\bm D^T + \bm D\bm Z^T )\bm X^{-1}(\bm Z\bm D^T + \bm D\bm Z^T )\right),\label{eq:Hess R} 
\end{align}
where $\bm X := \bm I_d + \alpha \bm Z\bm Z^T$ and $\alpha$ is defined in \eqref{eq:alp}. Note that we can compute the gradient and Hessian of $R_c(\cdot)$ in \eqref{eq:Rc} using the same approach. Based on the above setup, we define the first-order optimality condition of Problem \eqref{eq:MCR} as follows. 

\begin{definition}\label{def:FONC}
We say that $\bm Z \in \R^{d \times m}$ is a critical point of Problem \eqref{eq:MCR} if $\nabla F(\bm Z) = \bm 0$, i.e.,
\begin{align}\label{eq:FONC} 
  \alpha(\bm I + \alpha \bm Z\bm Z^T)^{-1}\bm Z_k - \alpha(\bm I + \alpha_k \bm Z_k\bm Z_k^T)^{-1}\bm Z_k - \lambda \bm Z_k  = \bm{0},\ \forall k \in [K], 
\end{align}
where $\alpha$ and $\alpha_k$ are defined in \eqref{eq:alp}. 
\end{definition}

According to \citet{jin2017escape,lee2019first}, we define the strict saddle point, i.e., a critical point that has a direction with strictly positive curvature\footnote{Note that Problem \eqref{eq:MCR} is not a minimization problem but a maximization problem.}, of Problem \eqref{eq:MCR} as follows: 
\begin{definition}\label{def:saddle}
Suppose that $\bZ \in \R^{d\times m}$ is a critical point of Problem \eqref{eq:MCR}. We say that $\bZ$ is its strict saddle point  if there exists a direction $\bD = \left[\bD_1,\ldots,\bD_K\right] \in \R^{d \times m}$ with $\bm D_k \in \R^{d \times m_k}$ such that 
\begin{align*}
\nabla^2 F(\bZ)[\bD ,\bD ] > 0,
\end{align*} 
where 
\begin{align}\label{eq:Hess}
\nabla^2 F(\bZ)[\bD ,\bD] = \nabla^2 R(\bm Z)[\bm D,\bm D] - \sum_{k=1}^K \frac{m_k}{m}\nabla^2 R_c(\bm Z_k)[\bm D_k, \bm D_k] - \lambda\|\bm D\|_F^2.   
\end{align}
\end{definition}
Remark that for the MCR$^2$ problem, strict saddle points include saddle points with strictly positive curvature as well as local maximizers. 

\subsection{Auxiliary Results}\label{subsec:auxi}

We provide a matrix inversion lemma, which is also known as Sherman–Morrison–Woodbury formula. 

\begin{lemma}[Matrix inversion lemma] \label{lem:matrix inv}
For any $\bm Z \in \R^{d\times m}$, we have 
\begin{align}\label{eq:matrix inv Z}
(\bI + \alpha\bZ\bZ^T)^{-1} = \bI - \bZ\left(\frac{1}{\alpha}\bI + \bZ^T\bZ \right)^{-1}\bZ^T. 
\end{align}
\end{lemma}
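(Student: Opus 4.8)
The plan is to verify the identity \eqref{eq:matrix inv Z} directly: exhibit the claimed right-hand side as a one-sided inverse of $\bI + \alpha\bZ\bZ^T$, and conclude it is \emph{the} inverse since that matrix is square and invertible. First I would check that both inverses appearing in \eqref{eq:matrix inv Z} are well-defined. Since $\alpha>0$ and $\bZ^T\bZ\succeq\bm 0$, we have $\tfrac1\alpha\bI+\bZ^T\bZ\succeq\tfrac1\alpha\bI\succ\bm 0$, so it is invertible; likewise $\bI+\alpha\bZ\bZ^T\succ\bm 0$ is invertible. (Here the two identity matrices live in different dimensions, $m$ and $d$ respectively, but nothing else changes.)

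The core is a short algebraic manipulation. Writing $\bm M:=\tfrac1\alpha\bI+\bZ^T\bZ$, I would expand
\begin{align*}
\bigl(\bI-\bZ\bm M^{-1}\bZ^T\bigr)\bigl(\bI+\alpha\bZ\bZ^T\bigr)
&= \bI+\alpha\bZ\bZ^T-\bZ\bm M^{-1}\bZ^T-\alpha\bZ\bm M^{-1}\bZ^T\bZ\bZ^T\\
&= \bI+\alpha\bZ\bZ^T-\bZ\bm M^{-1}\bigl(\bI+\alpha\bZ^T\bZ\bigr)\bZ^T.
\end{align*}
Now observe $\bI+\alpha\bZ^T\bZ=\alpha\bm M$, so $\bm M^{-1}\bigl(\bI+\alpha\bZ^T\bZ\bigr)=\alpha\bI$, and the last term becomes $\alpha\bZ\bZ^T$. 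The two $\alpha\bZ\bZ^T$ terms cancel, leaving $\bI$. Hence $\bI-\bZ\bm M^{-1}\bZ^T$ is a right inverse of the invertible square matrix $\bI+\alpha\bZ\bZ^T$, so it equals its (two-sided) inverse, which is exactly \eqref{eq:matrix inv Z}.

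There is essentially no obstacle here: the statement is a special case of the Sherman--Morrison--Woodbury formula (take $A=\bI$, $U=\bZ$, $C=\alpha\bI$, $V=\bZ^T$), and the only points requiring a word of care are the well-definedness of the smaller inverse $(\tfrac1\alpha\bI+\bZ^T\bZ)^{-1}$ and correctly factoring $\bZ$ and $\bZ^T$ out of the last two terms so the cancellation is transparent. I would present the direct verification above rather than merely citing Woodbury, so that the lemma is self-contained.
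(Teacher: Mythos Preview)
Your proof is correct. The paper does not actually prove this lemma; it simply states it and identifies it as the Sherman--Morrison--Woodbury formula, treating it as a standard auxiliary result. Your direct verification is therefore more self-contained than the paper's own treatment. One minor terminology slip: after computing $\bigl(\bI-\bZ\bm M^{-1}\bZ^T\bigr)\bigl(\bI+\alpha\bZ\bZ^T\bigr)=\bI$, you have exhibited $\bI-\bZ\bm M^{-1}\bZ^T$ as a \emph{left} inverse of $\bI+\alpha\bZ\bZ^T$, not a right inverse as you wrote, but the conclusion is of course unaffected.
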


We next present the commutative property for the log-determinant function and the upper bound for the coding rate function. We refer the reader to \citep[Lemma 8 \& Lemma 10]{chan2022redunet} for the detailed proofs. Here, let $\bm Z = \bm U\bm \Sigma\bm V^T$ be a singular value decompositon of $\bm Z \in \R^{d\times m}$, where $r = \mathrm{rank}(\bm Z) \le \min\{m,d\}$, $\bm U \in \mathcal{O}^{d\times r}$, $\bm \Sigma \in \R^{r\times r}$ is a diagonal matrix, and $\bm V \in \mathcal{O}^{m \times r}$. 

\begin{lemma}[Commutative property]\label{lem:commu} 
For any $\bm Z \in \R^{d\times m}$ and $\alpha  > 0$, we have 
\begin{align}\label{eq:commu}
\frac{1}{2}\log \det\left(\bm I + \alpha\bm Z\bm Z^T \right) = \frac{1}{2}\log \det\left(\bm I + \alpha\bm Z^T\bm Z \right) = \frac{1}{2}\log\det\left(\bm I + \alpha \bm \Sigma^2\right). 
\end{align}

\end{lemma}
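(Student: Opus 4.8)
The plan is to reduce both equalities to a single elementary determinant identity. First I would invoke Sylvester's determinant identity (the Weinstein--Aronszajn identity): for any $\bm A \in \R^{d\times m}$ and $\bm B \in \R^{m\times d}$, $\det(\bm I_d + \bm A\bm B) = \det(\bm I_m + \bm B\bm A)$. Taking $\bm A = \alpha\bm Z$ and $\bm B = \bm Z^T$ gives $\det(\bm I_d + \alpha\bm Z\bm Z^T) = \det(\bm I_m + \alpha\bm Z^T\bm Z)$, and applying $\frac{1}{2}\log(\cdot)$ to both sides yields the first equality.

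For the second equality I would use the (thin) singular value decomposition $\bm Z = \bm U\bm\Sigma\bm V^T$ with $\bm U \in \calO^{d\times r}$, $\bm V \in \calO^{m\times r}$, $r = \mathrm{rank}(\bm Z)$, and $\bm\Sigma = \mathrm{diag}(\sigma_1,\dots,\sigma_r)$. Then $\bm Z^T\bm Z = \bm V\bm\Sigma^2\bm V^T$, so $\bm I_m + \alpha\bm Z^T\bm Z$ is similar to a diagonal matrix with entries $1 + \alpha\sigma_i^2$ for $i \in [r]$ and $1$ in the remaining $m-r$ coordinates; hence its determinant equals $\prod_{i=1}^r(1 + \alpha\sigma_i^2) = \det(\bm I_r + \alpha\bm\Sigma^2)$. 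Taking $\frac{1}{2}\log(\cdot)$ proves the claim. Symmetrically, $\bm Z\bm Z^T = \bm U\bm\Sigma^2\bm U^T$ gives the same value directly, which also re-derives the first equality without appealing to Sylvester's identity.

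There is no real obstacle here --- this is a textbook linear-algebra fact, and it is precisely \citep[Lemma 8]{chan2022redunet}. The only point requiring any care is the bookkeeping between the thin SVD and the ambient dimensions $d$ and $m$: one must remember to account for the trivial eigenvalue $1$ contributed by the kernels of $\bm Z\bm Z^T$ and $\bm Z^T\bm Z$, so that the determinants computed over $\R^{d\times d}$, $\R^{m\times m}$, and $\R^{r\times r}$ all coincide.
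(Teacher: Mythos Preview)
Your proposal is correct; the paper does not supply its own proof of this lemma but simply cites \citep[Lemma~8]{chan2022redunet}, and your Sylvester/SVD argument is exactly the standard derivation one would find there.
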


\begin{lemma}\label{lem:MCR}
Let $\bm Z = \left[\bm Z_1,\dots,\bm Z_K \right] \in \R^{d\times m}$. Given $\alpha > 0$, it holds that 
\begin{align}\label{eq:MCR upper}
\log \det\left(\bm I + \alpha\bm Z\bm Z^T \right) \le \sum_{k=1}^K \log \det\left(\bm I + \alpha\bm Z_k\bm Z_k^T \right),
\end{align}
where the equality holds if and only if $\bm Z_k^T\bm Z_l = \bm 0$ for all $k \neq l \in [K]$. 
\end{lemma}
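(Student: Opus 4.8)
\textbf{Proof proposal for Lemma~\ref{lem:MCR}.}

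The plan is to reduce the claimed determinant inequality to a statement about block structure of a positive semidefinite matrix, and then invoke concavity of $\log\det$ on the cone of PSD matrices together with a sharp characterization of the equality case. First I would apply the commutative property (\Cref{lem:commu}) to pass from the $d\times d$ matrices $\bm I + \alpha \bm Z\bm Z^T$ and $\bm I + \alpha\bm Z_k\bm Z_k^T$ to the $m\times m$ matrix $\bm I_m + \alpha\bm Z^T\bm Z$ and the $m_k\times m_k$ blocks $\bm I_{m_k} + \alpha\bm Z_k^T\bm Z_k$. The key observation is that $\bm Z^T\bm Z$ is an $m\times m$ Gram matrix whose $(k,l)$ block is exactly $\bm Z_k^T\bm Z_l$, so $\bm I_m + \alpha\bm Z^T\bm Z$ is a matrix whose diagonal blocks are precisely $\bm I_{m_k} + \alpha\bm Z_k^T\bm Z_k$. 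Thus the inequality $\log\det(\bm I_m + \alpha\bm Z^T\bm Z) \le \sum_{k=1}^K \log\det(\bm I_{m_k} + \alpha\bm Z_k^T\bm Z_k)$ is an instance of Fischer's inequality (the Hadamard–Fischer determinant inequality for block partitions of a positive definite matrix).

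The core step is therefore to prove Fischer's inequality for a positive definite matrix $\bm M = \bm I_m + \alpha \bm Z^T\bm Z$ partitioned into blocks $\bm M_{kl} = \alpha\bm Z_k^T\bm Z_l + \delta_{kl}\bm I_{m_k}$. I would do this by induction on $K$, reducing to the two-block case, where one uses the Schur complement identity $\det\bm M = \det(\bm M_{11})\det(\bm M_{22} - \bm M_{21}\bm M_{11}^{-1}\bm M_{12})$ together with the fact that the Schur complement $\bm M_{22} - \bm M_{21}\bm M_{11}^{-1}\bm M_{12} \preceq \bm M_{22}$ (because $\bm M_{21}\bm M_{11}^{-1}\bm M_{12} \succeq \bm 0$), and monotonicity of $\det$ on PSD matrices. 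Alternatively, and perhaps more cleanly, one can invoke concavity of $\log\det$: writing $\bm M$ as an average of block-diagonal conjugates of itself (the standard averaging-over-sign-flips trick that replaces off-diagonal blocks by zero), one gets $\log\det\bm M \le \frac{1}{2^?}\sum \log\det(\text{block-diagonal pieces}) = \log\det(\mathrm{BlkDiag}(\bm M_{11},\dots,\bm M_{KK}))$, and the right side factors as $\sum_k \log\det\bm M_{kk}$. Either route is short; since the lemma cites \citep[Lemma 10]{chan2022redunet}, I would simply present the cleanest of these.

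For the equality case, I would argue that equality in the Schur-complement bound at each stage of the induction forces $\bm M_{21}\bm M_{11}^{-1}\bm M_{12} = \bm 0$, hence (since $\bm M_{11}^{-1}\succ\bm 0$) $\bm M_{12} = \bm 0$, i.e., $\bm Z_k^T\bm Z_l = \bm 0$ for the corresponding block pair; running through all pairs gives $\bm Z_k^T\bm Z_l = \bm 0$ for all $k\neq l$. Conversely, if all cross terms vanish then $\bm I_m + \alpha\bm Z^T\bm Z$ is exactly block-diagonal and its determinant is the product of the block determinants, giving equality. I do not expect a genuine obstacle here: the only mild subtlety is bookkeeping in the induction to ensure that ``equality overall'' propagates to ``equality at every split,'' which follows because each inequality in the telescoped chain is individually an inequality, so the sum is tight only if each term is. The result then transfers back to the $\bm Z_k\bm Z_k^T$ form via \Cref{lem:commu}, noting $\bm Z_k^T\bm Z_l = \bm 0 \iff \bm Z_k^T\bm Z_l = \bm 0$ is already the stated condition.
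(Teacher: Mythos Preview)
Your proposal is correct. The passage from $\bm I_d + \alpha\bm Z\bm Z^T$ to $\bm I_m + \alpha\bm Z^T\bm Z$ via \Cref{lem:commu}, the identification of the diagonal blocks of the latter with $\bm I_{m_k} + \alpha\bm Z_k^T\bm Z_k$, and the appeal to Fischer's inequality (either through Schur complements or the sign-flip averaging argument with concavity of $\log\det$) are all valid, as is your equality-case analysis.

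Regarding comparison: the paper does not actually supply a proof of this lemma; it simply refers the reader to \citep[Lemma 10]{chan2022redunet}. So there is no ``paper's own proof'' to compare against here, and your argument stands on its own as a self-contained justification. The Schur-complement route you outline is essentially the standard proof of Fischer's inequality and is what one would expect the cited reference to contain as well.
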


Finally, we show that the objective function of Problem \eqref{eq:MCR} is invariant under the block diagonal orthogonal matrices.  
\begin{lemma}\label{lem:equi}
For any $\bm O = \mathrm{BlkDiag}\left( \bm O_1,\dots,\bm O_K \right)$, where $\bm O_k \in \mathcal{O}^{m_k}$ for each $k \in [K]$, we have 
\begin{align}\label{eq:equi}
F(\bm Z\bm O) = F(\bm Z),\quad \nabla F(\bm Z\bm O) = \nabla F(\bm Z)\bm O,\quad \nabla^2 F(\bm Z\bm O)[\bm D\bm O, \bm D\bm O] = \nabla^2 F(\bm Z)[\bm D, \bm D]. 
\end{align}
\end{lemma}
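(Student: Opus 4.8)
The plan is to reduce everything to the first identity $F(\bm Z\bm O) = F(\bm Z)$ and then obtain the gradient and Hessian identities by formally differentiating this equality of functions. The key structural observation is that, since $\bm O = \mathrm{BlkDiag}(\bm O_1,\dots,\bm O_K)$ with each $\bm O_k \in \mathcal{O}^{m_k}$ a square orthogonal matrix, $\bm O$ is itself an $m\times m$ orthogonal matrix ($\bm O\bm O^T = \bm O^T\bm O = \bm I_m$), and right multiplication by $\bm O$ respects the class partition: the $k$-th block of $\bm Z\bm O$ is exactly $\bm Z_k\bm O_k$, and likewise the $k$-th block of $\bm D\bm O$ is $\bm D_k\bm O_k$. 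For the value invariance I would compute $F(\bm Z\bm O)$ term by term using the form $F(\bm Z) = R(\bm Z) - \sum_{k=1}^K \frac{m_k}{m}R_c(\bm Z_k) - \frac{\lambda}{2}\|\bm Z\|_F^2$ from \eqref{eq:Rc}: since $(\bm Z\bm O)(\bm Z\bm O)^T = \bm Z\bm O\bm O^T\bm Z^T = \bm Z\bm Z^T$ we get $R(\bm Z\bm O) = R(\bm Z)$; since $(\bm Z_k\bm O_k)(\bm Z_k\bm O_k)^T = \bm Z_k\bm Z_k^T$ we get $R_c(\bm Z_k\bm O_k) = R_c(\bm Z_k)$ for each $k$; and $\|\bm Z\bm O\|_F^2 = \mathrm{Tr}(\bm O^T\bm Z^T\bm Z\bm O) = \mathrm{Tr}(\bm Z^T\bm Z) = \|\bm Z\|_F^2$. (Alternatively one can invoke Lemma~\ref{lem:commu}: the log-determinants depend only on the singular values of $\bm Z$ and of each $\bm Z_k$, which are invariant under right multiplication by an orthogonal matrix.) Summing gives $F(\bm Z\bm O) = F(\bm Z)$.

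\textbf{Gradient.} There are two equivalent routes. Directly: substituting $\bm Z\mapsto\bm Z\bm O$ (hence $\bm Z_k\mapsto\bm Z_k\bm O_k$) into the first-order expression in \eqref{eq:FONC} and using $(\bm I + \alpha\bm Z\bm O(\bm Z\bm O)^T)^{-1} = (\bm I + \alpha\bm Z\bm Z^T)^{-1}$ together with $(\bm I + \alpha_k\bm Z_k\bm O_k(\bm Z_k\bm O_k)^T)^{-1} = (\bm I + \alpha_k\bm Z_k\bm Z_k^T)^{-1}$, the $k$-th block of $\nabla F(\bm Z\bm O)$ equals (the $k$-th block of $\nabla F(\bm Z)$) $\cdot\,\bm O_k$, which is exactly the $k$-th block of $\nabla F(\bm Z)\bm O$ by block-diagonality of $\bm O$. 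Alternatively, differentiate the identity $F(\bm Z\bm O) = F(\bm Z)$ along an arbitrary direction $\bm D$: $\langle\nabla F(\bm Z\bm O), \bm D\bm O\rangle = \frac{d}{dt}\big|_{t=0}F\bigl((\bm Z+t\bm D)\bm O\bigr) = \frac{d}{dt}\big|_{t=0}F(\bm Z+t\bm D) = \langle\nabla F(\bm Z),\bm D\rangle$; rewriting the left side as $\langle\nabla F(\bm Z\bm O)\bm O^T,\bm D\rangle$ and letting $\bm D$ range over all of $\R^{d\times m}$ yields $\nabla F(\bm Z\bm O)\bm O^T = \nabla F(\bm Z)$, i.e., $\nabla F(\bm Z\bm O) = \nabla F(\bm Z)\bm O$.

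\textbf{Hessian.} This now follows with no further work: using the identification $\nabla^2 F(\bm Y)[\bm E,\bm E] = \frac{d^2}{dt^2}\big|_{t=0}F(\bm Y+t\bm E)$ (consistent with the bilinear form in \eqref{eq:Hess}),
\begin{align*}
\nabla^2 F(\bm Z\bm O)[\bm D\bm O,\bm D\bm O]
&= \left.\frac{d^2}{dt^2}\right|_{t=0}F(\bm Z\bm O + t\bm D\bm O) = \left.\frac{d^2}{dt^2}\right|_{t=0}F\bigl((\bm Z+t\bm D)\bm O\bigr) \\
&= \left.\frac{d^2}{dt^2}\right|_{t=0}F(\bm Z+t\bm D) = \nabla^2 F(\bm Z)[\bm D,\bm D],
\end{align*}
where the third equality applies the value invariance at $\bm Y = \bm Z + t\bm D$. (One could instead substitute $\bm Z\mapsto\bm Z\bm O$ and $\bm D\mapsto\bm D\bm O$ into the explicit Hessian formulas \eqref{eq:Hess R}--\eqref{eq:Hess} and simplify the traces using $\bm O\bm O^T = \bm I_m$, but this is more laborious and unnecessary.)

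\textbf{Main obstacle.} There is no genuine difficulty here; the proof is essentially bookkeeping. The only points requiring care are (i) confirming that $\bm O$, being block-diagonal with square orthogonal blocks, is a true $m\times m$ orthogonal matrix and that right multiplication preserves the class-block structure, so that $\bm D\bm O$ has blocks $\bm D_k\bm O_k$; and (ii) tracking left- versus right-multiplication (and $\bm O$ versus $\bm O^T$) when transposing the inner-product identity to extract the gradient. Treating the gradient and Hessian identities as formal derivatives of the value identity, rather than recomputing them from the explicit formulas \eqref{eq:grad R}--\eqref{eq:Hess}, keeps the argument short and transparent.
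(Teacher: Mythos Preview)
Your proposal is correct. The paper's proof takes exactly the direct-substitution route you mention as an alternative: it plugs $\bm Z\bm O$ into the explicit formulas \eqref{eq:grad R} and \eqref{eq:Hess R} for each of $R$, $R_c$, and the quadratic term, and reads off the identities using $(\bm Z\bm O)(\bm Z\bm O)^T=\bm Z\bm Z^T$. Your preferred route---differentiating the value identity $F((\bm Z+t\bm D)\bm O)=F(\bm Z+t\bm D)$ in $t$ once and twice---is a genuinely cleaner argument, since it derives the gradient and Hessian statements without ever touching the explicit expressions, and it would continue to work verbatim for any smooth $F$ with this invariance; the paper's approach, by contrast, requires knowing and manipulating the specific formulas. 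Either way the content is routine bookkeeping, and both arguments are short and valid.
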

\begin{proof}[Proof of \Cref{lem:equi}]
Let $\bm O_k \in \mathcal{O}^{m_k}$ be arbitrary for each $k \in [K]$ and $\bm O = \mathrm{BlkDiag}\left( \bm O_1,\dots,\bm O_K \right)$. According to \eqref{eq:R} and \eqref{eq:Rc}, we have $R(\bm Z\bm O) = R(\bm Z)$ and $R_c(\bm Z_k\bm O_k) = R(\bm Z_k)$. This, together with \eqref{eq:MCR}, yields that $F(\bm Z\bm O) = F(\bm Z)$. Moreover, it follows from \eqref{eq:grad R} that $\nabla R(\bm Z\bm O) = \nabla R(\bm Z)\bm O$ and $\nabla R_c(\bm Z_k\bm O_k) = \nabla R_c(\bm Z_k)\bm O_k$. This implies $\nabla F(\bm Z\bm O) = \nabla F(\bm Z)\bm $. Finally, using \eqref{eq:Hess R}, we have $\nabla^2 R(\bm Z\bm O)[\bm D\bm O,\bm D\bm O] = \nabla^2 R(\bm Z)[\bm D,\bm D]$ and $\nabla^2 R_c(\bm Z_k\bm O_k)[\bm D_k\bm O_k,\bm D_k\bm O_k] = \nabla^2 R(\bm Z_k)[\bm D_k,\bm D_k]$. This, together with \eqref{eq:Hess}, implies $\nabla^2 F(\bm Z\bm O)[\bm D\bm O, \bm D\bm O] = \nabla^2 F(\bm Z)[\bm D, \bm D]$. 
\end{proof}

\section{Proofs in \Cref{subsec:pf opti}}\label{sec:pf opti}

\subsection{Proof of \Cref{lem:upper bound}}

\begin{proof}[Proof of Lemma \ref{lem:upper bound}]
It follows from \eqref{eq:MCR upper} in \Cref{lem:MCR} that 
\begin{align}
 \log \det\left( \bm I_d + \alpha \bm Z \bm Z^T \right) \le \sum_{k=1}^K \log\det\left( \bm I_d + \alpha \bm Z_k \bm Z_k^T \right),
\end{align}
where the equality holds if and only if  $\bm Z_k^T\bm Z_l = \bm 0$ for all $1 \le k \neq l \le K$. Substituting this into \eqref{eq:MCR} directly yields \eqref{eq:F upper}. 
\end{proof}

\subsection{Proof of \Cref{prop:set Z}}

\begin{proof}[Proof of \Cref{prop:set Z}]
Let $\bm Z \in \cal Z$ be arbitrary, where $\mathcal{Z}$ is defined in \eqref{eq:set Z}. It follows from $\bZ = \left[\bZ_1,\ldots,\bZ_K\right] \in \R^{d \times m}$ that $\sum_{k=1}^K r_k \le d$. According to Lemma \ref{lem:upper bound} and  $\bm Z_k^T\bm Z_l = \bm 0$ for all $k \neq l$ due to $\bm Z \in \cal Z$, we have $F(\bm Z) = \sum_{k=1}^K f_k(\bm Z_k)$, where $f_k:\R^{d\times m_k}\rightarrow \R$ takes the form of
\begin{align}\label{eq:fk}
f_k(\bm Z_k) :=  \frac{1}{2} \log\det\left( \bm I_d + \alpha \bm Z_k \bm Z_k^T \right) - \frac{m_k}{2m} \log\det\left( \bm I_d + \alpha_k \bm Z_k \bm Z_k^T \right) - \frac{\lambda}{2} \|\bm Z_k\|_F^2. 
\end{align}
This, together with \eqref{eq:grad R}, yields that $\nabla F(\bm Z) = \bm 0$ is equivalent to
\begin{align}\label{eq1:prop set Z}
  \alpha\left(\bm I_d + \alpha \bm Z_k\bm Z_k^T \right)^{-1}\bm Z_k - \alpha\left(\bm I_d + \alpha_k \bm Z_k\bm Z_k^T \right)^{-1}\bm Z_k = \lambda \bm Z_k,\ \forall k \in [K]. 
\end{align}
Obviously, $\bm Z_k = \bm 0$ is a solution of the above equation for each $k \in [K]$, which satisfies $\bm Z_k^T\bm Z_l = \bm 0$ for all $l \neq k$. Now, we consider $\bm Z_k \neq \bm 0$, and thus $1 \le r_k = \mathrm{rank}(\bm Z_k) \le \min\{m_k,d\}$.  Let
\begin{align}\label{eq:SVD Zk}
\bm Z_k = \bP_k\bm{\Sigma}_k\bQ_k^T = \begin{bmatrix}
\bP_{k,1} & \bP_{k,2} 
\end{bmatrix} \begin{bmatrix}
\tilde{\bm{\Sigma}}_k & \bm{0} \\
\bm{0} & \bm{0} 
\end{bmatrix} \begin{bmatrix}
\bQ_{k,1}^T \\ \bQ_{k,2}^T 
\end{bmatrix}
\end{align}
be a singular value decomposition (SVD) of $\bm Z_k \in \R^{d\times m_k}$, where $\tilde{\bm{\Sigma}}_k = \mathrm{diag}(\sigma_{k,1},\dots,\sigma_{k,r_k})$ with $\sigma_{k,1}\ge\dots\ge \sigma_{k,r_k} > 0$ being positive singular values of $\bZ_k$, $\bP_k \in \mO^d$ with $\bP_{k,1} \in \R^{d \times r_k}$ and $\bP_{k,2} \in \R^{d \times (d-r_k)}$, and $\bm{Q}_k \in \mO^{m_k}$ with $\bQ_{k,1} \in \R^{m_k\times r_k}$ and $\bQ_{k,2} \in \R^{m_k\times (m_k-r_k)}$. Substituting this SVD into \eqref{eq1:prop set Z} yields for all $k \in [K]$,  
\begin{align*}
  \alpha \bm P_k (\bI_d + \alpha\bm{\Sigma}_k\bm{\Sigma}_k^T)^{-1} \bm{\Sigma}_k \bm Q_k^T - \alpha \bm P_k (\bI_d + \alpha_k\bm{\Sigma}_k\bm{\Sigma}_k^T)^{-1} \bm{\Sigma}_k \bm Q_k^T  =  \lambda \bm P_k\bm{\Sigma}_k\bQ_k^T,
\end{align*}
which is equivalent to
\begin{align*}
\alpha  (\bI_d + \alpha\bm{\Sigma}_k\bm{\Sigma}_k^T)^{-1} \bm{\Sigma}_k  - \alpha  (\bI_d + \alpha_k\bm{\Sigma}_k\bm{\Sigma}_k^T)^{-1} \bm{\Sigma}_k  =  \lambda \bm{\Sigma}_k. 
\end{align*}
Using $\bm{\Sigma}_k = \mathrm{BlkDiag}(\tilde{\bm \Sigma}_k,\bm 0)$, we further obtain
\begin{align*}
\alpha  (\bI_{r_k} + \alpha\tilde{\bm{\Sigma}}_k^2)^{-1} \tilde{\bm{\Sigma}}_k  - \alpha  (\bI_{r_k} + \alpha_k\tilde{\bm{\Sigma}}_k^2)^{-1} \tilde{\bm{\Sigma}}_k =  \lambda \tilde{\bm{\Sigma}}_k. 
\end{align*}
Since $\tilde{\bm{\Sigma}}_k$ is a diagonal matrix with diagonal entries being positive, we have for all $k \in [K]$,  
\begin{align}\label{eq2:prop set Z}
(\bI_{r_k} + \alpha\tilde{\bm{\Sigma}}_k^2)^{-1} - (\bI_{r_k} + \alpha_k\tilde{\bm{\Sigma}}_k^2)^{-1} = \frac{\lambda}{\alpha}\bm I_{r_k}. 
\end{align}
This implies for each $i \in [r_k]$ and $k \in [K]$,
\begin{align}\label{eq4:prop set Z}
\frac{1}{1+\alpha\sigma_{k,i}^2} - \frac{1}{1+\alpha_k\sigma_{k,i}^2} = \frac{\lambda}{\alpha}. 
\end{align}
Therefore, we obtain that $\sigma_{k,i}^2 > 0$ for each $i \in [r_k]$ is a positive root of the following quadratic equation with a variable $x \in \R$: 
\begin{align*}
\lambda \alpha_k x^2 -\eta_k x  + \frac{\lambda}{\alpha} = 0,
\end{align*}
where 
\begin{align}\label{eq:eta k}
\eta_k := (\alpha_k-\alpha) - \lambda\left(1 + \frac{\alpha_k}{\alpha} \right),\ \forall k \in [K]. 
\end{align}
According to \eqref{eq:lambda}, one can verify that for each $k \in [K]$,
\begin{align*}
\eta_k > 0,\quad \eta_k^2 - \frac{4\alpha_k}{\alpha}\lambda^2 \ge 0. 
\end{align*}
This yields that the above quadratic equation has positive roots as follows. For each $i \in [r_k]$ and $k \in [K]$, we have 
\begin{align}\label{eq:sig}
\sigma_{k,i}^2 = \frac{ \eta_k \pm \sqrt{\eta_k^2 - 4\lambda^2m/m_k}}{2\lambda \alpha_k}. 
\end{align}
Finally, using $\bm Z_k^T\bm Z_l = \bm 0$ and \eqref{eq:SVD Zk}, we obtain $\bm P_{k,1}^T \bm P_{l,1} = \bm 0$  for all $1 \le l \neq k \le K$. These, together with \eqref{eq:SVD Zk}, yields \eqref{eq:Zk}.  

Conversely, suppose that each block $\bm Z_k$ of $\bm Z$ satisfies $\bm Z_k=\bm 0$ or takes the form \eqref{eq:Zk} for some $\bm U_k \in \mO^{d\times r_k}$ satisfying $\bm U_k^T\bm U_l = \bm 0$ for all $l \neq k$, $\bm V_k \in \mO^{m_k \times r_k}$ for all $k \in [K]$, and $\sigma_{k,i} > 0$ satisfying \eqref{eq:gamma}. We are devoted to showing $\bm Z \in \cal Z$. It is straightforward to verify that $\bm Z_k^T\bm Z_l = \bm 0$ for all $1 \le k \neq l \le K$. This, together with Lemma \ref{lem:upper bound}, implies $F(\bm Z) = \sum_{k=1}^K f_k(\bm Z_k)$. Therefore, it suffices to verify that $\nabla f_k(\bm Z_k) = \bm 0$ for each $k \in [K]$ in the rest of the proof. For each $k \in [K]$, if $\bm Z_k = \bm 0$, it is obvious to verify $\nabla f(\bm Z_k) = \bm 0$. Otherwise, $\bm Z_k$ takes the form \eqref{eq:Zk} for some $\bm U_k \in \mO^{d \times r_k}$, $\tilde{\bm \Sigma}_k \in \R^{r_k\times r_k}$ satisfying \eqref{eq:gamma}, and $\bm V_k \in \mO^{m_k \times r_k}$, where $r_k \ge 1$. Now, we compute for all $i\in [r_k]$,
\begin{align}\label{eq3:prop set Z}
\frac{\sigma_{k,i}^2}{1/\alpha_k + \sigma_{k,i}^2} - \frac{\sigma_{k,i}^2}{1/\alpha + \sigma_{k,i}^2} & = \frac{\alpha_k\sigma_{k,i}^2}{1 + \alpha_k\sigma_{k,i}^2} - \frac{\alpha\sigma_{k,i}^2}{1 + \alpha\sigma_{k,i}^2} = \frac{(\alpha_k-\alpha)\sigma_{k,i}^2}{\left( 1 + \alpha_k\sigma_{k,i}^2 \right)\left( 1 + \alpha\sigma_{k,i}^2 \right)} \notag\\
&= \frac{1}{1 + \alpha\sigma_{k,i}^2} - \frac{1}{1 + \alpha_k\sigma_{k,i}^2} = \frac{\lambda}{\alpha},
\end{align}
where the last equality is due to \eqref{eq:gamma}, \eqref{eq:sigma1}, and \eqref{eq4:prop set Z}. Then, we compute	
\begin{align}\label{eq5:prop set Z}
 \left(\bm I_d + \alpha \bm Z_k\bm Z_k^T \right)^{-1} = \left(\bm I_d + \alpha \bm U_k\tilde{\bm \Sigma}_k^2 \bm U_k^T \right)^{-1} = \bm I_d -  \bm U_k\tilde{\bm \Sigma}_k\left(\frac{1}{\alpha}\bm I_{r_k} + \tilde{\bm \Sigma}_k^2 \right)^{-1}  \tilde{\bm \Sigma}_k\bm U_k^T. 
\end{align}
where the second equality follows from \eqref{eq:matrix inv Z}. This, together with \eqref{eq:grad R}, yields 
\begin{align*}
\nabla f_k(\bm Z_k) & =  \alpha\left(\bm I_d + \alpha \bm Z_k\bm Z_k^T \right)^{-1}\bm Z_k - \alpha\left(\bm I_d + \alpha_k \bm Z_k\bm Z_k^T \right)^{-1}\bm Z_k - \lambda \bm Z_k \\
& = \alpha \bm U_k\tilde{\bm \Sigma}_k \left( \left(\frac{1}{\alpha_k}\bm I_{r_k} + \tilde{\bm \Sigma}_k^2 \right)^{-1}  - \left(\frac{1}{\alpha}\bm I_{r_k} + \tilde{\bm \Sigma}_k^2 \right)^{-1}  \right)\tilde{\bm \Sigma}_k ^2 \bm V_k^T - \lambda\bm Z_k = \bm 0,
\end{align*}
where the last equality follows from \eqref{eq:Zk} and \eqref{eq3:prop set Z}. Therefore, we have $\nabla F(\bm Z) = \bm 0$ as desired. This, together with $\bm Z_k^T\bm Z_l = \bm 0$, implies $\bm Z \in \cal Z$. 
\end{proof}  

\section{Proofs in \Cref{subsec:pf land}}\label{sec:pf land}


\subsection{Proof of \Cref{prop:local max}} 

\begin{proof}[Proof of \Cref{prop:local max}]

For each $\bm Z \in \cal Z$, it follows from \Cref{lem:upper bound} that 
\begin{align}\label{eq0:prop local max}
F(\bm Z) = \sum_{k=1}^K f_k(\bm Z_k),
\end{align} 	
where $f_k$ is defined in \eqref{eq:fk}. Suppose that there exists $k \in [K]$ such that $r_k = 0$, i.e., $\bm Z_k = \bm 0$. According to \eqref{eq:Hess R} and \eqref{eq:fk}, we compute for any $\bm D_k \neq \bm 0$, 
\begin{align*}
\nabla f_k(\bm Z_k)[\bm D_k, \bm D_k] = \left( \frac{\alpha}{2} - \frac{m_k}{2m}\alpha_k - \lambda \right) \|\bm D_k\|_F^2 = -\lambda  \|\bm D_k\|_F^2  < 0,
\end{align*} 
where the second equality follows from $m_k\alpha_k/m  = \alpha$ according to \eqref{eq:alp}. This implies $\bm 0$ is a local maximizer of $f_k(\bm Z_k)$. Suppose to the contrary that $r_k > 0$ for all $k \in [K]$. For each $\bm Z \in \cal Z$, using \Cref{lem:upper bound} with $\bm Z_k^T\bm Z_l = \bm 0$ for all $k \neq l$, \eqref{eq:Zk}, and \eqref{eq:fk}, we have 
\begin{align}\label{eq1:prop local max} 
F(\bm Z) & = \sum_{k=1}^K  \left( \frac{1}{2} \log\det\left( \bm I_n + \alpha \bm U_k\tilde{\bm \Sigma}_k^2 \bm U_k^T \right) - \frac{m_k}{2m} \log\det\left( \bm I_n + \alpha_k \bm U_k\tilde{\bm \Sigma}_k^2 \bm U_k^T \right) - \frac{\lambda}{2} \|\bm Z_k\|_F^2 \right) \notag\\
& = \sum_{k=1}^K \left( \frac{1}{2} \log\det \left(\bm I + \alpha \tilde{\bm \Sigma}_k^2 \right) - \frac{m_k}{2m} \log\det \left(\bm I + \alpha_k \tilde{\bm \Sigma}_k^2 \right) - \frac{\lambda}{2}\|\tilde{\bm \Sigma}_k\|_F^2  \right) \notag\\
& = \frac{1}{2} \sum_{k=1}^K \sum_{i=1}^{r_k} \left(  \log\left( 1 + \alpha\sigma_{k,i}^2 \right) - \frac{m_k}{m} \log\left( 1 + \alpha_k\sigma_{k,i}^2 \right) - \lambda \sigma_{k,i}^2  \right),
\end{align}
where the second equality is due to \eqref{eq:Zk} and Lemma \ref{lem:commu}. For ease of exposition, let 
\begin{align}\label{eq:h}
    h_k(x) =\log\left( 1 + \alpha x \right) - \frac{m_k}{m}\log\left( 1 + \alpha_k x \right) - \lambda x,\ \forall k \in [k]. 
\end{align}
Using \eqref{eq:gamma}, \eqref{eq4:prop set Z}, \eqref{eq:eta k}, and \eqref{eq:sig}, one can verify that $h_k^\prime(x) \le 0$ for $x \in (0,\underline{\sigma}_k)$, $h_k^\prime(x) \ge 0$ for $x \in [\underline{\sigma}_k,\overline{\sigma}_k)$, and $h_k^\prime(x) \le 0$ for $x \in [\overline{\sigma}_k,\infty)$ for all $k \in [K]$. This yields that $h_k(\underline{\sigma}_k)$ is a local minimizer and $h(\overline{\sigma}_k)$ is a local maximizer. This, together with \eqref{eq1:prop local max} and the fact that $\bm 0$ is a local maximizer of $f_k(\bm Z_k)$, implies (i) and (ii). 

\end{proof}

\subsection{Proof of \Cref{prop:saddle}}

\begin{proof}[Proof of \Cref{prop:saddle}]
Note that $\bm Z \in \R^{d\times m}$ is a critical point that satisfies \eqref{eq:FONC}. Suppose that $\mathrm{rank}\left(\bm Z\right) = r$ and $\mathrm{rank}\left(\bm Z_k\right) = r_k$ for all $k \in [K]$. Obviously, we have $r_k \le \min\{m_k,d\}$ for all $k \in [K]$ and $\sum_{k=1}^K r_k \le r \le \min\{m,d\}$. Now, let $ \bm Z\bm Z^T = \bm Q \bm \Lambda \bm Q^T $ be an eigenvalue decomposition of $\bm Z\bm Z^T \in \S_+^d$, where $\bm Q \in \mathcal{O}^{d \times r}$ and $\bm \Lambda \in \R^{r\times r}$ is a  diagonal matrix with diagonal entries being positive eigenvalues of $\bm Z\bm Z^T$. Suppose that $\bm Z\bm Z^T$ has $p$ distinct positive eigenvalues, where $1 \le p \le r$. Let $\lambda_1 > \dots > \lambda_p > 0$ be its distinct eigenvalue values with the corresponding multiplicities being $h_1,\dots,h_{p} \in \mathbb{N}_+$, respectively. Obviously, we have $\sum_{i=1}^p h_i = r$. Therefore, we write 
\begin{align}\label{eq1:prop saddle}
\bm \Lambda = \mathrm{BlkDiag}\left(\lambda_1\bm I_{h_1},\dots, \lambda_p\bm I_{h_p}\right),\ \bm Q = \begin{bmatrix}
\bm Q_1,\dots, \bm Q_p
\end{bmatrix}, 
\end{align}
where $\bm Q_i \in \mathcal{O}^{d \times h_i}$ for all $i \in [p]$. 

According to \Cref{lem:equi}, we can see that $\bm Z$ is a critical point with curvature if and only if $\bm Z\bm O$ is a critical point with the same curvature for each $\bm O = \mathrm{BlkDiag}\left(\bm O_1,\dots,\bm O_K \right)$ with $\bm O_k \in \mathcal{O}^{m_k}$ for all $k \in [K]$. According to the SVD of $\bm Z_k$ in \eqref{eq:SVD Zk}, we can take $\bm O_k = \bm Q_k$ for each $k \in [K]$. Therefore, it suffices to study $\bm Z_k = \bm P_k \bm \Sigma_k $ for each $k \in [K]$.    Substituting this into \eqref{eq:FONC} in \Cref{def:FONC} gives   
\begin{align*}
\alpha(\bm I + \alpha \bm Z\bm Z^T)^{-1}\bm P_k\bm \Sigma_k - \alpha\bm P_k (\bm I + \alpha_k\bm \Sigma_k\bm \Sigma_k^T)^{-1}\bm \Sigma_k - \lambda \bm P_k\bm \Sigma_k  = \bm{0},\ \forall k \in [K].
\end{align*}
This is equivalent to 
\begin{align*}
\alpha (\bm I + \alpha \bm Z\bm Z^T)^{-1}\bm Z_k = \bm Z_k \left(\alpha  (\bm I + \alpha_k\bm \Sigma_k\bm \Sigma_k^T)^{-1} + \lambda \bm I \right),\ \forall k \in [K]. 
\end{align*}
This yields that each column of $\bm Z_k$ is an eigenvector of $\bm Z$ for each $k \in [K]$. This, together with the decomposition in \eqref{eq1:prop saddle}, yields that we can permute the columns of $\bm Z_k$ such that the columns belonging to the space spanned by $\bm Q_i$ are rearranged together. Let $s_{k,i} \in \mathbb{N}$ denote the number of columns of $\bm Z_k$ that belong to the space spanned by $\bm Q_i$ for each $i \in [p]$. Obviously, we have $\sum_{i=1}^p s_{k,i} = m_k$. Consequently, for each $k \in [K]$, there exists an a column permutation matrix $\bm \Pi_k \in \R^{m_k\times m_k}$ such that 
\begin{align}\label{eq:Zki}
\bm Z_k \bm \Pi_k = \begin{bmatrix}
\bm Z_k^{(1)} & \dots & \bm Z_k^{(p)}
\end{bmatrix}. 
\end{align}
where  $\bm Q_i\bm Q_i^T\bm Z_k^{(i)} = \bm Z_k^{(i)} \in \R^{d \times s_{k,i}}$. Since $\bm Q_i^T\bm Q_j = \bm 0$, we have $\bm Z_k^{(i)^T} \bm Z_k^{(j)} = \bm 0$ for all $i \neq j$. This, together with \eqref{eq:Rc} and Lemma \ref{lem:MCR}, yields 
\begin{align}\label{eq:prop Rc}
R_c(\bm Z_k) = \frac{m_k}{2m} \sum_{i=1}^p \log\det\left( \bm I_n + \alpha_k \bm Z_k^{(i)}\bm Z_k^{(i)^T} \right). 
\end{align}   
Moreover, let  $s_i := \sum_{k=1}^K s_{k,i}$ and 
\begin{align}\label{eq:Zi}
\bm Z^{(i)} := \left[ \bm Z_{1}^{(i)}\  \dots\  \bm Z_{K}^{(i)} \right] \in \R^{d\times s_i},\ \forall i \in [p].
\end{align}
Using this and \eqref{eq:Zki}, we have 
\begin{align*}
\bm Z \bm Z^T = \sum_{k=1}^K \bm Z_k\bm Z_k^T = \sum_{k=1}^K\sum_{i=1}^p \bm Z_k^{(i)}\bm Z_k^{(i)^T} = \sum_{i=1}^p \bm Z^{(i)}\bm Z^{(i)^T}. 
\end{align*}
This, together with \eqref{eq:R}, \Cref{lem:MCR}, and $\bm Z^{(i)^T}\bm Z^{(j)} = \bm 0$, yields that 
\begin{align}\label{eq9:prop saddle}
R(\bm Z) = \frac{1}{2}\sum_{i=1}^p \log\det\left( \bm I + \alpha \bm Z^{(i)}\bm Z^{(i)^T} \right). 
\end{align} 

\paragraph{Characterize the structure of critical points.} Now, for each $k \in [K]$ and $i \in [p]$, let $r_{k,i} = \mathrm{rank}(\bm Z_k^{(i)})$, where $r_{k,i} \le \min\{d,s_{k,i}\}$. Moreover, let 
\begin{align}\label{eq:SVD Zk1}
\bm Z_{k}^{(i)} = \bU_k^{(i)}\bm{\Sigma}_k^{(i)}\bV_k^{(i)^T} = \begin{bmatrix}
\bU_{k,1}^{(i)} & \bU_{k,2}^{(i)} 
\end{bmatrix} \begin{bmatrix}
\tilde{\bm{\Sigma}}_k^{(i)} & \bm{0} \\
\bm{0} & \bm{0} 
\end{bmatrix} \begin{bmatrix}
\bV_{k,1}^{(i)^T} \\ \bV_{k,2}^{(i)^T} 
\end{bmatrix}
\end{align}
be a singular value decomposition (SVD) of $\bm Z_k^{(i)}$, where $\tilde{\bm{\Sigma}}_k^{(i)} \in \R^{r_{k,i}\times r_{k,i}}$ is a diagonal matrix with diagonal entries being positive singular values of $\bZ_k^{(i)}$; $\bU_k^{(i)} \in \mO^d$ with $\bU_{k,1}^{(i)} \in \R^{d\times r_{k,i}}$ and $\bU_{k,2}^{(i)} \in \R^{d \times (d-r_{k,i})}$; $\bm{V}_k^{(i)} \in \mO^{s_{k,i}}$ with $\bV_{k,1}^{(i)} \in \R^{s_{k,i}\times r_{k,i}}$ and $\bV_{k,2}^{(i)} \in \R^{s_{k,i}\times (s_{k,i}-r_{k,i})}$. This, together with $\bm Q_i\bm Q_i^T \bm Z_{k}^{(i)} = \bm Z_{k}^{(i)}$, implies for all $k \in [K]$ and $i \in [p]$,
\begin{align}\label{eq0:prop saddle}
\bm Q_i\bm Q_i^T \bU_{k,1}^{(i)} = \bU_{k,1}^{(i)}. 
\end{align}
According to  \eqref{eq:MCR}, \eqref{eq:grad R}, \eqref{eq:Zki}, and \eqref{eq:prop Rc}, we have for all $k \in [K]$ and $i \in [p]$,
\begin{align}\label{eq2:prop saddle}
 \alpha \bm X^{-1}\bm Z_{k}^{(i)}  - \alpha \left(\bm I + \alpha_k\bm Z_{k}^{(i)}\bm Z_{k}^{(i)^T}  \right)^{-1} \bm Z_{k}^{(i)} = \lambda \bm Z_{k}^{(i)}.
\end{align}
Substituting the block forms of $\bU_k^{(i)}$ and $\bm{\Sigma}_k^{(i)}$ in \eqref{eq:SVD Zk1} into the above equation and rearranging the terms, we obtain for all $k \in [K]$ and $i \in [p]$, 
\begin{align*} 
\bm X^{-1}\bm U_{k,1}^{(i)}  -  \bm U_{k,1}^{(i)} \left(\bm I + \alpha_k\tilde{\bm{\Sigma}}_k^{(i)^2}\right)^{-1} = \frac{\lambda}{\alpha}\bm U_{k,1}^{(i)}. 
\end{align*}
Using $\bm X = \bm I + \alpha \bm Z \bm Z^T$ and rearranging the terms, we have for all $k \in [K]$ and $i \in [p]$, 
\begin{align}\label{eq3:prop saddle}
\bm U_{k,1}^{(i)} \left( \left( 1- \frac{\lambda}{\alpha}\right) \bm I - \left(\bm I + \alpha_k\tilde{\bm \Sigma}_k^{(i)^2} \right)^{-1} \right) =  \alpha \bm Z\bm Z^T\bm U_{k,1}^{(i)} \left( \left(\bm I + \alpha_k\tilde{\bm \Sigma}_k^{(i)^2} \right)^{-1} + \frac{\lambda}{\alpha}\bm I \right).  
\end{align} 
Since $\bm Z\bm Z^T = \sum_{i=1}^p \lambda_i\bm Q_i\bm Q_i^T$, $\bm Q_i^T\bm Q_j = \bm 0$ for all $i \neq j$, and \eqref{eq0:prop saddle}, we have for all $k \in [K]$ and $i \in [p]$,
\begin{align*} 
\bm U_{k,1}^{(i)} \left( \left( 1- \frac{\lambda}{\alpha}\right) \bm I - \left(\bm I + \alpha_k\tilde{\bm \Sigma}_k^{(i)^2} \right)^{-1} \right) =  \alpha \lambda_i \bm U_{k,1}^{(i)} \left( \left(\bm I + \alpha_k\tilde{\bm \Sigma}_k^{(i)^2} \right)^{-1} + \frac{\lambda}{\alpha}\bm I \right).  
\end{align*}
Rearranging the terms in the above equation, we obtain for each $k \in [K]$ and $i \in [p]$,
\begin{align}\label{eq4:prop saddle}
\tilde{\bm \Sigma}_{k}^{(i)} = \beta_{i}  \bm I,\ \text{where}\ \beta_{i} := \frac{1}{\sqrt{\alpha K}}\sqrt{\frac{1+\alpha\lambda_i}{1-\lambda/\alpha-\lambda\lambda_i} -1}. 
\end{align} 
Substituting this back to \eqref{eq3:prop saddle} yields for each $k \in [K]$ and $i \in [p]$, 
\begin{align*}
\lambda_i \bm U_{k,1}^{(i)} = \bm Z\bm Z^T \bm U_{k,1}^{(i)}. 
\end{align*}
This, together with \eqref{eq:SVD Zk1} and \eqref{eq4:prop saddle}, yields $\lambda_i \bm Z_{k}^{(i)} = \bm Z\bm Z^T \bm Z_{k}^{(i)}$ for all $k \in [K]$ and $i \in [p]$. Using this and $\bm Z_k^{(i)^T}\bm Z_k^{(j)} = \bm 0$ for all $i \neq j$, we have for all $i \in [p]$ and $k \in [K]$, 
\begin{align*}
\lambda_i \bm Z_{k}^{(i)} = \sum_{l=1}^K \bm Z_{l}^{(i)}\bm Z_{l}^{(i)^T}  \bm Z_{k}^{(i)}.
\end{align*}
It follows from this and \eqref{eq:Zi} that 
\begin{align}\label{eq5:prop saddle}
\lambda_i \bm Z^{(i)}  = \bm Z^{(i)} \bm Z^{(i)^T} \bm Z^{(i)}.  
\end{align} 
Since there exists $k \neq l \in [K]$ such that $\bm Z_k^T\bm Z_l \neq \bm 0$, we can assume without loss of generality that $\bm Z_1^T\bm Z_2 \neq \bm 0$. Then, there exist $i_1 \in [m_1]$ and $i_2 \in [m_2]$ such that $\bm z_{1,i_1}^T\bm z_{2,i_2} \neq 0$. This, together with $\bm Z^{(i)^T}\bm Z^{(j)} = \bm 0$ for all $i \neq j$, implies that there exists $i^* \in [p]$ such that $\bm z_{1,i_1}, \bm z_{2,i_2}$ are both columns of $\bm Z^{(i^*)}$. Without loss of generality,  suppose that $\bm z_{1,i_1}$ and $\bm z_{2,i_2}$ are the $u$-th and $v$-th columns of $\bm Z^{(i^*)}$, respectively. Therefore, we have $\bm z_u^{(i^*)^T}\bm z_v^{(i^*)} \neq 0$. Using this, $\bm z_u^{(i^*)^T}\bm z_v^{(i^*)} \neq 0$, and \eqref{eq5:prop saddle}, we have 
\begin{align}\label{eq11:prop saddle}
\lambda_{i^*} \bm z_u^{(i^*)} = \bm Z^{(i^*)} \bm Z^{(i^*)^T} \bm z_u^{(i^*)}.
\end{align}
This is equivalent to
\begin{align}\label{eq10:prop saddle}
\sum_{j\neq u}^{s_{i^*}} \bm z_j^{(i^*)^T} \bm z_u^{(i^*)} \bm z_j^{(i^*)} + \left( \|\bm z_u^{(i^*)} \|^2 - \lambda \right) \bm z_u^{(i^*)} = \bm 0
\end{align}
This, together with $\bm z_u^{(i^*)^T}\bm z_v^{(i^*)} \neq 0$, implies that the columns of $\bm Z^{(i^*)}$ are linearly dependent. By letting $t_{i^*} = \mathrm{rank}(\bm Z^{(i^*)})$, we have $t_{i^*} < s_{i^*}$ due to linear dependence of columns of $\bm Z^{(i^*)}$. Then, let $\bm Z^{(i^*)} = \bm U \bm \Sigma \bm V^T$ be an SVD of $\bm Z^{(i^*)}$, where $\bm U \in \mO^{d \times t_{i^*}}$, $\bm \Sigma \in \R^{t_{i^*}\times t_{i^*}}$, and $\bm V \in \mO^{s_{i^*}\times t_{i^*}}$. Substituting this into \eqref{eq5:prop saddle} yields $\lambda_{i^*} \bm \Sigma  =  \bm\Sigma^3$, which implies $\bm \Sigma = \sqrt{\lambda_{i^*}} \bm I$ and 
\begin{align}\label{eq6:prop saddle}
\bm Z^{(i^*)} =  \sqrt{\lambda_{i^*}} \bm U \bm V^T. 
\end{align}   

\paragraph{Construct an ascent direction.}
For ease of exposition, we simply write $i^*$ as $i$ from now on. According to \eqref{eq:Zi} and \eqref{eq11:prop saddle}, we have 
\begin{align}\label{eq12:prop saddle}
\sum_{k=1}^K \bm Z_k^{(i)}\bm Z_k^{(i)^T} \bm z_u^{(i)} = \lambda_i\bm z_u^{(i)}.  
\end{align}
Recall that $\bm z_u^{(i)}$ and $\bm z_v^{(i)}$ are a column of $\bm Z_1$ and $\bm Z_2$, respectively. Without loss of generality, suppose that $\bm z_u^{(i)}$ are the first column of $\bm Z_1^{(i)}$, i.e., $\bm z_u^{(i)} = \bm Z_1^{(i)} \bm e_1$. Then, let $\bm c = \left( \bm c_1\ \dots\ \bm c_K \right) \in \R^{s_i}$ with $\bm c_1 = \bm Z_1^{(i)^T}\bm z_u^{(i)} - \lambda_i\bm e_1$ and $\bm c_k =  \bm Z_k^{(i)^T} \bm z_u^{(i)}$ for all $k \neq 1$. This, together with $\bm z_u^{(i)^T}\bm z_v^{(i)} \neq 0$ and \eqref{eq12:prop saddle}, implies $\bm c_2 \neq \bm 0$ and $\bm Z^{(i)}\bm c = \bm 0$.  Now, we set $\bm q_k := \bm V_{k,1}^{(i)}\bm V_{k,1}^{(i)^T}\bm c_k$ for each $k \in [K]$ and $\bm q := \left(\bm q_1\ \dots\ \bm q_K \right)$. According to $\bm Z_k^{(i)} = \beta_i \bm U_{k,1}^{(i)} \bm V_{k,1}^{(i)^T} $ by \eqref{eq:SVD Zk1} and \eqref{eq4:prop saddle}, we have for all $k \neq 1$, 
\begin{align*}
\bm q_k = \bm V_{k,1}^{(i)}\bm V_{k,1}^{(i)^T} \bm Z_k^{(i)^T} \bm z_u^{(i)} = \bm Z_k^{(i)^T} \bm z_u^{(i)} = \bm c_k. 
\end{align*}
Moreover, using $\bm Z_k^{(i)} = \beta_i \bm U_{k,1}^{(i)} \bm V_{k,1}^{(i)^T} $ by \eqref{eq:SVD Zk1} and \eqref{eq4:prop saddle}, we have 
\begin{align}\label{eq7:prop saddle}
\bm Z^{(i)}\bm q = \sum_{k=1}^K \bm Z_k^{(i)}\bm q_k = \beta_i \sum_{k=1}^K \bm U_{k,1}^{(i)} \bm V_{k,1}^{(i)^T} \bm c_k = \sum_{k=1}^K \bm Z_k^{(i)}\bm c_k = \bm Z^{(i)}\bm c = \bm 0
\end{align}
and 
\begin{align}\label{eq8:prop saddle}
\|\bm Z_k^{(i)}\bm q_k\| = \beta_i \| \bm V_{k,1}^{(i)^T}\bm c_k\| = \beta_i \| \bm q_k\|.  
\end{align}
Let $\bm u = \bm U \bm a$, where $\bm U$ is given in \eqref{eq6:prop saddle} and $\bm a \in \R^{t_i}$ is chosen such that $\bm a \in \mathrm{span}(\bm U^T\bm U_{k,2}^{(i)})$ and $\|\bm a\|=1$. We construct $\bm D = \left[\bm D^{(1)}\ \dots\ \bm D^{(p)}\right]$ with $\bm D^{(i)} = \bm u \bm q^T$ and $\bm D^{(j)} = \bm 0$ for all $j \neq i$.  

\paragraph{Compute the bilinear form of Hessian.} According to the construction of $\bm D$ and \eqref{eq7:prop saddle}, we compute $\bm Z \bm D^T = \bm Z^{(i)}\bm D^{(i)^T} = \bm Z^{(i)} \bm q \bm u^T = \bm 0$. 
This, together with \eqref{eq:Hess R} and \eqref{eq9:prop saddle}, yields 
\begin{align*}
\nabla^2 R(\bm Z)\left[ \bm D, \bm D \right] = \alpha  \bm a^T\bm U^T \left(\bm I + \alpha \bm Z^{(i)}\bm Z^{(i)^T} \right)^{-1} \bm U\bm a \|\bm q\|^2 = \frac{\alpha}{\alpha\lambda_i + 1}\|\bm q\|^2,
\end{align*}
where the last equality is due to \eqref{eq6:prop saddle}. With abuse of notation, let 
\begin{align*}
R_c\left(\bm Z_{k}^{(i)} \right) = \frac{m_k}{2m} \sum_{i=1}^p \log\det\left( \bm I_n + \alpha_k \bm Z_k^{(i)}\bm Z_k^{(i)^T} \right). 
\end{align*}
Since $\bm Z_k^{(i)} = \beta_i \bm U_{k,1}^{(i)} \bm V_{k,1}^{(i)^T}$ and $\bm D_k^{(i)} = \bm u \bm q_k^{T}$, we compute for each $k \in [K]$,
\begin{align*}
\nabla^2 R_c\left(\bm Z_{k}^{(i)} \right)\left[\bm D_{k}^{(i)},\bm D_{k}^{(i)} \right] & =  \alpha \|\bm q_k\|^2 \bm u^T\bm X_{k}^{(i)} \bm u - \alpha\alpha_k \left(\bm u^T \bm X_k^{(i)}\bm Z_k^{(i)} \bm q_k \right)^2 -  \alpha\alpha_k\left( \bm u^T \bm X_k^{(i)} \bm u \bm q_k^T\bm Z_k^{(i)^T}\bm X_k^{(i)}\bm Z_k^{(i)} \bm q_k \right)  \\
& =  \alpha \|\bm q_k\|^2\left( 1 - \frac{\alpha_k}{\alpha_k\beta_i^2 + 1}\|\bm Z_k^{(i)^T}\bm u\|^2 \right) - \frac{\alpha\alpha_k}{(\alpha_k\beta_i^2+1)^2}\left(\bm u^T \bm Z_k^{(i)} \bm q_{k} \right)^2 \\
&\quad  -  \alpha\alpha_k \left( 1 - \frac{\alpha_k}{\alpha_k\beta_i^2 + 1}\|\bm Z_k^{(i)^T}\bm u\|^2 \right)\frac{ \beta_i^2\| \bm q_{k}\|^2}{\alpha_k\beta_i^2+1},  
\end{align*}
where $\bm X_{k}^{(i)} = \left(\bm I + \alpha_k\bm Z_k^{(i)} \bm Z_k^{(i)^T}\right)^{-1}$, the second equality follows from 
$$ \bm X_{k}^{(i)} = \left(\bm I + \alpha_k\beta_i^2 \bm U_{k,1}^{(i)}\bm U_{k,1}^{(i)^T}\right)^{-1} = \bm I -  {\alpha_k\beta_i^2}\bm U_{k,1}^{(i)}\bm U_{k,1}^{(i)^T}/({\alpha_k\beta_i^2 + 1})$$ 
due to \eqref{eq:matrix inv Z} in Lemma \ref{lem:matrix inv}, $\bm Z_{k}^{(i)} = \beta_i\bm U_{k,1}^{(i)} \bm V_{k,1}^{(i)^T}$, and \eqref{eq8:prop saddle}. Summing up the above equality for all $k \in [K]$ with $\alpha_k = K\alpha$ for all $k \in [K]$ yields
\begin{align*}
&\quad \sum_{k=1}^K \nabla^2 R_c\left(\bm Z_{k}^{(i)} \right)\left[\bm D_{k}^{(i)},\bm D_{k}^{(i)} \right] \\ 
& = \alpha\left(1 - \frac{\alpha_k \beta_i^2}{\alpha_k\beta_i^2+1} \right)\|\bm q\|^2 - \frac{\alpha\alpha_k}{\alpha_k\beta_i^2+1} \sum_{k=1}^K\|\bm q_k\|^2 \|\bm Z_k^{(i)^T}\bm u\|^2  \\
&\quad -  \frac{\alpha\alpha_k}{(\alpha_k\beta_i^2+1)^2}\sum_{k=1}^K \left(\bm u^T \bm Z_k^{(i)} \bm q_{k} \right)^2 + \frac{\alpha\alpha_k^2\beta_i^2}{\left(\alpha_k\beta_i^2+1 \right)^2} \sum_{k=1}^K\|\bm q_k\|^2 \|\bm Z_k^{(i)^T}\bm u\|^2 \\
& = \left(\frac{\alpha}{1+\alpha\lambda_i} - \lambda\right)\|\bm q\|^2 - \frac{\alpha\alpha_k}{(\alpha_k\beta_i^2+1)^2} \sum_{k=1}^K \left(\left(\bm u^T \bm Z_k^{(i)} \bm q_{k} \right)^2 +  \|\bm q_k\|^2 \|\bm Z_k^{(i)^T}\bm u\|^2\right),
\end{align*}
where the second equality follows from the definition of $\beta_i$ in \eqref{eq4:prop saddle}. Finally, we compute
\begin{align*}
\nabla^2 F(\bm Z)[\bm D, \bm D] & = \nabla^2 R(\bm Z)[\bm D, \bm D] - \sum_{j=1}^K \nabla^2R_c(\bm z_j^{(i)})[\bm d_j^{(i)}, \bm d_j^{(i)}] - \lambda\|\bm q\|^2 \\
& =    \frac{\alpha\alpha_k}{(\alpha_k\beta_i^2+1)^2} \sum_{k=1}^K \left(\left(\bm u^T \bm Z_k^{(i)} \bm q_{k} \right)^2 +  \|\bm q_k\|^2 \|\bm Z_k^{(i)^T}\bm u\|^2\right) > 0,
\end{align*}
where the inequality is due to $\|\bm q_2\| = \|\bm c_2\| \neq 0$ and 
\begin{align*}
\|\bm Z_2^{(i)^T}\bm u\|^2 = \beta_2\|\bm U_{2,1}^{(i)^T} \bm U\bm a\| \neq 0
\end{align*} 
due to  $\bm a \in \mathrm{span}(\bm U^T\bm U_{k,2}^{(i)})$.  
\end{proof}

Given a matrix $\bm Z \in \R^{d\times m}$, let $ \bm Z\bm Z^T = \bm Q \bm \Lambda \bm Q^T $ be an eigenvalue decomposition of $\bm Z\bm Z^T \in \S_+^d$, where $\bm Q \in \mathcal{O}^{d \times r}$ and $\bm \Lambda \in \R^{r\times r}$ is a  diagonal matrix with diagonal entries being positive eigenvalues of $\bm Z\bm Z^T$. Suppose that $\bm Z\bm Z^T$ has $p$ distinct positive eigenvalues, where $1 \le p \le r$. Let $\lambda_1 > \dots > \lambda_p > 0$ be its distinct eigenvalue values with the corresponding multiplicities being $h_1,\dots,h_{p} \in \mathbb{N}_+$, respectively. Obviously, we have $\sum_{i=1}^p h_i = r$. Therefore, we write 
\begin{align*}
\bm \Lambda = \mathrm{BlkDiag}\left(\lambda_1\bm I_{h_1},\dots, \lambda_p\bm I_{h_p}\right),\ \bm Q = \begin{bmatrix}
\bm Q_1,\dots, \bm Q_p
\end{bmatrix}, 
\end{align*}
where $\bm Q_i \in \mathcal{O}^{d \times h_i}$ for all $i \in [p]$.


\section{Proofs in \Cref{sec:main}}\label{sec:pf main}

\subsection{Proof of \Cref{thm:1}}\label{sec:pf thm1}

\begin{proof}[Proof of Theorem \ref{thm:1}]
(i) Suppose that each block of $\bm Z$ satisfying \eqref{eq:Zk opti}, (a), (b), and (c). It directly follows from (i) in \Cref{prop:local max} that $\bm Z$ is a local maximizer. Conversely, suppose that $\bm Z$ is a local maximizer. According to \eqref{eq:crit}, \eqref{eq:Zc}, \eqref{eq:Z12}, (ii) in \Cref{prop:local max} and \Cref{prop:saddle}, if $\bm Z \in \mathcal{Z}^c \cup \mathcal{Z}_2$, then $\bm Z$ is a strict saddle point. This, together with $\mathcal{X} = \mathcal{Z}^c \cup \mathcal{Z}_1 \cup \mathcal{Z}_2$ and the fact that $\bm Z$ is a local maximizer, implies that $\bm Z \in \mathcal{Z}_1$. Using this, \eqref{eq:Z12}, and \Cref{prop:set Z} yields that $\bm Z$ satisfying \eqref{eq:Zk opti}, (a), (b), and (c). 

(ii) According to (i) in Theorem \ref{thm:1}, suppose that the $k$-th block of a local maximizer $\bm Z$ admits the decomposition in \eqref{eq:Zk opti} satisfying (a), (b), and (c) for all $k \in [K]$. This, together with \eqref{eq1:prop local max} in the proof of \Cref{prop:local max}, yields that 
\begin{align}\label{eq:F}
    F(\bm Z) & = \frac{1}{2} \sum_{k=1}^K \sum_{i=1}^{r_k} \left(  \log\left( 1 + \alpha\overline{\sigma}_{k}^2 \right) - \frac{m_k}{m} \log\left( 1 + \alpha_k\overline{\sigma}_k^2 \right) - \lambda \overline{\sigma}_k^2  \right),
\end{align}
where $\overline{\sigma}_k$ is defined in \eqref{eq:sigma1} for each $k \in [K]$. Then, we define a function $g: \mathbb{N}_+ \times \R \rightarrow \R$ as 
\begin{align*}
    g(n, x) := \log(1+\alpha x) - \frac{n}{m}\log\left(1 + \frac{d x}{n\varepsilon^2} \right) - \lambda x. 
\end{align*}
One can verify that for all $n_1 \ge n_2$, we have $g(n_1, x) \le g(n_2, x)$ for each $x$. Therefore, we have for all $m_k \le m_l$,
\begin{align*}
    g(m_l, \overline{\sigma}_{l}^2) \le  g(m_k, \overline{\sigma}_{l}^2) \le  g(m_k, \overline{\sigma}_{k}^2),
\end{align*}
where the second inequality follows from $\overline{\sigma}_k^2$ is the maximizer of the function $g(m_k,x) = h_k(x)$ according to \eqref{eq:h}. This, together with \eqref{eq:F}, yields that $\bm Z$ is a global maximizer if and only if $\sum_{k=1}^K r_k = \min\{m,d\}$ and for all $k \neq l$ satisfying $m_k < m_l$ and $r_l > 0$, we have $r_k = \min\{m_k,d\}$. 
\end{proof}

\subsection{Proof of \Cref{prop:equi}}\label{sec:pf prop}

To prove \Cref{prop:equi}, we first need to characterize the global optimal solution set of Problem \eqref{eq:MCR1}. 

\begin{proposition}\label{prop:opt cons}
    Suppose that $m_1=\dots=m_K$ and \eqref{eq:epsilon} holds. It holds that $\bm Z = [\bm Z_1,\dots,\bm Z_K] \in \R^{d\times m}$ with $\bm Z_k \in \R^{d\times m_k}$ for each $k \in [K]$ is a global solution of Problem \eqref{eq:MCR1} if and only if for each $k \in [K]$,  
    \begin{align}\label{eq:Zk equi}
       \bm Z_k = \frac{m}{\min\{m,d\}} \bm U_k\bm V_k^T,
    \end{align}
    where $r_k = \min\{m,d\}/K$ for all $k \in [K]$, $\bm U_k \in \mathcal{O}^{d \times r_k}$ with $\bm U_k^T\bm U_l = \bm 0$ for all $l \neq k$, and $\bm V_k \in \mathcal{O}^{m_k \times r_k}$ for all $k \in [K]$. 
\end{proposition}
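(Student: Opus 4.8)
The plan is to reduce the constrained problem \eqref{eq:MCR1} to a finite-dimensional optimization over the (squared) singular values of the class blocks $\bm{Z}_1,\dots,\bm{Z}_K$, and then to analyze that scalar problem under the smallness hypothesis \eqref{eq:epsilon}. This has to be argued directly rather than by specializing \Cref{thm:1}, since the constraint set $\{\|\bm{Z}_k\|_F^2 = m_k\}$ need not contain the unconstrained maximizer of $F$.

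\textbf{Step 1 (reduction to singular values).} By \Cref{lem:MCR} (the $\lambda=0$ case of \Cref{lem:upper bound}), for every feasible $\bm{Z}$ we have $R(\bm{Z}) - \sum_{k} R_c(\bm{Z};\bm{\pi}^k) \le \sum_{k} g_k(\bm{Z}_k)$, where $g_k(\bm{Z}_k) = \tfrac12\log\det(\bm{I}+\alpha\bm{Z}_k\bm{Z}_k^T) - \tfrac{m_k}{2m}\log\det(\bm{I}+\alpha_k\bm{Z}_k\bm{Z}_k^T)$, with equality iff $\bm{Z}_k^T\bm{Z}_l = \bm{0}$ for all $k\neq l$. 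By \Cref{lem:commu}, $g_k$ depends on $\bm{Z}_k$ only through its positive squared singular values $s_{k,1},\dots,s_{k,r_k}$: $g_k(\bm{Z}_k) = \sum_{i=1}^{r_k}\phi_k(s_{k,i})$ with $\phi_k(s) := \tfrac12\log(1+\alpha s) - \tfrac{m_k}{2m}\log(1+\alpha_k s)$; the Frobenius constraint reads $\sum_i s_{k,i} = m_k$; and $r_k = \rank(\bm{Z}_k)\in[1,\min\{m_k,d\}]$ since $m_k>0$. Finally, mutually orthogonal column spaces $\range(\bm{Z}_k)\subseteq\R^d$ are realizable precisely when $\sum_k r_k \le d$. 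Hence maximizing \eqref{eq:MCR1} is equivalent to: choose integers $r_k\in[1,\min\{m_k,d\}]$ with $\sum_k r_k\le d$, and positive reals $s_{k,i}$ with $\sum_i s_{k,i}=m_k$, so as to maximize $\sum_k\sum_i \phi_k(s_{k,i})$.

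\textbf{Step 2 (scalar analysis; where \eqref{eq:epsilon} enters).} Using $m_k\alpha_k/m=\alpha$ and $\alpha_k=K\alpha$ (balanced classes), one checks that $\phi_k$ is strictly increasing, strictly convex on $[0,a_k]$ and strictly concave on $[a_k,\infty)$, with $a_k = 1/\sqrt{\alpha\alpha_k}\to 0$ as $\epsilon\to 0$. I would then establish, as the technical heart of the proof: (a) for any fixed support size $j\le\min\{m_k,d\}$, $\sum_{i=1}^{j}\phi_k(s_i)$ subject to $s_i>0,\ \sum_i s_i=m_k$ is maximized uniquely at $s_1=\dots=s_j=m_k/j$; and (b) the map $j\mapsto j\,\phi_k(m_k/j)$ is strictly increasing on $\{1,\dots,\min\{m_k,d\}\}$, and (for $m\ge d$) has decreasing increments there. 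Both (a) and (b) amount to showing that the relevant values $m_k/j$ stay on the concave branch of $\phi_k$ and that the "turning point" of $j\mapsto j\phi_k(m_k/j)$ lies beyond $\min\{m_k,d\}$; carrying out these estimates non-asymptotically produces precisely an upper bound on $\epsilon$ of the form \eqref{eq:epsilon}.

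\textbf{Step 3 (reassembly and uniqueness).} If $m<d$, then $\sum_k\min\{m_k,d\}=\sum_k m_k=m<d$, so the coupling constraint $\sum_k r_k\le d$ is inactive; by Step 2 each block optimally takes $r_k=m_k=\min\{m,d\}/K$ with all squared singular values equal, and orthogonal column spaces are feasible, which is exactly the structure in \eqref{eq:Zk equi}. If $m\ge d$ with $d/K$ an integer, the constraint $\sum_k r_k\le d$ binds; by the strict monotonicity in Step 2(b) the optimum has $\sum_k r_k=d$, and by the symmetry $\phi_1=\dots=\phi_K$ together with the decreasing-increments property, the unique optimal rank allocation is $r_k=d/K=\min\{m,d\}/K$ for all $k$, again with all squared singular values equal, yielding \eqref{eq:Zk equi}. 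Uniqueness of the global optimizer set (up to the stated block-orthogonal freedom) then follows by tracking the equality cases: \Cref{lem:MCR} is strict unless $\bm{Z}_k^T\bm{Z}_l=\bm 0$ for all $k\neq l$, and Steps 2(a)--(b) are strict unless each block has the prescribed rank and equal singular values. The main obstacle is Step 2: pinning down the optimal rank requires fully quantitative control of the log-det expressions, and it is exactly this bookkeeping that forces the intricate form of \eqref{eq:epsilon}; Steps 1 and 3 are routine given Lemmas~\ref{lem:upper bound}--\ref{lem:MCR} and~\ref{lem:commu}.
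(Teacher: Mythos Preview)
Your three-step outline matches the paper's proof almost exactly: reduce via \Cref{lem:MCR} and \Cref{lem:commu} to a scalar problem over the squared singular values (your Step~1 is the paper's \eqref{eq1:prop equi}--\eqref{eq:gx}); show the equal-allocation is optimal for each fixed support size and that larger support is better (the paper's Lemmas~\ref{lem:KKT} and~\ref{lem:hs}); and then allocate ranks across classes by Jensen/concavity in the $m\ge d$ case (your Step~3 is the paper's case~(ii)).

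The one place where your plan is thinner than it needs to be is the mechanism you suggest for Step~2(a). Observing that the centroid $m_k/j$ lies on the concave branch of $\phi_k$ only shows that the equal-allocation point is a \emph{local} maximizer on the simplex; it does not rule out the competing interior critical points. Because $\phi_k$ is convex on $[0,a_k]$ and concave on $[a_k,\infty)$, the KKT system for $\sum_i\phi_k(s_i)$ on $\{s_i>0,\ \sum_i s_i=m_k\}$ also admits critical points of the form ``$j-1$ copies of some $\overline{x}>a_k$ and one copy of $\underline{x}<a_k$'', and these must be compared to the equal point by an explicit function-value estimate. The paper does precisely this in \Cref{lem:KKT}: it uses the second-order necessary condition to exclude two or more copies of $\underline{x}$, then bounds $\underline{x}\le 2\lambda/(\alpha\eta)$ and compares $F_2-F_1$ directly; it is this comparison, together with the monotonicity argument in \Cref{lem:hs}, that actually forces the constant in \eqref{eq:epsilon}. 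So your identification of Step~2 as the technical heart is right, but the ``stay on the concave branch'' heuristic is not by itself a proof --- you would need the full KKT-and-compare argument the paper carries out.
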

\begin{proof}
    According to \Cref{lem:MCR}, we have 
    \begin{align}\label{eq1:prop equi}
        R(\bm Z) - \sum_{k=1}^K R_c\left(\bm Z; \bm \pi^k \right) &\le \frac{1}{2} \sum_{k=1}^K \log\det\left( \bm I + \alpha\bm Z_k\bm Z_k^T \right) -  \sum_{k=1}^K \frac{m_k}{2m} \log\det\left( \bm I + \alpha_k\bm Z_k\bm Z_k^T \right) \nonumber \\
        & =: H(\bm Z),
    \end{align}
    where the inequality becomes equality if and only if $\bm Z_k^T\bm Z_l = \bm 0$ for all $k \neq l$. To simplify our development, let $r_k := \mathrm{rank}(\bm Z_k)$ denote the rank of $\bm Z_k \in \R^{d\times m_k}$, where $r_k \le \min\{d,m_k\}$ for each $k \in [K]$ and $\sum_{k=1}^K r_k \le \min\{d,m\}$, and 
    \begin{align}\label{eq:hk}
        h_k(\bm Z_k) := \frac{1}{2}\log\det\left( \bm I + \alpha\bm Z_k\bm Z_k^T \right) - \frac{m_k}{2m} \log\det\left( \bm I + \alpha_k\bm Z_k\bm Z_k^T \right),\ \forall k \in [K].
    \end{align}
    Moreover, let 
    \begin{align*} 
    \bm Z_k = \bP_k\bm{\Sigma}_k\bQ_k^T = \begin{bmatrix}
    \bP_{k,1} & \bP_{k,2} 
    \end{bmatrix} \begin{bmatrix}
    \tilde{\bm{\Sigma}}_k & \bm{0} \\
    \bm{0} & \bm{0} 
    \end{bmatrix} \begin{bmatrix}
    \bQ_{k,1}^T \\ \bQ_{k,2}^T 
    \end{bmatrix}
    \end{align*}
    be a singular value decomposition (SVD) of $\bm Z_k$, where $\tilde{\bm{\Sigma}}_k = \mathrm{diag}(\sigma_{k,1},\dots,\sigma_{k,r_k})$ with $\sigma_{k,1}\ge\dots\ge \sigma_{k,r_k} > 0$ being positive singular values of $\bZ_k$, $\bP_k \in \mO^d$ with $\bP_{k,1} \in \R^{d \times r_k}$ and $\bP_{k,2} \in \R^{d \times (d-r_k)}$, and $\bm{Q}_k \in \mO^{m_k}$ with $\bQ_{k,1} \in \R^{m_k\times r_k}$ and $\bQ_{k,2} \in \R^{m_k\times (m_k-r_k)}$. Substituting this SVD into \eqref{eq:hk}, together with $\| \bm Z_k \|_F^2 = m_k$, yields that to maximize $H(\bm Z)$, it suffices to study  for each $k \in [K]$, 
    \begin{align*}
        \max_{\sigma_{k,1},\dots,\sigma_{k,r_k}} \sum_{i=1}^{r_k} \log\left( 1 + \alpha\sigma_{k,i}^2 \right) - \sum_{i=1}^{r_k}\frac{m_k}{m} \log\left( 1 + \alpha_k \sigma_{k,i}^2 \right)\qquad \mathrm{s.t.}\ \sum_{i=1}^{r_k} \sigma_{k,i}^2 = m_k. 
    \end{align*}
    To simplify our development, let $x_{i} := \sigma_{k,i}^2 \ge 0$ for each $i \in [r_k]$. This, together with $m_k=m/K$, implies that it suffices to study 
    \begin{align}\label{eq:gx}
        \max_{x_{1},\dots,x_{r_k}} g(\bm x) &:= \sum_{i=1}^{r_k} \log\left( 1 + \alpha x_{i} \right)  -  \sum_{i=1}^{r_k}\frac{1}{K} \log\left( 1 + \alpha_k x_{i} \right)\qquad \nonumber \\
        &\mathrm{s.t.}\ \sum_{i=1}^{r_k} x_{i} = \frac{m}{K},\ \ x_i \ge 0,\ \forall i \in [r_k].
    \end{align} 
    This, together with \Cref{lem:KKT} and \eqref{eq:alp}, yields that the optimal solution for each $k \in [K]$ is 
    \begin{align}\label{eq1:prop opt cons}
        x_i^* = \frac{m}{r_k K},\ \forall i \in [r_k]. 
    \end{align}
    (i) Suppose that $m \le d$. Then, we have $r_k \le m/K$ for each $k \in [K]$ and $\sum_{k=1}^K r_k \le m$. This, together with \eqref{eq1:prop opt cons} and \Cref{lem:hs}, implies that $r_k = m/K$, and thus $x_i^* = 1$ for all $i \in [r_k]$ and $k \in [k]$. 
    
    (ii) Suppose that $m > d$. Then, we have $r_k \le \min\{d,m/K\}$ for each $k \in [K]$ and $\sum_{k=1}^K r_k \le d$. To compute the optimal function value, we consider the following problem: 
    \begin{align*}
        \max_{r_1,\dots,r_K \in \mathbb{Z}} &\sum_{k=1}^K r_k \left( \log\left(1 + \frac{m\alpha}{r_k K}\right) - \frac{1}{K} \log\left(1 + \frac{m\alpha}{r_k}\right)   \right) \nonumber \\
        &\mathrm{s.t.}\ \sum_{k=1}^K r_k = d,\ r_k \le \min\left\{d,\frac{m}{K}\right\},\ \forall k \in [K]. 
    \end{align*}
    Now, we study the following function: 
    \begin{align*}
        \phi(x) :=  x\left(\log\left(1 + \frac{m\alpha}{Kx} \right)  - \frac{1}{K}\log\left( 1 + \frac{m\alpha}{x} \right)\right),\ \text{where}\ x \in \left[1, \frac{m}{K}\right] 
    \end{align*}
    We compute 
    \begin{align*}
    \phi^{\prime}(x) & =  \log\left(1 + \frac{m\alpha}{Kx}\right) - \frac{1}{K}\log\left( 1 + \frac{m\alpha}{x} \right)   - \frac{m\alpha}{Kx+m\alpha} + \frac{m\alpha}{K(x+m\alpha)}, \\
    \phi^{\prime\prime}(x) & = -\frac{m\alpha/x}{Kx+m\alpha} + \frac{m\alpha / x}{K(x+m\alpha)} + \frac{Km\alpha}{(Kx+m\alpha)^2} - \frac{m\alpha}{K(x+m\alpha)^2} \\
    &= -\frac{m^2\alpha^2/x}{(Kx+m\alpha)^2} + \frac{m^2\alpha^2/x}{K(x+m\alpha)^2}.
    \end{align*}
    Since $x \in [1,m/K]$, we have $Kx^2 \le m^2/K \le m^2\alpha^2$ when $\alpha \ge 1/\sqrt{K}$, and thus $\phi^{\prime\prime}(x) \le 0$. Therefore, $\phi(x)$ is a concave function for all $x \in [1,m/K]$. Then, applying the Jensen inequality yields
    \begin{align*}
        \sum_{k=1}^K \frac{1}{K} f(r_k) \le f\left(  \sum_{k=1}^K r_k\right),
    \end{align*}
    where the inequality becomes equality if and only if $r_1=\dots=r_k=d/K$. This, together with \eqref{eq1:prop opt cons}, yields $x_i^* = m/d$. 

    According to (i) and (ii), we have $x_i^* = m/\min\{m,d\}$ and $r_k=\min\{m,d\}/K$. Therefore, we have $\bm Z_k = m/\min\{m,d\} \bm P_{k,1}\bm Q_{k,1}^T$, where $\bm P_{k,1} \in \mathcal{O}^{d\times r_k}$ and $\bm V_k \in \mathcal{O}^{m_k\times r_k}$ for each $k \in [K]$. Then, we complete the proof. 
    
\end{proof}

Based on the above proposition, we are ready to prove \Cref{prop:equi}. 
\begin{proof}[Proof of \Cref{prop:equi}]
    Let $\hat{\bm Z} = [\bm Z_1,\dots,\bm Z_k]$ denote the optimal solution of Problem \eqref{eq:MCR1}. According to \Cref{prop:opt cons}, it suffices to study the following two cases. 
        
    (i) Suppose that $m < d$. Using this and \eqref{eq:Zk equi}, we have $\hat{\bm Z}_k = \bm U_k\bm V_k^T$ and $\hat{r}_k = m/K$ for each $k \in [K]$. Moreover, according to \Cref{thm:1}, if $m_k = m/K$ for each $k \in [K]$ and $\lambda$ satisfies \eqref{eq:lambda m<d}, one can verify that the global solutions of Problem \eqref{eq:MCR}  satisfy \eqref{eq:Zk opti} with $\overline{\sigma}_1 = \dots = \overline{\sigma}_K =  1$ and $\sum_{k=1}^K r_k = m$. Since $r_k \le m_k$ for each $k \in [K]$, we have $r_k = m/K$ for each $k \in [K]$. Therefore, Problem \eqref{eq:MCR1} and Problem \eqref{eq:MCR} have the same global solution set.

    (ii) Suppose that $m \ge d$. Using this and \eqref{eq:Zk equi}, we have $\hat{\bm Z}_k = m\bm U_k\bm V_k^T/d$ and $\hat{r}_k = d/K$ for each $k \in [K]$. Moreover, according to \Cref{thm:1}, if $m_k = m/K$ for each $k \in [K]$ and $\lambda$ satisfies \eqref{eq:lambda m>d}, one can verify that the global solutions of Problem \eqref{eq:MCR}  satisfy \eqref{eq:Zk opti} with $\overline{\sigma}_1 = \dots = \overline{\sigma}_K =  m/d$ and $\sum_{k=1}^K r_k = d$. Therefore, the global solution set of Problem \eqref{eq:MCR1} is a subset of that of Problem \eqref{eq:MCR}. 
\end{proof}

\begin{lemma}\label{lem:KKT}
Suppose that $m, K$ are integers such that $m/K$ is a positive integer, $r\le m/K$ is an integer, and $\alpha > 0$ is a constant. Consider the following optimization problem
\begin{align}\label{eq:Fx}
    \min_{\bm x \in \R^r} \sum_{i=1}^r -\log\left( 1 + \alpha x_i \right) + \sum_{i=1}^r \frac{1}{K} \log(1+K\alpha x_i) \quad \mathrm{s.t.}\ \sum_{i=1}^r x_i = \frac{m}{K},\ x_i \ge 0,\ \forall i \in [r]. 
\end{align}
If 
\begin{align}\label{eq:alp low}
    \alpha \ge 6 K^{\frac{1}{K-1}}\exp(1)\left(1 + \frac{1}{\sqrt{K}} \right)^{\frac{m}{K-1}},
\end{align}
the optimal solution is
\begin{align}\label{eq:opt x}
    x_i^* = \frac{m}{rK},\ \forall i \in [r].
\end{align}
\end{lemma}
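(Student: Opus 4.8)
The plan is to reduce the constrained problem to a one‑variable convexity analysis by exploiting separability. Write $\psi(t) := -\log(1+\alpha t) + \tfrac{1}{K}\log(1+K\alpha t)$ for $t\ge 0$, so that the objective of \eqref{eq:Fx} is $\Phi(\bm x) = \sum_{i=1}^{r}\psi(x_i)$ over the compact simplex $\Delta := \{\bm x:\ x_i\ge 0,\ \sum_i x_i = m/K\}$; a minimizer therefore exists. Set $c:= m/K$ and $v_0 := c/r = m/(rK)$; since $r\le m/K$ we have $v_0\ge 1$. I would prove the single global inequality
\[
\psi(t)\ \ge\ L(t):=\psi(v_0)+\psi'(v_0)(t-v_0)\qquad\text{for all }t\ge 0,
\]
with equality only at $t=v_0$; summing over coordinates and using $\sum_i x_i = r v_0 = c$ annihilates the linear part, giving $\Phi(\bm x)\ge r\psi(v_0)=\Phi(\text{uniform point})$, with equality iff $x_i\equiv v_0$, which is exactly \eqref{eq:opt x}. (One may assume $K\ge 2$; for $K=1$, $\psi\equiv 0$ and the claim is vacuous.)

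To establish the shape of $\psi$, I would compute $\psi'(t) = -\dfrac{\alpha^2(K-1)t}{(1+\alpha t)(1+K\alpha t)}\le 0$ and $\psi''(t) = -\dfrac{\alpha^2(K-1)\bigl(1-K\alpha^2 t^2\bigr)}{\bigl[(1+\alpha t)(1+K\alpha t)\bigr]^2}$, so that $\psi$ is concave on $[0,t_0]$ and strictly convex on $[t_0,\infty)$ with $t_0:=1/(\alpha\sqrt K)$. From \eqref{eq:alp low} we get $\alpha>1/\sqrt K$, hence $v_0\ge 1>t_0$, so on the convex branch $[t_0,\infty)$ the tangent line $L$ at $v_0$ lies weakly below $\psi$, strictly except at $v_0$. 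On the concave branch $[0,t_0]$ the function $\psi-L$ is concave; at the right endpoint it equals $\psi(t_0)-L(t_0)>0$ (just shown), and at $0$ it equals $-L(0) = -\rho(v_0)$, where $\rho(v):=\psi(v)-v\psi'(v)$. Since a concave function that is positive at both endpoints of an interval is positive throughout, it only remains to verify $\rho(v_0)<0$.

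For this last estimate I would note $\rho'(v) = -v\psi''(v)<0$ for $v>t_0$, so $\rho$ is decreasing on $[1,\infty)$ and $\rho(v_0)\le\rho(1)$, and then bound
\[
\rho(1) = -\log(1+\alpha)+\tfrac1K\log(1+K\alpha)+\tfrac{\alpha^2(K-1)}{(1+\alpha)(1+K\alpha)}\ \le\ -\tfrac{K-1}{K}\log\alpha+\tfrac{\log(2K)}{K}+1,
\]
using $-\log(1+\alpha)\le-\log\alpha$, $\tfrac1K\log(1+K\alpha)\le\tfrac1K\log(2K\alpha)$, and $\tfrac{\alpha^2(K-1)}{(1+\alpha)(1+K\alpha)}<\tfrac{K-1}{K}<1$. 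The right-hand side is negative once $\alpha>e\,(2eK)^{1/(K-1)}$, and a short check ($6(1+1/\sqrt K)^{m/(K-1)}\ge 6>(2e)^{1/(K-1)}$ for all $K\ge 2$, together with $m\ge K$) shows \eqref{eq:alp low} is stronger than this bound. Chaining everything yields $\Phi(\bm x)>r\psi(v_0)$ for every non-uniform $\bm x\in\Delta$, which gives both optimality and uniqueness of \eqref{eq:opt x}.

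The main obstacle I anticipate is handling the concave branch of $\psi$ near $0$ — that is, ruling out optimizers that concentrate mass on near-zero coordinates — which is exactly where the lower bound on $\alpha$ must be invoked; the remaining steps are routine one-dimensional calculus and bookkeeping.
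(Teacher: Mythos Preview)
Your argument is correct and takes a genuinely different route from the paper. The paper proceeds via a full KKT analysis: it shows that the positive entries of any KKT point take at most two values $\overline{x},\underline{x}$, uses the second-order necessary condition to rule out optima with more than one coordinate at $\underline{x}$, then compares the remaining candidates' objective values case by case (including solutions supported on fewer than $r$ coordinates) through a somewhat delicate estimate and a separate monotonicity lemma (\Cref{lem:hs}). You instead prove a single pointwise tangent-line inequality $\psi(t)\ge \psi(v_0)+\psi'(v_0)(t-v_0)$ for all $t\ge 0$, exploiting the concave-then-convex shape of $\psi$; summing and using $\sum_i x_i = r v_0$ collapses the linear part and gives optimality and uniqueness of the uniform point in one stroke. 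Your approach is shorter and more direct---no case splits, no auxiliary lemma---and uniqueness comes for free; the paper's KKT route, while more laborious, does yield finer structural information about all critical points of the constrained problem.
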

\begin{proof}
If $r=1$, it is trivial to see that \eqref{eq:opt x} is the optimal solution. Therefore, it suffices to study $r \ge 2$. To simplify our development, let $f(x):= -\log(1+\alpha x) + \log(1+K\alpha x)/K$ and $F(\bm x) := \sum_{i=1}^r f(x_i)$. Then, one can verify that for all $x \ge 0$,  
\begin{align}\label{eq0:lem KKT}
f^\prime(x) = -\frac{\alpha}{1+\alpha x} + \frac{\alpha}{1+K\alpha x} < 0,\quad f^{\prime\prime}(x) = \frac{\alpha^2}{(1+\alpha x)^2} - \frac{K\alpha^2}{(1+K\alpha x)^2}. 
\end{align} 
Introducing dual variables $\lambda$ associated with the constraint $\sum_{i=1}^r x_i = m/K$ and $\mu_i$ associated with the constraint $x_i \ge 0$ for each $i \in [r]$, we write the Lagrangian as follows
\begin{align}\label{eq:Lag}
    \mathcal{L}(\bm x; \lambda, \bm \mu) = \sum_{i=1}^r f(x_i) + \lambda\left( \sum_{i=1}^r x_i -\frac{m}{K} \right) - \sum_{i=1}^r \mu_ix_i.
\end{align}
Then, we write the KKT system as follows:      
\begin{align}\label{eq:KKT}
 -\frac{\alpha}{1+\alpha x_i} + \frac{\alpha}{1+K\alpha x_i} + \lambda - \mu_i = 0,\ x_i \mu_i = 0,\ x_i \ge 0,\ \mu_i \ge 0,\ \forall i \in [r],\ \sum_{i=1}^r x_i = \frac{m}{K}. 
\end{align} 
Now, let $\mathcal{S} := \{i \in [r]: x_i > 0 \}$ denote the support set of a KKT point $\bm x \in \R^r$ and $s := |\mathcal{S}|$ denote the cardinality of the support set, where $1 \le s \le r$. This, together with \eqref{eq:KKT}, implies that for each $i \in \mathcal{S}$, 
\begin{align}\label{eq:KKT 1}
    -\frac{\alpha}{1+\alpha x_i} + \frac{\alpha}{1+K\alpha x_i} + \lambda = 0,\  \sum_{i \in \mathcal{S}} x_i = \frac{m}{K}.  
\end{align}
This is equivalent to the following quadratic equation:
\begin{align}\label{eq1:lem KKT}
   K\lambda \alpha x_i^2 - \left(  (K-1)\alpha - (K+1)\lambda \right)x_i + \frac{\lambda}{\alpha} = 0. 
\end{align}
We compute 
\begin{align}\label{eq:eta delta}
    \Delta = \eta^2 - 4K\lambda^2, \text{where}\ \eta := (K-1)\alpha - (K+1)\lambda.  
\end{align}
Note that for all $i \in \mathcal{S}$, we have $x_i > 0$, and thus $\mu_i = 0$. This, together with $K \ge  2$ and the first equation in \eqref{eq:KKT 1}, implies $\lambda > 0$.
Consequently, the quadratic equation \eqref{eq1:lem KKT} has a positive root if and only if $\eta \ge 0$ and $\Delta \ge 0$. This implies
\begin{align}\label{eq2:lem KKT}
0 < \lambda \le \frac{\sqrt{K} - 1}{\sqrt{K} + 1}\alpha. 
\end{align}
Then, the solution of Problem \eqref{eq1:lem KKT} is $x_i \in \{\overline{x}, \underline{x}\}$ for each $i \in \mathcal{S}$, where
\begin{align}\label{eq3:lem KKT}
\overline{x} = \frac{\eta + \sqrt{\Delta}}{2K\lambda\alpha},\quad \underline{x} = \frac{\eta - \sqrt{\Delta}}{2K\lambda\alpha}.
\end{align}
Now, we discuss the KKT points that could potentially be optimal solutions. Let $\bm x \in \R^r$ be a KKT point satisfying $x_i \in \{\overline{x}, \underline{x}\}$ for each $i \in \mathcal{S}$, where $s \in \{1,2,\dots,r\}$. In particular, when $s=1$, we have $x_i = m/K$ for all $i \in \mathcal{S}$. In the following, we consider $s \in \{2,\dots,r\}$.

{\bf Case 1}. Suppose that $x_i=x_j$ for all $i,j \in \mathcal{S}$. This, together with $\sum_{i\in \mathcal{S}} x_i = m/K$ and $x_i \in \{\overline{x}, \underline{x}\}$ for each $i \in \mathcal{S}$, yields 
\begin{align}\label{eq:solu 1}
x_i = \frac{m}{s K},\ \forall i \in \mathcal{S}. 
\end{align}

{\bf Case 2}. Suppose that there exists $i\neq j \in \mathcal{S}$ such that $x_i \neq x_j$. This, together with $x_i \in \{\overline{x}, \underline{x}\}$, implies $\overline{x} > \underline{x}$. According to \eqref{eq0:lem KKT}, we have $f^{\prime\prime}(x) = 0$ at $\hat{x} = 1/(\alpha\sqrt{K})$. Then, we obtain that $f^\prime(x)$ is strictly decreasing in $[0,\hat{x}]$ and strictly increasing in $[\hat{x},\infty]$. Then, one can further verify that $\underline{x} < \hat{x} < \overline{x}$. This, together with \eqref{eq0:lem KKT}, implies 
\begin{align}\label{eq4:lem KKT}
f^{\prime\prime}(\underline{x}) < 0,\ f^{\prime\prime}(\overline{x}) > 0. 
\end{align}
For ease of exposition, let $l(\bm x) = |\{i \in \mathcal{S}: x_i = \underline{x}\}|$ be the number of entries of $\bm x$ that equal to $\underline{x}$. Then, we claim that any optimal solution $\bm x^\star$ satisfies $l(\bm x^\star) \le 1$. Now, we prove this claim by contradiction. Without loss of generality, we assume that $x_i^\star = \overline{x}$ for all $i=1,\dots,r-l$ and $x_i^\star = \underline{x}$ for all $i=r-l+1,\dots,r$ with $l \ge 2$. This, together with \eqref{eq4:lem KKT} and $l \ge 2$, yields
\begin{align}\label{eq6:lem KKT}
f^{\prime\prime}(x_{r-1}^\star) < 0,\quad f^{\prime\prime}(x_r^\star) < 0. 
\end{align}
Using the second-order necessary condition for constraint optimization problems (see, e.g., \citep[Theorem 12.5]{nocedal1999numerical}) and $x_i^\star \ge 0$ for all $i \in [r]$, we obtain 
\begin{align}\label{eq5:lem KKT}
\sum_{i=1}^r f^{\prime\prime}(x_i^\star)v_i^2 \ge 0,\ \forall \bm v \in \R^r\ \text{s.t.}\ \sum_{i=1}^r v_i = 0. 
\end{align}
Then, we take $\bm v \in \R^r$ such that $v_1=\dots=v_{r-2} = 0$ and $v_{r-1}= -v_r \neq 0$. Substituting this into \eqref{eq5:lem KKT} yields
\begin{align*}
f^{\prime\prime}(x_{r-1}^\star) + f^{\prime\prime}(x_{r}^\star) \ge 0,
\end{align*}
which contradicts \eqref{eq6:lem KKT}. Therefore, $\bm x^\star$ cannot be an optimal solution. Then, we prove the claim. In this case, we can write the KKT point that could be an optimal solution as follows: There exists $i\in \mathcal{S}$ such that 
\begin{align}\label{eq:solu 2}
    x_i = \underline{x},\ x_j = \overline{x},\ \forall j \neq i. 
\end{align}

Now, we compute the function values for the above two cases, i.e., \eqref{eq:solu 1} and \eqref{eq:solu 2}, and compare them to determine which one is the optimal solution. To simplify our analysis, let $h(x) := f(x)/x$. For \eqref{eq:solu 1} in {\bf Case 1}, we compute
\begin{align}\label{eq:F1}
    F_1 =  s f\left(\frac{m}{sK}\right) = \frac{m}{K}h\left( \frac{m}{sK} \right). 
\end{align}
For \eqref{eq:solu 2} in {\bf Case 2}, we compute
\begin{align}\label{eq:F2}
    F_2 =  (s-1) f\left(\overline{x}\right) + f\left(\underline{x}\right) \ge (s-1)f\left(\frac{m}{(s-1)K}\right) + f(\underline{x}) = \frac{m}{K}h\left(\frac{m}{(s-1)K}\right) +  f(\underline{x}),
\end{align}
where the inequality is because $\overline{x} \le {m}/\left({(s-1)K}\right)$ and $f(x)$ is strictly dereasing in $[0,\infty)$. For all $x \in \left[{m/(sK)}, {m}/\left({(s-1)K}\right)\right]$, we compute 
\begin{align}\label{eq8:lem KKT}
    h^\prime(x) & = \frac{f^\prime(x)x - f(x)}{x^2} = -\frac{1}{x}\left( \frac{\alpha}{1+\alpha x} - \frac{\alpha}{1+K\alpha x} \right) + \frac{1}{x^2}\left( \log(1+\alpha x) - \frac{1}{K} \log(1+K\alpha x) \right) \notag\\
    & \ge -\frac{(K-1)\alpha^2}{(1+\alpha x)(1+K\alpha x)} + \frac{1}{x^2}\left(\log(1+\alpha) - \frac{1}{K}\log(1+\alpha K)\right) \notag\\
    & \ge -\frac{K - 1}{Kx^2}\left( 1 - \log(1+\alpha) + \frac{\log K}{K-1} \right) \ge -\frac{(K - 1)Ks^2}{m^2}\left( 1 + \frac{\log K}{K-1} - \log(1+\alpha) \right),
\end{align}
where the first inequality follows from $\log(1+\alpha x) -  \log(1+K\alpha x)/K \ge \log(1+\alpha) - \log(1+\alpha K)/K$ due to $x \ge m/(s K) \ge 1$, the second inequality uses $\log(1+\alpha) - \log(1+\alpha K)/K = (K-1)\log(1+\alpha)/K + \log\left( (1+\alpha)/(1+\alpha K) \right)/K \ge  \left((K-1)\log(1+\alpha) - \log K \right)/K$, and the last inequality is because of $x \ge {m}/(sK)$. According to \eqref{eq:alp low}, we have 
\begin{align}\label{eq12:lem KKT}
1 + \frac{\log K}{K-1} - \log(1+\alpha) = \log\left( \frac{{K}^{\frac{1}{K-1}}\exp(1)}{1+\alpha} \right)  < 0. 
\end{align}
Using the mean-value theorem, there exists $x \in \left({m/(sK)}, {m}/\left({(s-1)K}\right)\right)$ such that 
\begin{align}\label{eq11:lem KKT}
h\left( \frac{m}{(s-1)K} \right) - h\left( \frac{m}{sK} \right) = h^\prime(x)\left( \frac{m}{(s-1)K} 
 - \frac{m}{sK} \right) \ge \frac{(K - 1)s }{m(s-1)}\left( \log(1+\alpha) - 1 - \frac{\log K}{K-1} \right),
\end{align}
where the inequality follows from \eqref{eq8:lem KKT}. Now, we are devoted to bounding $f(\underline{x})$. According to \eqref{eq:eta delta} and \eqref{eq2:lem KKT}, we have
\begin{align}\label{eq9:lem KKT}
\underline{x} = \frac{\eta - \sqrt{\Delta}}{2K\lambda\alpha} = \frac{4K\lambda^2}{2K\lambda\alpha(\eta + \sqrt{\Delta})} \le \frac{2\lambda}{\alpha \eta}. 
\end{align}
This, together with the fact that $f(x)$ is decreasing in $(0,\infty)$, yields 
\begin{align*}
f(\underline{x}) & \ge f\left(  \frac{2\lambda}{\alpha \eta} \right) = -\log\left(1+\frac{2\lambda}{\eta}\right) + \frac{1}{K}\log\left(1+\frac{2K\lambda}{\eta} \right) \ge  -\log\left(1+\frac{2\lambda}{\eta}\right) \ge -\log\left(1 + \frac{1}{\sqrt{K}}\right),
\end{align*} 
where the last inequality uses $\eta = (K-1)\alpha - (K+1)\lambda \ge 2\sqrt{K}\lambda$ due to $(K-1)\alpha \ge (\sqrt{K}+1)^2\lambda$ by \eqref{eq2:lem KKT}. This, together with \eqref{eq:F1}, \eqref{eq:F2}, and \eqref{eq11:lem KKT}, yields
\begin{align*}
    F_2 - F_1 & = \frac{m}{K}\left( h\left( \frac{m}{(s-1)K} \right) - h\left( \frac{m}{sK} \right) \right) + f(\underline{x}) \\
    & \ge \frac{(K - 1)s }{m(s-1)}\left( \log(1+\alpha) - 1 - \frac{\log K}{K-1} \right)  -\log\left(1 + \frac{1}{\sqrt{K}}\right) \\
    & \ge \frac{K-1}{m} \log\left( \frac{1+\alpha}{{K}^{\frac{1}{K-1}}\exp(1)} \right) -\log\left(1 + \frac{1}{\sqrt{K}}\right) > 0, 
\end{align*}
where the last inequality follows from \eqref{eq:alp low}. This implies that the optimal solution takes the form of \eqref{eq:solu 1} for some $s \in [r]$. Consequently, the function value of \eqref{eq:solu 1} for each $s \in [r]$ is 
\begin{align*}
    s \left( -\log\left( 1 + \frac{\alpha m}{sK} \right) + \frac{1}{K}\log\left(1 + \frac{\alpha m}{s}\right)  \right). 
\end{align*}
This, together with \Cref{lem:hs}, implies that when the optimal solution takes the form of \eqref{eq:solu 1} with $s=r$, Problem \eqref{eq:Fx} achieves its global minimum. Then, we complete the proof. 
\end{proof}

\begin{lemma}\label{lem:hs}
Consider the setting in \Cref{lem:KKT} and the following function 
\begin{align}\label{eq:h(s)}
h(s) := s\left(\frac{1}{K} \log\left(1+\frac{m\alpha}{s}\right) - \log\left( 1 +  \frac{m\alpha}{sK} \right)\right),
\end{align}
where $s \in [1,r]$ and $\alpha$ satisfies \eqref{eq:alp low}. Then, $h(s)$ is decreasing in $s \in [1,r]$. 
\end{lemma}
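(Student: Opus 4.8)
The plan is to show that $h'(s) \le 0$ for every $s \in [1,r]$; since $r \le m/K$ in the setting of \Cref{lem:KKT}, it suffices to prove $h'(s) \le 0$ on all of $[1, m/K]$. First I would differentiate $h$ directly: using $\frac{d}{ds}[s\log(1+c/s)] = \log(1+c/s) - \frac{c}{s+c}$ with $c = m\alpha$ and with $c = m\alpha/K$, one gets
\begin{align*}
h'(s) = \frac1K\log\left(1+\frac{m\alpha}{s}\right) - \frac{m\alpha}{K(s+m\alpha)} - \log\left(1+\frac{m\alpha}{Ks}\right) + \frac{m\alpha}{Ks+m\alpha},
\end{align*}
so the claim $h'(s)\le 0$ is equivalent to
\begin{align*}
\log\left(1+\frac{m\alpha}{Ks}\right) - \frac1K\log\left(1+\frac{m\alpha}{s}\right) \;\ge\; \frac{m\alpha}{Ks+m\alpha} - \frac{m\alpha}{K(s+m\alpha)}.
\end{align*}

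Next I would bound the two sides separately, using only $1 \le s \le m/K$. Setting $v := m\alpha/(Ks)$, so that $m\alpha/s = Kv$ and $v \ge \alpha$ (because $Ks \le m$), and using $1 + Kv \le K(1+v)$, the left-hand side is at least $\frac{K-1}{K}\log(1+v) - \frac{\log K}{K} \ge \frac{K-1}{K}\log(1+\alpha) - \frac{\log K}{K}$. For the right-hand side, combining the fractions gives $\frac{(m\alpha)^2(K-1)}{K(Ks+m\alpha)(s+m\alpha)}$, which is at most $\frac{K-1}{K}$ since $Ks+m\alpha \ge m\alpha$ and $s+m\alpha \ge m\alpha$. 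Hence it suffices to have $\frac{K-1}{K}\log(1+\alpha) - \frac{\log K}{K} \ge \frac{K-1}{K}$, i.e. $\log(1+\alpha) \ge 1 + \frac{\log K}{K-1}$, i.e. $1+\alpha \ge e\,K^{1/(K-1)}$.

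Finally I would verify this against hypothesis \eqref{eq:alp low}: since $(1+1/\sqrt K)^{m/(K-1)} \ge 1$, that bound already forces $\alpha \ge 6eK^{1/(K-1)} \ge eK^{1/(K-1)}$, so $1+\alpha > eK^{1/(K-1)}$ and the proof is complete. I do not anticipate any serious obstacle here: the argument is elementary once the derivative is computed, and the only mild care needed is to pick the two one-sided estimates loosely enough to stay readable while still landing inside the (very large) slack provided by \eqref{eq:alp low}. As a sanity check, the same conclusion also follows from the identity $h = -\phi$, where $\phi$ was shown to be concave on $[1,m/K]$ in the proof of \Cref{prop:opt cons}: concavity makes $\phi'$ non-increasing, so $h$ is decreasing on $[1,r] \subseteq [1,m/K]$ as soon as $\phi'(m/K) \ge 0$, which is exactly the $s = m/K$ case of the inequality estimated above.
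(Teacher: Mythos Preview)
Your proof is correct and takes a somewhat different, more elementary route than the paper's. The paper first substitutes $x=1/s$, rewrites $h(s)=g(x)$, and computes $g'(x)=-\phi(x)/x^2$ for an auxiliary $\phi$; it then shows $\phi'(x)\le 0$ on $[K/m,1]$ (a second-derivative computation needing $\beta^2x^2\ge K$) and checks the single endpoint value $\phi(K/m)\le 0$. Unwinding the substitution, this is exactly your ``sanity check'': prove $h$ is convex on $[1,m/K]$ and then verify $h'(m/K)\le 0$. Your main argument bypasses the convexity step entirely, bounding $h'(s)$ directly for every $s\in[1,m/K]$ with two elementary estimates (on the logarithmic side via $1+Kv\le K(1+v)$, and on the rational side by dropping the positive $s$ terms in the denominators), arriving at the same numerical criterion $1+\alpha\ge e\,K^{1/(K-1)}$. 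The payoff is brevity---no change of variable, no second derivative---at no cost, since \eqref{eq:alp low} already supplies far more than enough slack.
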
 
\begin{proof}
For ease of exposition, let $\beta := m\alpha$ and $x := 1/s \in [1/r, 1]$. According to \eqref{eq:alp low}, we have 
\begin{align*}
    \alpha \ge  6 K^{\frac{1}{K-1}}\exp(1)\left(1 + \frac{1}{\sqrt{K}} \right)^{\frac{2m}{K-1}} > 1 \ge  \frac{r\sqrt{K}}{m}.  
\end{align*}
This implies $\beta \ge r\sqrt{K}$. Then, we study 
\begin{align}
h(s) = g(x) = \frac{1}{x}\left( \frac{1}{K} \log\left( 1 + \beta x \right) - \log\left( 1 + \frac{\beta x}{K} \right)\right). 
\end{align}
Note that showing $h(s)$ is decreasing in $s \in [1,r]$ is equivalent to proving $g(x)$ is increasing in $x \in [1/r,1]$.
Now, we compute
\begin{align}\label{eq:g(x)}
g^\prime(x) & = \frac{1}{x}\left( \frac{\beta}{K(1+\beta x)} - \frac{\beta}{K+\beta x}  \right) -\frac{1}{x^2}\left( \frac{1}{K} \log\left( 1 + \beta x \right) - \log\left( 1 + \frac{\beta x}{K} \right) \right)  = -\frac{1}{x^2} \phi(x),
\end{align}
where 
\begin{align*}
\phi(x) := \frac{1}{K} \log\left( 1 + \beta x \right) - \log\left( 1 + \frac{\beta x}{K} \right) +    \beta x \left( \frac{1}{K+\beta x} - \frac{1}{K(1+\beta x)} \right). 
\end{align*}
Then, it suffices to show $\phi(x) \le 0$ for all $x \in [K/m,1]$ due to $1/r \ge K/m$. Towards this goal, we compute
\begin{align*}
\phi^\prime(x) & =  \frac{\beta}{K(1+\beta x)} - \frac{\beta}{K+\beta x} +  \beta \left( \frac{1}{K+\beta x} - \frac{1}{K(1+\beta x)} \right) +  \beta x \left( \frac{-\beta}{(K+\beta x)^2} + \frac{\beta}{K(1+\beta x)^2} \right)\\
& = -x\beta^2 \frac{(K-1)(\beta^2 x^2 - K)}{(K+\beta x)^2K^2(1+\beta x)^2} \le 0,
\end{align*}
where the inequality follows from $\beta^2 x^2 \ge \beta^2 K^2/m^2 \ge K$ due to $x \in [K/m,1]$ and $\beta = \alpha m > 2m > m/\sqrt{K}$. Therefore, $\phi(x)$ is decreasing in $[K/m,1]$. Next, we compute
\begin{align*}
\phi\left( \frac{K}{m} \right) & = \frac{1}{K} \log\left( 1 + \frac{\beta K}{m} \right) - \log\left( 1 + \frac{\beta}{m} \right) + \frac{\beta }{m} \left( \frac{1}{1 + {\beta }/{m}} - \frac{1}{1+ {\beta K}/{m}} \right) \\
& = \frac{1}{K} \log\left( 1 +\alpha K \right) - \log\left( 1 + \alpha \right) + \alpha \left( \frac{1}{1 + \alpha } - \frac{1}{1+ \alpha K} \right) \\
& \le \frac{1}{K} \log\left( 1 +\alpha K \right) - \log\left( 1 + \alpha \right) + 1 \le \frac{1}{2} \log\left( 1 + 2\alpha \right) - \log\left( 1 + \alpha \right) + 1 \le 0,
\end{align*}
where the second equality is due to $\beta=\alpha m$, the second inequality holds because $\log\left( 1 +\alpha K \right)/K$ is decreasing as $K \ge 2$ increases when $\alpha \ge 2$, and the last inequality follows from $\alpha \ge 15$ by \eqref{eq:alp low}. 
This, together with the fact that $\phi$ is decreasing in $[K/m,1]$, yields $\phi(x) \le \phi\left( {K}/{m} \right)  \le 0$. Using this and \eqref{eq:g(x)}, we obtain $g^\prime(x) \ge 0$ in  $[K/m,1]$. Therefore, $g(x)$ is increasing  in  $[K/m,1]$. Then, we complete the proof. 
\end{proof}

\subsection{Proof of and \Cref{thm:2}}\label{sec:pf thm2}

\begin{proof}[Proof of \Cref{thm:2}]
According to (i) of \Cref{prop:local max}, if $\bm Z$ is a critical point but not a local maximizer, we have $\bm Z \in \mathcal{Z}_2 \cup \mathcal{Z}^c$. This, together with (ii) of \Cref{prop:local max} and \Cref{prop:saddle}, yields that $\bm Z$ is a strict saddle point. 
\end{proof}

\section{Additional Experimental Setups and Results}\label{app sec:exp}

In this section, we provide additional implementation details and experimental results under different parameter settings for \Cref{subsec:exp1,subsec:exp2} in \Cref{app subsec:exp 1,app subsec:exp 2}, respectively.  

\subsection{Implementation Details and Additional Results in \Cref{subsec:exp1}}\label{app subsec:exp 1}

\paragraph{Training setups.}  In this subsection, we employ full-batch gradient descent (GD) for solving Problem \eqref{eq:MCR}. Here, we use the Gaussian distribution to randomly initialize GD. More precisely, we randomly generate an initial point $\bm Z^{(0)}$ whose entries are i.i.d. sampled from the standard normal distribution, i.e. $z_{ij}^{(0)} \overset{i.i.d.}{\sim} \mathcal{N}(0,1)$. We fix the learning rate of GD as $10^{-1}$ in the training. We terminate the algorithm when the gradient norm at some iterate is less $10^{-5}$. 

\begin{figure*}[t]
\begin{center}
	\begin{subfigure}{0.32\textwidth}
    	\includegraphics[width = 0.9\linewidth]{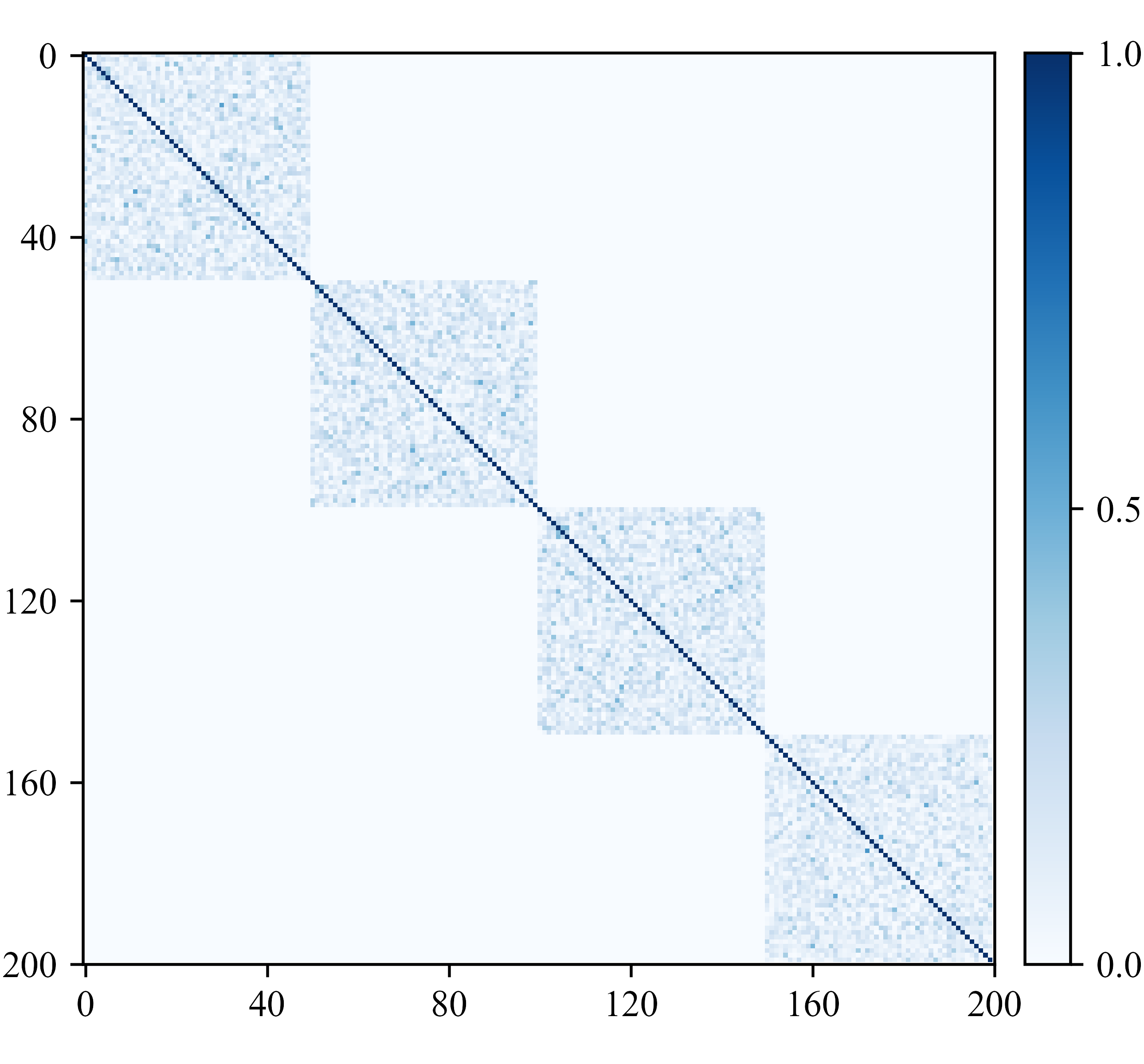}\vspace{-0.1in}
    \caption{Experiment 1} 
    \end{subfigure} 
    \begin{subfigure}{0.32\textwidth}
    	\includegraphics[width = 0.9\linewidth]{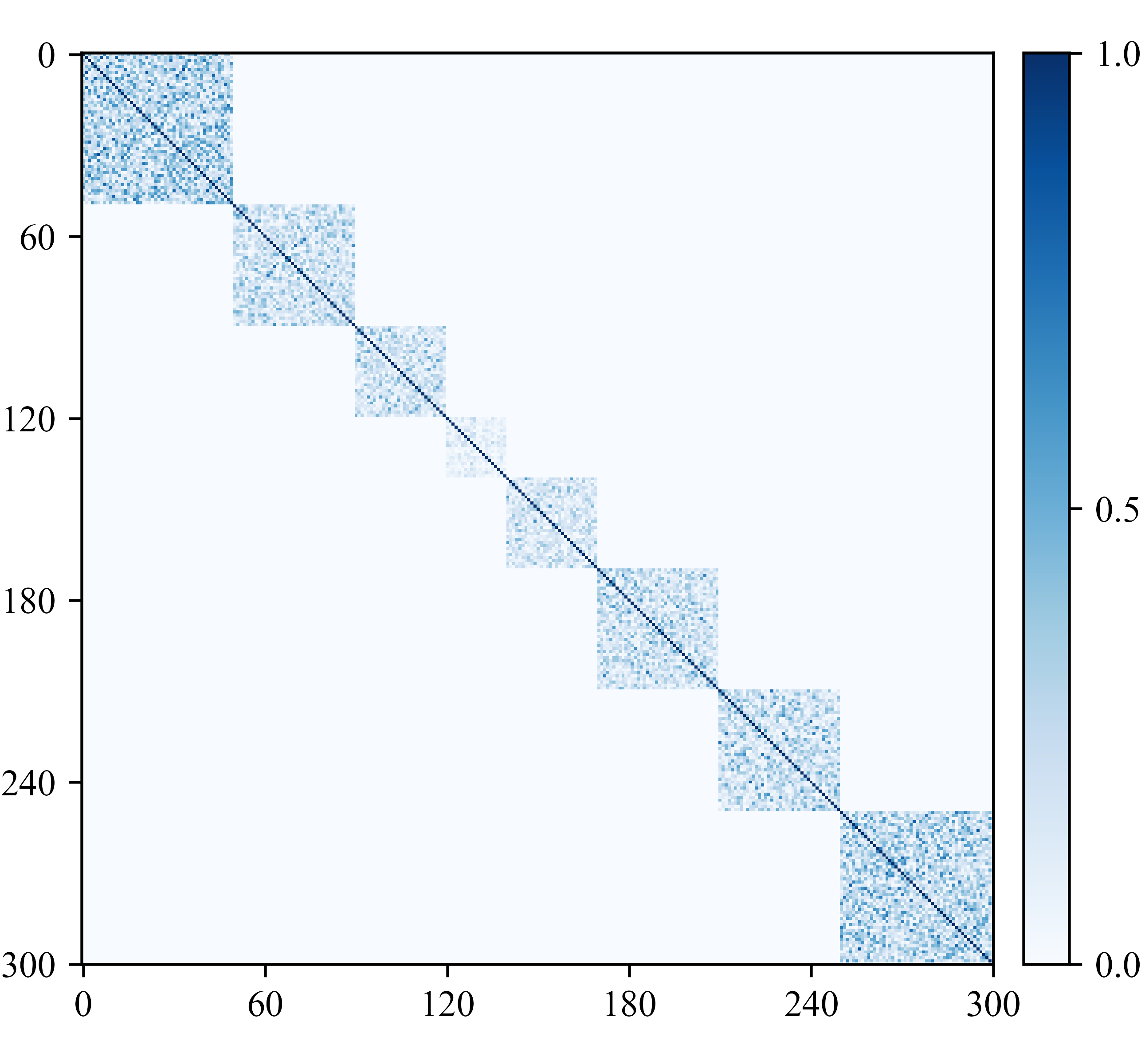}\vspace{-0.1in}
    \caption{Experiment 2} 
    \end{subfigure}\vspace{-0.05in} 
    \begin{subfigure}{0.32\textwidth}
    	\includegraphics[width = 0.9\linewidth]{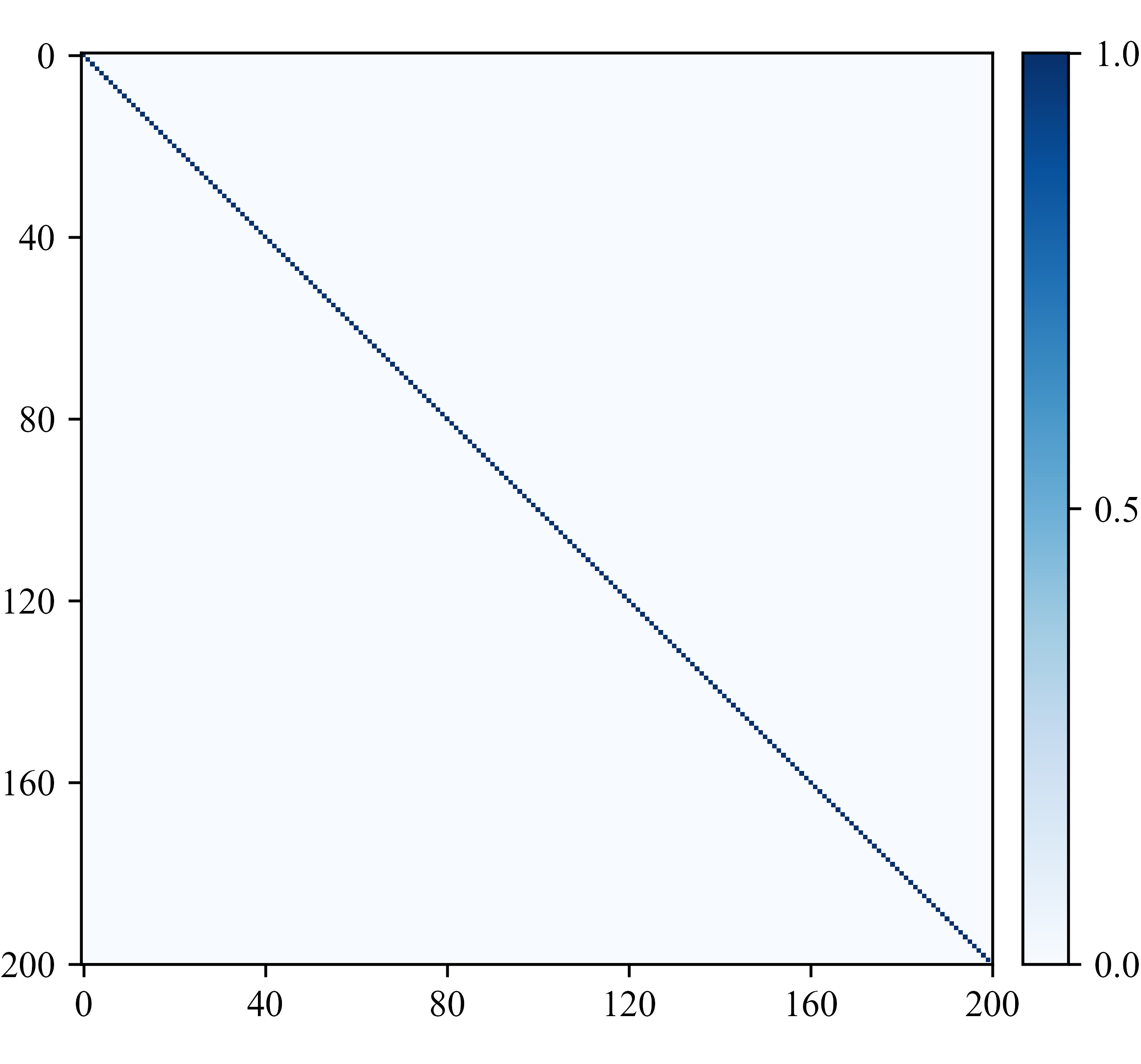}\vspace{-0.1in}
    \caption{Experiment 3} 
    \end{subfigure}\vspace{-0.05in} 
    \caption{\textbf{Heatmap of cosine similarity between pairwise features under different settings.}}\vspace{-0.15in} 
    \label{fig:heatmap}
\end{center}
\end{figure*}

In addition to the results presented in \Cref{subsec:exp1}, we perform additional experiments under different settings as follows. To support our theorems, we visualize the heatmap of learned features and plot the number and magnitude of singular values in each class. Unless specified otherwise, we use the training setups introduced above in the following experiments.  

\begin{itemize}
    \item {\bf Experiment 1 on balanced data.} In this experiment, we consider that the number of samples in each class is same and set the parameters in Problem \eqref{eq:MCR} as follows: the dimension of features $d = 100$, the number of classes $K = 4$, the number of samples in each class is $m_1=m_2=m_3=m_4=50$, the regularization parameter $\lambda=0.1$, and the quantization error $\epsilon = 0.5$. We visualize the heatmap between pairwise features of $\bm Z$ obtained by GD in \Cref{fig:heatmap}(a) and the number and magnitude of singular values in each class in \Cref{fig1:singu}. 
    \item {\bf Experiment 2 on data with more classes.} In this experiment, we set the parameters in Problem \eqref{eq:MCR} as follows: the dimension of features $d = 100$, the number of classes $K = 8$, the number of samples in each class is $m_1=50,m_2=40,m_3=30,m_4=20,m_5=30,m_6=40,m_7=40,m_8=50$, the regularization parameter $\lambda=0.1$, and the quantization error $\epsilon = 0.5$.  We visualize the heatmap between pairwise features of $\bm Z$ obtained by GD in \Cref{fig:heatmap}(b) and the number and magnitude of singular values in each class in \Cref{fig2:singu}. 
    \item {\bf Experiment 3 on data where the dimension $d$ is larger than the number of samples $m$.} In this experiment, we set the parameters in Problem \eqref{eq:MCR} as follows: the dimension of features $d = 300$, the number of classes $K = 4$, the number of samples in each class is $m_1=50,m_2=50,m_3=40,m_4=60$, the regularization parameter $\lambda=0.01$, and the quantization error $\epsilon = 5$. Note that in this experiment, we set the learning rate as $1$. We visualize the heatmap between pairwise features of $\bm Z$ obtained by GD in \Cref{fig:heatmap}(c) and the number and magnitude of singular values in each class in \Cref{fig3:singu}. 
\end{itemize}

 According to the results in Figures \ref{fig:heatmap}(a), \ref{fig:heatmap}(b), \ref{fig1:singu}, and \ref{fig2:singu} of Experiments 1 and 2, we observe that when the number of samples is larger than its dimension, i.e., $m \ge d$, the learned features via the MCR$^2$ principle are within-class compressible and between-class discriminative in both balanced and unbalanced data sets. Moreover, the dimension of the space spanned by these features is maximized such that $\sum_{k=1}^K r_k = \min\{m,d\}$. This directly supports \Cref{thm:1}. By comparing the function value returned by GD and that computed by the closed-form in \Cref{thm:1}, we found that GD with random initialization will always converge to a global maximizer of Problem \eqref{eq:MCR} when the data is balanced, while it will always converge to a local maximizer of Problem \eqref{eq:MCR} when data is unbalanced. This directly supports \Cref{thm:2}. 

 According to the results in Figures \ref{fig:heatmap}(c) and \ref{fig3:singu} of Experiment 3, we observe that when the number of samples is smaller than its dimension, i.e., $m \le d$, the learned features via the MCR$^2$ principle are orthogonal to each other and the dimension of each subspace is equal to the number of samples, i.e., $r_k = m_k$ for each $k \in [K]$. This exactly supports \Cref{thm:1}. Indeed, when $d \ge m$ and $r_k = m_k$ for each $k \in [K]$, it follows from \Cref{thm:1} that $\bm V_k = \bm I$ and thus $\bm Z_k = \overline{\sigma}_k \bm U_k$ for each $k \in [K]$ for each local maximizer. Therefore, this also supports \Cref{thm:2} as GD with random initialization converges to a local maximizer. 

\begin{figure*}[t]
    \centering
    \includegraphics[width = \linewidth]{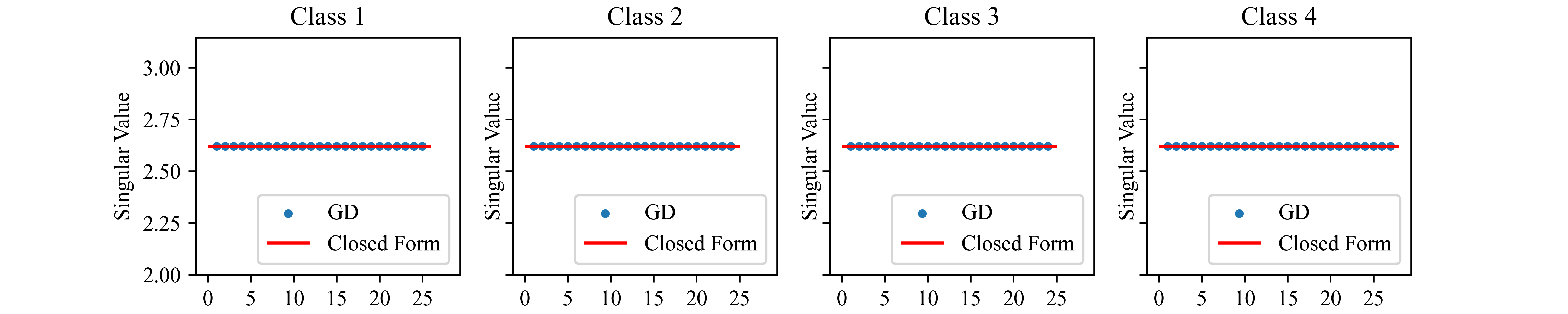}\vspace{-0.15in}
    \caption{\textbf{The number and magnitude of singular values in each subspace in Experiment 1.} The blue dots are plotted based on the singular values by applying SVD to the solution returned by GD, and the red line is plotted according to the closed-form solution in \eqref{eq:Zk opti}. The number of singular values in each subspace is $25, 24, 24, 27$, respectively.}\vspace{-0.05in} \label{fig1:singu}
\end{figure*}

\begin{figure*}[t]
    \centering
    \includegraphics[width = \linewidth]{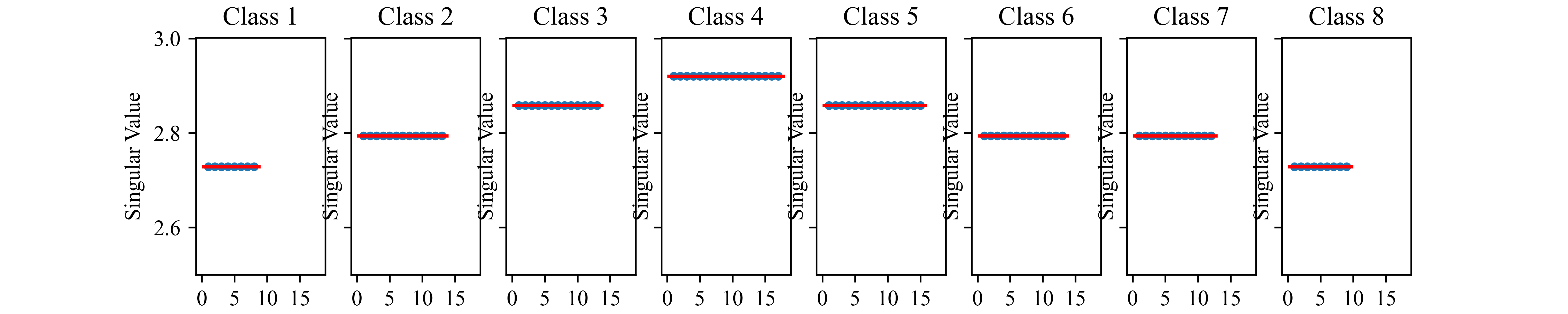}\vspace{-0.15in}
    \caption{\textbf{The number and magnitude of singular values in each subspace in Experiment 2.} The blue dots are plotted based on the singular values by applying SVD to the solution returned by GD, and the red line is plotted according to the closed-form solution in \eqref{eq:Zk opti}. The number of singular values in each subspace is $8, 13, 13, 17, 15, 13, 12, 9$, respectively.} \label{fig2:singu}
\end{figure*}

\begin{figure*}[!htbp]
    \centering
    \includegraphics[width = \linewidth]{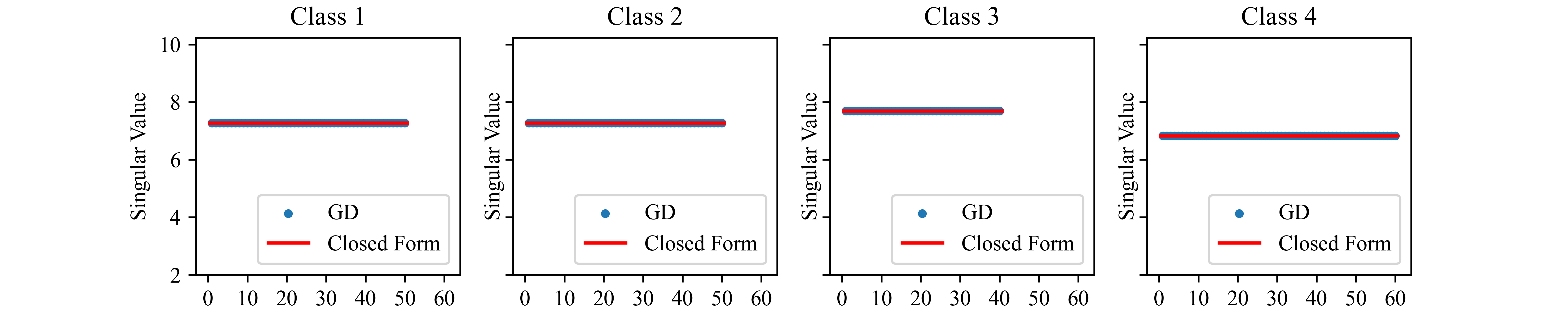}\vspace{-0.15in}
    \caption{\textbf{The number and magnitude of singular values in each subspace in Experiment 3.} The blue dots are plotted based on the singular values by applying SVD to the solution returned by GD, and the red line is plotted according to the closed-form solution in \eqref{eq:Zk opti}. The number of singular values in each subspace is $50, 50, 40, 60$, respectively.}\vspace{-0.05in} \label{fig3:singu}
\end{figure*}

\begin{figure*}[t]
\begin{center}
    \begin{subfigure}{0.24\textwidth}
    	\includegraphics[width = 1\linewidth]{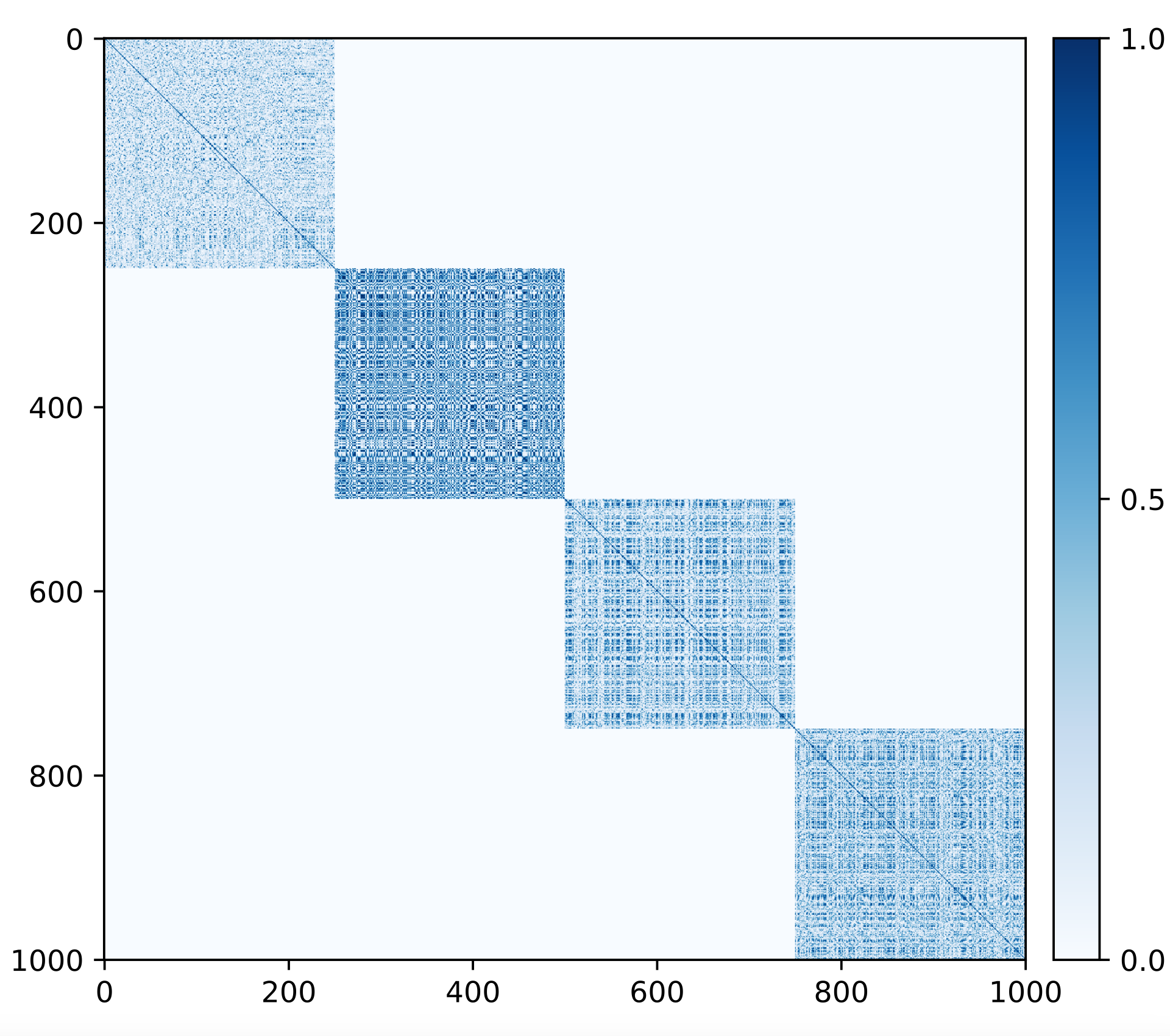}\vspace{-0.05in}
    \caption*{\footnotesize MNIST: $m=1000,K=4$} 
    \end{subfigure} 
    \begin{subfigure}{0.24\textwidth}
    	\includegraphics[width = 1\linewidth]{figures/MNIST_K=6_M=1500.png}\vspace{-0.05in}
    \caption*{\footnotesize MNIST: $m=1500,K=6$} 
    \end{subfigure}
    \begin{subfigure}{0.24\textwidth}
    	\includegraphics[width = 1\linewidth]{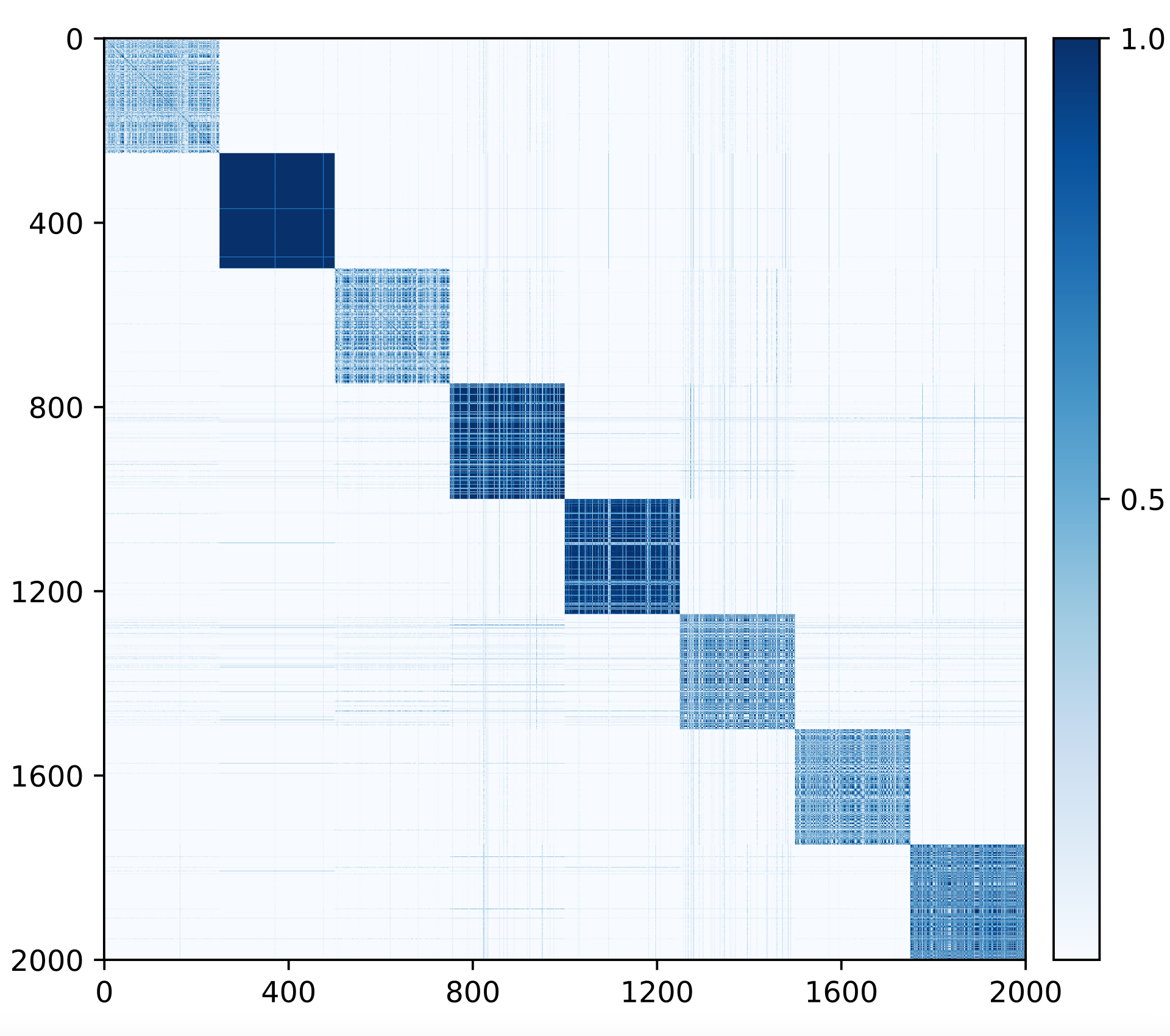}\vspace{-0.05in}
    \caption*{\footnotesize MNIST: $m=2000,K=8$} 
    \end{subfigure} 
    \begin{subfigure}{0.24\textwidth}
    	\includegraphics[width = 1\linewidth]{figures/MNIST_K=10_M=2500.png}\vspace{-0.05in}
    \caption*{\footnotesize MNIST: $m=2500,K=10$} 
    \end{subfigure}
    
    \begin{subfigure}{0.24\textwidth}
    	\includegraphics[width = 1\linewidth]{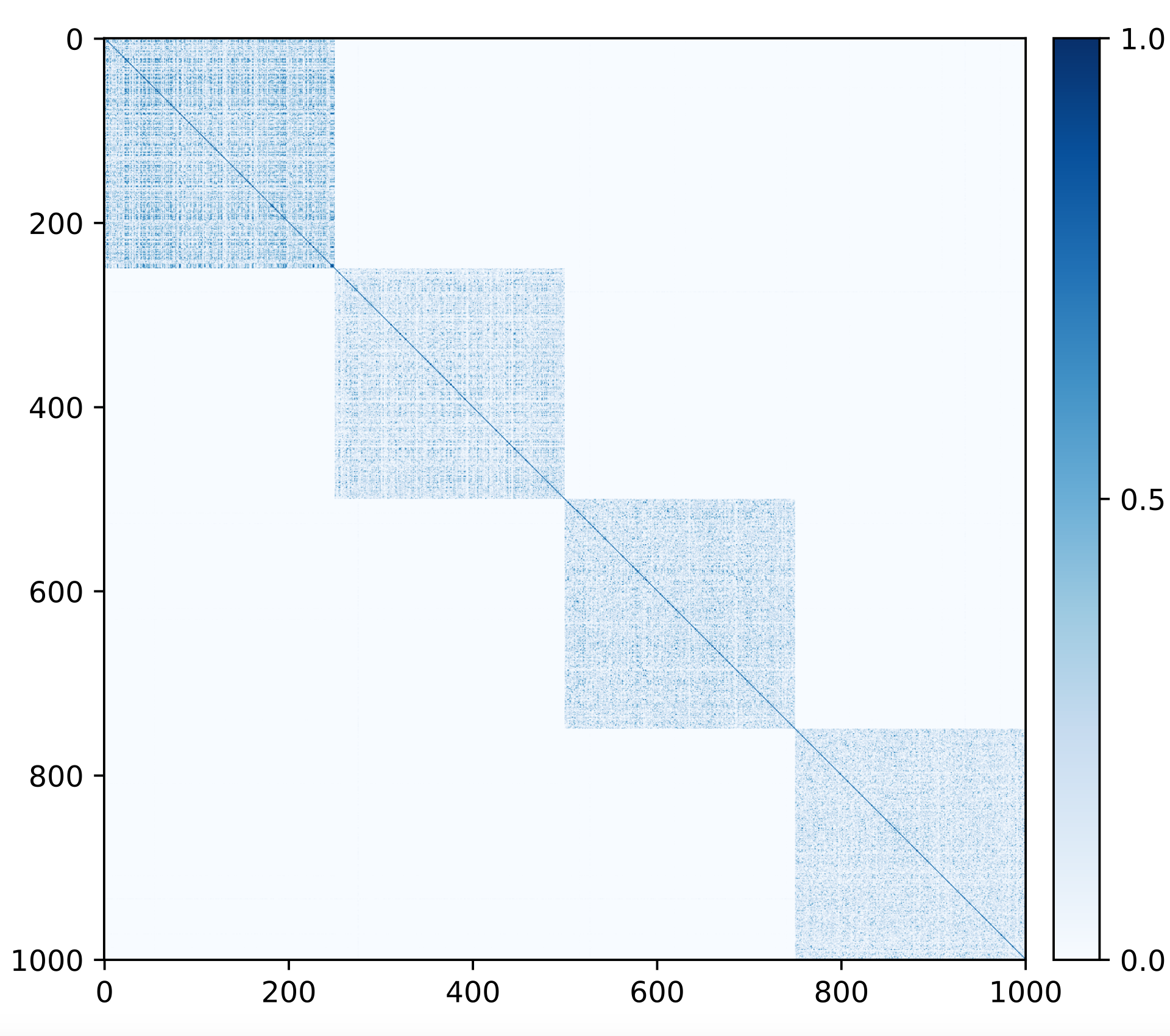}\vspace{-0.1in}
    \caption*{\footnotesize CIFAR: $m=1000,K=4$} 
    \end{subfigure} 
    \begin{subfigure}{0.24\textwidth}
    	\includegraphics[width = 1\linewidth]{figures/CIFAR10_K=6_M=1500.png}\vspace{-0.05in}
    \caption*{\footnotesize CIFAR: $m=1500,K=6$} 
    \end{subfigure}
    \begin{subfigure}{0.24\textwidth}
    	\includegraphics[width = 1\linewidth]{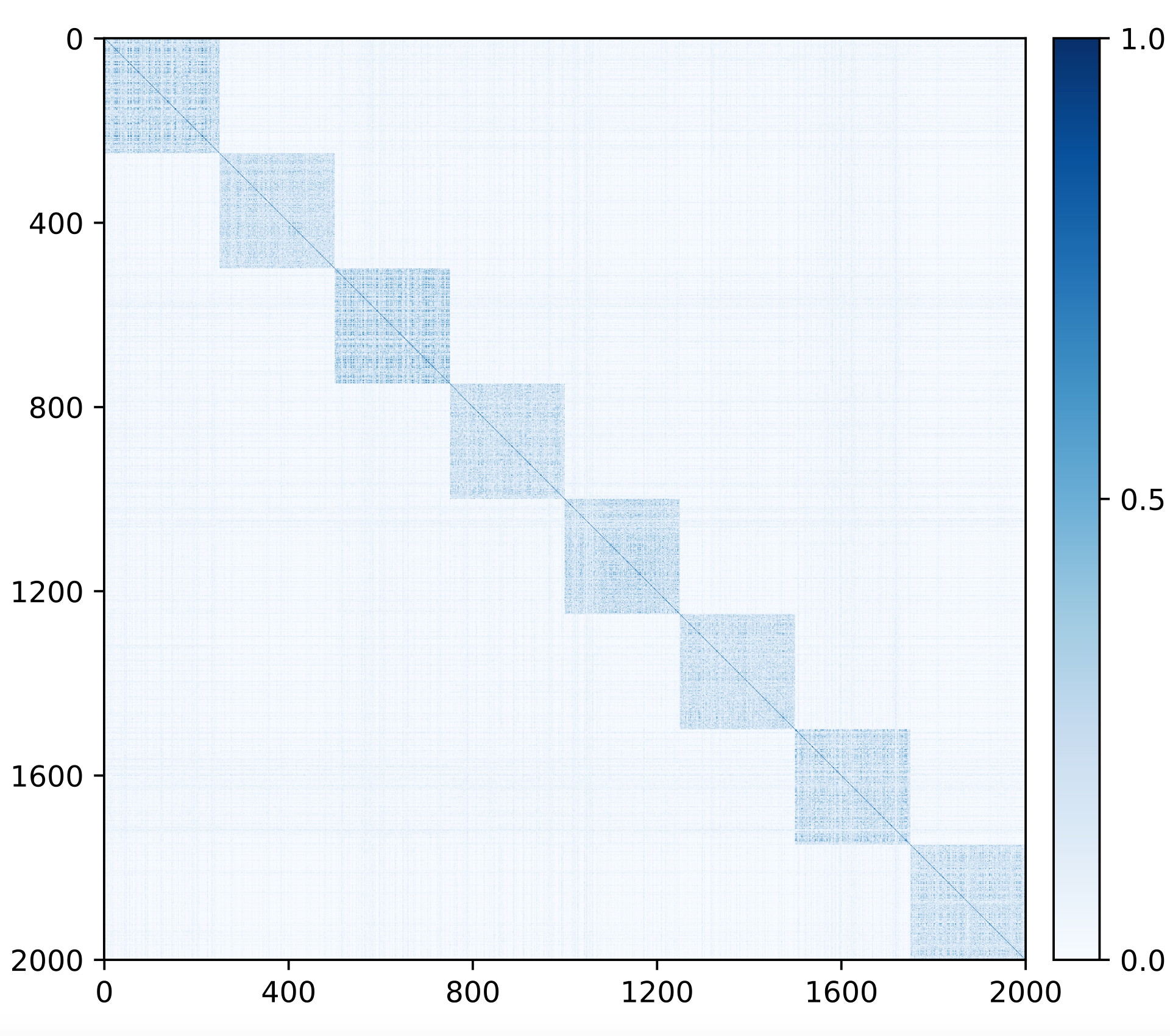}\vspace{-0.05in}
    \caption*{\footnotesize CIFAR: $m=2000,K=8$} 
    \end{subfigure} 
    \begin{subfigure}{0.24\textwidth}
    	\includegraphics[width = 1\linewidth]{figures/CIFAR10_K=10_M=2500.png}\vspace{-0.05in}
    \caption*{\footnotesize CIFAR: $m=2500,K=10$} 
    \end{subfigure}
    \caption{\textbf{Heatmap of cosine similarity among learned features by training deep networks on MNIST and CIFAR-10.} We train network parameters by optimizing the regularized MCR$^2$ objective \eqref{eq:MCR} on $m$ samples split equally among $K$ classes of MNIST and CIFAR-10. In the figure, the darker pixels represent higher cosine similarity between features. In particular, when the $(i,j)$-th pixel is close to $0$ (very light blue), the features $i$ and $j$ are approximately orthogonal.} \vspace{-0.15in} 
    \label{fig:5} 
\end{center}
\end{figure*}

\subsection{Implementation Details and Additional Results in \Cref{subsec:exp2}}\label{app subsec:exp 2}

\paragraph{Network architecture and training setups for MNIST.} In the experiments on MNIST, we employ a 4-layer multilayer perception (MLP) network with ReLU activation as the feature mapping with the intermediate dimension 2048 and output dimension 32. 
In particular, each layer of MLP networks consists of a linear layer and layer norm layer followed by ReLU activation in the implementation. We train the network parameters via Adam by optimizing the MCR$^2$ function. For the Adam settings, we use a momentum of 0.9, a full-batch size, and a dynamically adaptive learning rate initialized with $5\times 10^{-3}$, modulated by a CosineAnnealing learning rate scheduler \cite{loshchilov2016sgdr}. We terminate the algorithm when it reaches 3000 epochs. 

\paragraph{More experimental results on MNIST.} Besides the numerical results in \Cref{table:1}, we also plot the heatmap of the cosine similarity between pairwise columns of the features in $\bm Z$ obtained by training deep networks in \Cref{fig:5}. We observe that the features from different classes are nearly orthogonal to each other, while those from the same classes are highly correlated. This supports our results in \Cref{thm:1}.

\paragraph{Network architecture and training setups for CIFAR-10.} In the experiments on CIFAR10, we employ a 2-layer multilayer perceptron (MLP) network with ReLU activation as the feature mapping with the intermediate dimension 4096 and output dimension 128. 
In particular, each layer of MLP networks consists of a linear layer and layer norm layer followed by ReLU activation in the implementation. We train the network parameters via Adam by optimizing the MCR$^2$ function. For the Adam settings, we use a momentum of 0.9, a full-batch size, and a dynamically adaptive learning rate initialized with $5\times 10^{-3}$, modulated by a CosineAnnealing learning rate scheduler \cite{loshchilov2016sgdr}. We terminate the algorithm when it reaches 4000 epochs. 

\paragraph{More experimental results on CIFAR-10.} Besides the numerical results in \Cref{table:1}, we also plot the heatmap of the cosine similarity between pairwise columns of the features in $\bm Z$ obtained by training deep networks in \Cref{fig:5}. We observe that the features from different classes are nearly orthogonal to each other, while those from the same classes are highly correlated. This supports our results in \Cref{thm:1}.

\end{appendix}

\end{document}